\documentclass[11pt]{article}    
  \usepackage[margin=0.92in]{geometry} 
  \usepackage{listings}
\usepackage[colorlinks,
citecolor=blue
]{hyperref}  
\usepackage{mathrsfs} 
\usepackage{kpfonts,comment}   
\usepackage{subcaption}

\usepackage{pifont}
\usepackage{lipsum}

\usepackage{cpdef}
\usepackage{float}
\usepackage{mathrsfs}
\usepackage{epigraph}
\usepackage{fancyvrb}
\usepackage{authblk}
\usepackage{booktabs}
\usepackage{listings}
\usepackage{xcolor}

\RecustomVerbatimCommand{\VerbatimInput}{VerbatimInput}
{fontsize=\footnotesize,
 frame=single,  
 framesep=0.5em,
 labelposition=topline,
}

 \definecolor{armygreen}{rgb}{0.29, 0.33, 0.13}
 \definecolor{cadmiumgreen}{rgb}{0.0, 0.42, 0.24}

\title{Training with (Swap) Regret Loss in a Single-Layer Self-Attention Model: A Case Study on the Probability Simplex}
\author[1]{Chanwoo Park}
\author[1]{Asuman Ozdaglar}
\affil[1]{MIT EECS}
\date{\today}
\newboolean{finalver}
\setboolean{finalver}{true} %

\ifthenelse{\boolean{finalver}}{
      \newcommand{\cp}[1]{\textcolor{purple}{\bf\small [#1 --CP]}}
    \newcommand{\kz}[1]{\textcolor{blue}{\bf\small [#1 --KZ]}}

\newcommand{\safevspace}[1]{\vspace{0mm}}
}{
  \newcommand{\cp}[1]{\textcolor{purple}{\bf\small}}
      \newcommand{\kz}[1]{}

\newcommand{\safevspace}[1]{\vspace{0mm}}
}
\newcommand{\softmax}{\operatorname{Softmax}}
\newcommand{\sm}{\text{sm}}
\newcommand{\Diag}{\text{Diag}}
\newcommand{\op}{\text{op}}

\newcommand{\proj}{\operatorname{Proj}}
\newcommand{\FixedPoint}{\operatorname{Fixed-Point}}
\newcommand{\Pperp}[1]{P_{\perp #1}}
\newcommand{\R}{\mathbb{R}}

\newcommand{\SwapReg}{\text{Swap-Regret}}
\newcommand{\Reg}{\text{Regret}}
\newcommand{\safesmallvspace}[1]{\vspace{0mm}}
\begin{document}

\maketitle
\begin{abstract}
We revisit the regret loss framework introduced in \citet{park2024llm}, which uses decision-theoretic regret as a direct loss function for training models to make better decisions, through the lens of probability-simplex policies. Our first result shows that a single-layer self-attention model trained with regret loss admits a stationary point whose forward-pass exactly matches \emph{smoothed fictitious play} with the appropriate stepsize that ensures no-regret behavior—i.e., for any given policy input, the model outputs the same update that smoothed fictitious play would produce. In parallel, we also newly introduce a swap-regret loss function, which extends the regret-loss framework beyond external regret and enables models to directly optimize for swap-deviation robustness. We further show that this swap-regret loss admits a stationary point whose forward pass implements the corresponding swap-regret update induced by classical Blum–Mansour no-swap-regret algorithm, with each head implementing an external-regret update via smoothed fictitious play.
Together, these results show that regret-trained attention can realize differentiable mechanisms whose deployment induces equilibrium behavior in games: external-regret dynamics lead to coarse correlated equilibrium, while swap-regret dynamics lead to correlated equilibrium. Thus, regret-based objectives steer minimal attention architectures toward online-learning dynamics with game-theoretic guarantees, without supervised traces of those algorithms.
\end{abstract}

\safevspace{}
\safevspace{}
\safevspace{}

\section{Introduction}
\safevspace{}
\safesmallvspace{}

As large language models (LLMs) become increasingly integrated into everyday life, their usage is evolving from single-turn, task-specific applications to multi-agent, sequential decision-making scenarios  \citep{yao2022react, hao2023reasoning, shinn2024reflexion, wang2023describe, SignificantGravitasAutoGPT,ahn2022can, wang2023voyager,li2024stride}. Rather than merely seeking one-off answers—such as solving a math problem \citep{guo2025deepseek} —users now engage with LLMs in ways that influence long-term planning and cumulative outcomes. As more individuals turn to LLMs to augment their decision-making, the setting naturally sometimes becomes multi-agent: each user constitutes an independent agent with unique goals, interacting with a shared or personalized model in pursuit of improved long-term outcomes and multi-agent sequential decision-making is \textit{regret}, which quantifies the difference between the cumulative reward achieved by an agent and that of the best possible strategy in hindsight \citep{shalev2012online}. While minimizing regret is not the only objective in these settings, it provides a rigorous measure of an agent’s adaptability and learning efficiency over time.

\safevspace{}

Recent work shows that \emph{off-the-shelf} LLMs—those used without fine-tuning or explicit inference-time control—still struggle on even simple, canonical online decision-making tasks. Empirical evaluations consistently find that such models fail to explore effectively without external scaffolding \citep{krishnamurthy2024can}, incur \emph{linear} regret in non-stochastic environments \citep{park2024llm}, exhibit unstable exploitation behavior \citep{xia2024beyond}, and underperform in non-stationary settings \citep{zhang2025comparing}. Taken together, these results indicate that fundamental decision-making abilities—regret minimization, exploration, and adaptation—remain unsolved for current LLMs. This highlights the need for principled methods that explicitly strengthen these foundations before deploying LLMs as autonomous agents at scale.

\safevspace{}

To address this gap, \citet{park2024llm} proposed a novel training loss function named \textit{regret loss}, which explicitly optimizes models to minimize regret across sequences of interactions. While their preliminary results—focused primarily on online learning tasks—demonstrated both empirical gains and theoretical guarantees, the broader applicability of regret-based training remains underexplored. Follow-up paper \citep{park2025post} found that post-training with regret-based signals can enable successful adaptation in single-agent environments, suggesting that regret supervision can play a critical role in improving sequential learning performance. Moreover, given that LLMs are fundamentally built on self-attention architectures, it is essential to understand how such regret-minimization dynamics manifest in even the simplest form of these models. 

\safevspace{}

However, a detailed characterization of regret-based training for single-layer self-attention is still incomplete even in the canonical probability-simplex setting.  \citet{park2024llm} analyzed continuous $\ell_2$-constrained policies and proved that regret loss recovers Follow-the-Regularized-Leader updates, yet it remains unclear how the same loss behaves when the policy must lie on the simplex—precisely the regime relevant to external and swap regret in online learning and games.  \Cref{tab:regret-training-comparison} summarizes this gap.

\safevspace{}

In this paper, we revisit the \emph{regret loss} framework with a singular focus on probability-simplex policies. Our first contribution shows that, for the standard loss sequences used to evaluate external regret, a stationary point of a single-layer self-attention model trained with regret loss coincides with \emph{smoothed fictitious play} with a $\tilde{\Theta}(1/\sqrt{T})$ stepsize. Our second contribution targets swap regret. We introduce a new \emph{swap-regret loss} that operationalizes swap regret as a differentiable training objective. Using this loss, we construct a multi-head linear self-attention network whose stationary point mirrors the classical Blum–Mansour no-swap-regret algorithm \citep{blum2007external}. Each head implements a smoothed fictitious play external-regret update with a $\tilde{\Theta}(1/\sqrt{T})$ stepsize, so the entire block behaves like a collection of parallel external-regret minimizers stitched together through multi-head self-attention. We prove that optimizing the swap-regret loss recovers this structure end-to-end, showing that differentiable multi-head attention can faithfully instantiate swap-regret dynamics. Together, these results highlight how external- and swap-regret objectives can be realized directly within minimal Transformer architectures.

\safevspace{}
This provides a mechanistic route from regret-based post-training objectives for LLM agents to equilibrium guarantees in multi-agent environments.  Regret-trained attention is not merely imitating an online-learning algorithm; at the stationary points we identify, it implements a differentiable mechanism whose repeated deployment induces the corresponding game-theoretic behavior.  Single-head softmax attention implements external-regret dynamics, so multi-agent deployment leads to coarse correlated equilibrium.  Multi-head fixed-point attention implements swap-regret dynamics, so deployment leads to correlated equilibrium.  The choice of regret surrogate is therefore not only an optimization detail: external-regret loss induces coarse-deviation robustness, while swap-regret loss induces action-contingent deviation robustness.

\safesmallvspace{}
\safevspace{}

\paragraph{Notation}
We write $\pmb{1}_d$ and $\pmb{0}_d$ for the $d$-dimensional all-ones and all-zeros vectors, $\pmb{O}_{d\times d}$ for the $d\times d$ zero matrix, and $I_{d\times d}$ for the $d\times d$ identity, let $\Delta(\cA)$ be the probability simplex over the action set $\cA$, use $[d] := \{1, \dots, d\}$ for the first $d$ integers, and write $\mathbbm{1}\{\cdot\}$ for indicator functions.  For asymptotics as $T\to\infty$ we use $\Theta(f(T))$ to denote quantities bounded above and below by constant multiples of $f(T)$, $O(f(T))$ for upper bounds, $o(f(T))$ for terms that vanish relative to $f(T)$, and $\tilde O(f(T))$ (or $\tilde\Theta(f(T))$) when bounds hold up to polylogarithmic factors. For a set $\Pi$ and norm $\|\cdot\|$, we denote by $\proj_{\Pi,\|\cdot\|}(x) := \argmin_{y \in \Pi} \|x - y\|$ the projection of $x$ onto $\Pi$; when the norm is Euclidean we write $\proj_\Pi(x)$ for brevity. For any vector $v$, $ \Pperp{v}$ denotes the orthogonal projector onto the subspace orthogonal to $v$. For a Markov matrix $P$, we write $\FixedPoint(P)$ for any stationary distribution satisfying $\pi = P \pi$; when the stationary distribution is unique we identify $\FixedPoint(P)$ with that $\pi$.

\safevspace{}
\safevspace{}

\section{Preliminaries}
\safevspace{}
\safevspace{}
\subsection{Linear Self-Attention}
\label{ssec:self-tf}
\safevspace{}
\safesmallvspace{}

We consider a single-layer linear attention model equipped with an output operator that maps the resulting representation into a decision policy. This architecture represents one of the simplest forms of Transformer-based models and has been used in prior work to analyze the inductive properties of attention mechanisms \citep{ahn2023transformers, mahankali2023one, park2024llm}. The output of the model is given by:

\safevspace{}
\safevspace{}
\safevspace{}

{\small
\begin{align}
    g((\ell_1, \dots, \ell_t, \pmb{1}_d); V, K, Q, v_c, k_c, q_c) = \texttt{Operator} \left( \sum_{i=1}^{t} (V \ell_i + v_c) \left( (K \ell_i + k_c)^\intercal (Q \pmb{1}_d + q_c) \right) \right), \label{eqn:single-linear-transformer}
\end{align}}
\safevspace{}
\safevspace{}

where \( V, K, Q \in \mathbb{R}^{d \times d} \) are the value, key, and query matrices, and \( v_c, k_c, q_c \in \mathbb{R}^d \) are their corresponding bias vectors. We denote the full set of model parameters as \( \phi = (V, K, Q, v_c, k_c, q_c) \).
\safevspace{}
\safevspace{}

The model takes as input a sequence of loss vectors \( (\ell_1, \dots, \ell_t) \subset \mathbb{R}^d \), along with a placeholder vector \( \pmb{1}_d \in \mathbb{R}^d \), and produces a prediction for the next policy \( \pi_{t+1} \). The inner product term computes attention scores over the sequence \( (\ell_1, \dots, \ell_t) \), and the resulting aggregated vector is transformed by a final mapping, denoted as \texttt{Operator}, to produce a valid policy. The specific choice of \texttt{Operator} depends on the policy space \( \Pi \). For example, if \( \Pi = \Delta(\cA) \), the probability simplex induced by a finite action set (as in the Experts Problem), \texttt{Operator} is the standard $\softmax$ function. In the most basic case, \texttt{Operator} can be the identity map, \( \texttt{Operator}(x) = x \), recovering the original form of linear self-attention \citep{ahn2023transformers,zhang2023trained,schlag2021linear}. 
\safevspace{}

This formulation provides a minimal yet expressive setting for studying how attention-based models can learn adaptive policies in sequential decision-making environments. In this paper, as \Cref{eqn:single-linear-transformer} has the equivalent formulation with 
\safevspace{}
\safevspace{}
\safevspace{}

\begin{align}
    g((\ell_1, \dots, \ell_t, \pmb{1}_d); V, a, {v}_c) = \texttt{Operator} \left( \sum_{i=1}^{t} ({V} \ell_i \ell_i^\intercal {a} + (V + v_c a^\intercal) \ell_i + v_c)\right), \label{eqn:single-linear-transformer-reduced}
\end{align}

\safevspace{}
\safevspace{}
where $V \in \RR^{d \times d}$ and $v_c, a \in \RR^d$. \Cref{eqn:single-linear-transformer-reduced} is a reduced form of \Cref{eqn:single-linear-transformer} but having the same representation power with \Cref{eqn:single-linear-transformer}.
\safevspace{}
\safevspace{}

\subsection{Learning Environment}

\safevspace{}
\subsubsection{Online Learning}
\label{ssec:online-learning}
\safevspace{}

We consider the \emph{online learning} framework in which an agent interacts with an environment over $T$ rounds, making sequential decisions and receiving feedback. At each round $t \in [T]$, the agent selects a decision policy $\pi_t \in \Pi$, where $\Pi$ is a bounded decision space. After committing to $\pi_t$, the environment reveals a loss function $f_t : \Pi \to [-B, B]$ for some constant $B > 0$, which may be chosen adversarially. The agent then incurs loss $f_t(\pi_t)$ and updates her policy based on this feedback.

\safevspace{}

A fundamental special case is when the action space $\mathcal{A}$ is finite and the decision set is the probability simplex over $\mathcal{A}$, i.e., $\Pi = \Delta(\mathcal{A})$. This is known as the \emph{Experts Problem} \citep{cover1966behavior, vovk1990aggregating, littlestone1994weighted, hazan2016introduction}. In this case, each loss function can be expressed as a linear form $f_t(\pi_t) = \langle \ell_t, \pi_t \rangle$, where $\ell_t \in \mathbb{R}^d$ is a loss vector (with $d := |\mathcal{A}|$). Whenever we refer to the policy space as the simplex below, we implicitly mean this finite-action setting. The agent will receive the entire loss vector $\ell_t$ in the full-information setting.

\safevspace{}
\subsubsection{Repeated Static Games}
\label{ssec:static-games}
\safevspace{}

The online learning framework extends naturally to multi-agent interactions through the lens of \emph{repeated static games}. 
Let $\mathcal{G} = \langle N, \{\cA_n\}_{n \in [N]}, \{r_n\}_{n \in [N]} \rangle$ denote a normal-form game with players $n \in [N]$, where each $\cA_n$ is the action set of player $n$, and $r_n : \cA:= \cA_1 \times \dots \times \cA_N \to [-B, B]$ is her reward (or negative loss) function.  At each round $t$, every player outputs a \emph{policy} $\pi_{n,t} \in \Delta(\cA_n)$. 
We use the term \emph{action} only for the realized play induced by the policy:
in finite (simplex) domains, an action $a_{n,t}$ is sampled as $a_{n,t} \sim \pi_{n,t}$.
From the perspective of player $n$, the realized joint play induces an individual online learning problem over $\Delta(\cA_n)$ with per-round loss
\[
\ell_{n,t} := - \EE_{a_{-n,t} \sim \pi_{-n,t}} \big[ r_n(\cdot, a_{-n,t}) \big] := - r_n(\cdot, \pi_{-n,t}).
\]

\safevspace{}

\subsection{Performance Metrics: External and Swap Regret}
\safevspace{}

\subsubsection{External Regret}
\label{ssec:regret}
\safevspace{}

We begin by defining \emph{external regret} (usually just denoted as regret), a central performance measure in online learning and repeated games. Let $\mathscr{A}$ denote an algorithm that selects a sequence of policies $(\pi_{\mathscr{A}, t})_{t \in [T]}$ over a decision space $\Pi$. Given a sequence of loss functions $(f_t)_{t \in [T]}$, the \emph{external regret} of $\mathscr{A}$ is defined as
\begin{align}
    \text{Regret}_{\mathscr{A}}((f_t)_{t \in [T]}) := \sum_{t=1}^T f_t(\pi_{\mathscr{A}, t}) - \inf_{\pi \in \Pi} \sum_{t=1}^T f_t(\pi). \label{eq:ext-regret}
\end{align}
This quantity compares the cumulative loss of the learner to that of the best fixed decision in hindsight. In the Experts Problem, this specializes to
\safevspace{}
\safevspace{}
\safevspace{}
\safevspace{}

\begin{align}
    \text{Regret}_{\mathscr{A}}((\ell_t)_{t \in [T]}) := \sum_{t=1}^T \langle \ell_t, \pi_{\mathscr{A}, t} \rangle - \inf_{\pi \in \Pi} \sum_{t=1}^T \langle \ell_t, \pi \rangle,
\end{align}

\safevspace{}
\safevspace{}
\safevspace{}

where each $\ell_t \in \mathbb{R}^d$ is a loss vector and $\Pi = \Delta(\mathcal{A})$ is the simplex over the finite action set $\mathcal{A}$. An algorithm $\mathscr{A}$ is said to be \emph{no-(external-)regret} if $
\sup_{(f_t)} \text{Regret}_{\mathscr{A}}((f_t)_{t \in [T]}) = o(T),$ 
i.e., the worst-case regret grows sublinearly in $T$. 

\safevspace{}

\subsubsection{Swap Regret}
\label{ssec:swap-regret}
\safevspace{}

While external regret compares the learner’s performance to the best fixed decision in hindsight, swap regret allows each action to be reassigned to another action in hindsight.  Formally, a swap deviation is encoded by a mapping $\sigma:[d]\to[d]$ (or equivalently by a row-stochastic matrix $P_\sigma$).  The \emph{swap regret} of algorithm $\mathscr{A}$ in the Experts setting is

\safevspace{}
\safevspace{}
\safevspace{}
\safevspace{}
\begin{align}
    \text{Swap-Regret}_{\mathscr{A}}((\ell_t)_{t \in [T]})
    := \max_{\sigma: [d] \to [d]} 
    \sum_{t=1}^T 
    \bigl\langle \pi_{\mathscr{A}, t}, \, \ell_t - P_\sigma^\intercal \ell_t \bigr\rangle
    = \max_{P \in \mathcal{M}} 
    \sum_{t=1}^T 
    \bigl\langle \pi_{\mathscr{A}, t}, \, \ell_t - P^\intercal \ell_t \bigr\rangle,
    \label{eq:swap-regret}
\end{align}
\safevspace{}
\safevspace{}
\safevspace{}

where $P_\sigma \in \{0,1\}^{d \times d}$ is defined by $(P_\sigma)_{ij} = \mathbbm{1}\{\sigma(i) = j\}$ and $\mathcal{M} := \{P \in \mathbb{R}^{d \times d} : P \pmb{1}_d = \pmb{1}_d,\, P \ge 0\}$ denotes the set of row-stochastic (Markov) matrices. 
The equality follows since the objective is linear in each row of $P$, and thus its maximum over $\mathcal{M}$ is attained at an extreme point $P_\sigma$. 
Intuitively, swap regret measures how much the learner could have improved by systematically substituting each played action $i$ with another action $\sigma(i)$ across all rounds. An algorithm is said to be \emph{no-swap-regret} if the above quantity is $o(T)$. In this paper, if $\mathscr{A}$ is clear from the context, we will drop the subscript and write $\text{Regret}$ and $\text{Swap-Regret}$ for external and swap regret, respectively.

\safevspace{}
    \subsection{Equilibrium Concepts in Repeated Games}
    \safevspace{}
    \label{ssec:cce-ce-phie}
    Repeated interactions among learning agents often lead to stable long-run behaviors 
    that correspond to equilibrium concepts.  Writing everything in terms of policies 
    allows us to reason directly about simplex-valued play.
        \safevspace{}

    \paragraph{Coarse Correlated Equilibrium (CCE).}
    A distribution $\mu \in \Delta(\cA)$ is a \emph{CCE} if, for every player $n$ and 
    every fixed (policy-level) deviation $\pi_n' \in \Delta(\cA_n)$,

\safesmallvspace{}
\safevspace{}
\safevspace{}
    \[
        \EE_{\pi \sim \mu}\!\left[ r_n(\pi) \right]
        \;\ge\;
        \EE_{\pi \sim \mu}\!\left[ r_n(\pi_n',\, \pi_{-n}) \right].
    \]
    \safevspace{}
\safevspace{}
\safevspace{}

    Thus, no player can gain by unconditionally switching to a single alternative policy
    prior to receiving any recommendation.

        \safevspace{}

    \paragraph{Correlated Equilibrium (CE).}
    A distribution $\mu \in \Delta(\cA)$ over joint actions is a \emph{correlated equilibrium} if, for every player $n$ and every action-dependent deviation represented by a row-stochastic Markov matrix $M_n \in \mathcal{M}$,

\safevspace{}
\safevspace{}
\safevspace{}
    \[
        \EE_{\pi \sim \mu}\!\left[ r_n(\pi) \right]
        \;\ge\;
        \EE_{\pi \sim \mu}\!\left[
            r_n\!\big(M_n(\pi_n),\, \pi_{-n}\big)
        \right],
    \]
    \safevspace{}
\safevspace{}
\safevspace{}

    where $M_n(\pi_n)$ denotes the (possibly mixed) action drawn from the row of $M_n$ corresponding to the recommended action~$\pi_n$. Thus, no player can gain by conditionally deviating from the recommendation using any action-dependent strategy.
    
           \safevspace{}

    If every player achieves no external regret, the empirical average play converges to a CCE, and if every player achieves no swap regret, the empirical average play converges to a CE \citep{cesa2006prediction}.

        \safevspace{}

\subsection{Concepts of Regret Loss Training}
        \safevspace{}

\label{ssec:regret-loss}

We now define a training objective that explicitly minimizes regret across sampled decision-making environments. Let the model be parameterized by $\phi$, and let $\text{model}_\phi$ denote a policy generator (e.g., a single-layer attention model as in \Cref{eqn:single-linear-transformer}). The \emph{regret loss function} is defined as
\begin{align}
    \mathcal{L}(\phi, \mathcal{D}) := \mathbb{E}_{\mathcal{D}} \left[ h\left( \Reg_{\text{model}_\phi}\left((f_t)_{t \in [T]}\right) \right) \right], \label{eqn:regret-loss}
\end{align}
where $h: \mathbb{R} \to \mathbb{R}$ is a convex surrogate function (e.g., identity, squared loss, or a smooth approximation of $\max$), and the expectation is taken over decision-making instances sampled from $\mathcal{D}$. Throughout the paper we instantiate $\Reg$ as either external regret or swap regret. We will describe the precise form of the model input shortly. The distribution $\mathcal{D}$ captures the randomness in the environment. Each round’s loss function is $f_t(\pi) = \langle \ell_t, \pi \rangle$, where $\ell_t \sim \mathcal{D}_d$ is a sampled loss vector. For each decision round $t$, the model receives the interaction history and produces the next policy $\pi_{t+1}$. Especially, the model input is the full history of loss vectors, i.e., $(\ell_1, \dots, \ell_t)$.

Previous theoretical work on regret-loss training \citep{park2024llm, park2025post} primarily analyzed continuous $\ell_2$-constrained policies. In this paper we restrict attention to the probability simplex—the setting relevant to external and swap regret—and study how regret loss shapes the induced dynamics. \Cref{tab:regret-training-comparison} summarizes the most relevant comparisons.        \safevspace{}
        \safevspace{}

\begin{table}[h]
\centering
\resizebox{\textwidth}{!}{%
\begin{tabular}{@{}llllll@{}}
\toprule
\textbf{} & \textbf{No Regret} (test) & \textbf{\texttt{Operator}}& \textbf{Post Operation} & 
\begin{tabular}[c]{@{}l@{}}
\textbf{FTRL}
\\
\textbf{Equivalence}
\end{tabular} & \textbf{FTRL stepsize}\\
\midrule
\begin{tabular}[c]{@{}l@{}}
    \cite{park2024llm}
    \\
    Theorem 5.2
    \end{tabular}
    & 
    \begin{tabular}[c]{@{}l@{}}Online Learning
    \\
    ($\Pi = B(\pmb{0}_d, R, \norm{\cdot}_2)$)
    \end{tabular} & Identity 
    & $\proj_{B(\pmb{0}_d, R, \norm{\cdot}_2)}$
    &  {\color{green}\ding{51}} & $\Theta\left(\frac{1}{\sqrt{Td}}\right)$\\
    \addlinespace 
\begin{tabular}[c]{@{}l@{}}
\cite{park2024llm}
\\
Conjecture 3
\end{tabular}
& 
\begin{tabular}[c]{@{}l@{}}Online Learning\\
($\Pi = \Delta(\cA)$)\\Static Game \end{tabular} & $\softmax$
& $\textcolor{red}{\times}$
& \begin{tabular}[c]{@{}l@{}} ?  \end{tabular} & {\color{gray} $?$}  \\
\midrule
\Cref{thm:stationary}
& \begin{tabular}[c]{@{}l@{}}Online Learning\\($\Pi = \Delta(\cA)$, no external regret) \\ Static Game\end{tabular}
& $\softmax$ 
&  $\textcolor{red}{\times}$
& \begin{tabular}[c]{@{}l@{}} {\color{green}\ding{51}}\\(Stationary Point)\end{tabular} & ${\Theta\left(\frac{1}{\sqrt{T}}\right)}$ \\
\addlinespace
\Cref{thm:new-architecture-stationary}
& \begin{tabular}[c]{@{}l@{}}Online Learning\\($\Pi = \Delta(\cA)$, no swap regret) \\ Static Game\end{tabular}
& \begin{tabular}[c]{@{}l@{}}Row-wise\\$\softmax$,\\ $\FixedPoint$\end{tabular}
&$\textcolor{red}{\times}$ &{\begin{tabular}[c]{@{}l@{}}
    {\color{green}\ding{51}}\\
    (Stationary Point)\\
    {\small Each head implements an}\\
    {\small external-regret FTRL update}
    \end{tabular}} & ${\Theta\left(\frac{1}{\sqrt{T}}\right)}$ \\
\bottomrule
\end{tabular}
}
\caption{The top rows report known results from \citet{park2024llm}; the bottom rows highlight our new stationary-point characterizations showing that regret-loss objectives on attention models recover smoothed fictitious play (external regret) and Blum–Mansour dynamics (swap regret).}
\label{tab:regret-training-comparison}

\end{table}

        \safevspace{}
        \safevspace{}
        \safevspace{}
        \safevspace{}

\section{Emergence of Smoothed Fictitious Play as a Stationary Point of Single-layer Linear Self-Attention}
        \safevspace{}

We begin by revisiting the observation of \citet{park2024llm} that single-layer linear self-attention trained with the regret-loss objective recovers FTRL updates with $L_2$ regularization when the operator is the identity. Their result, however, depends crucially on applying a projection \(\proj_{\Pi,\|\cdot\|}\) to the model output only at inference time: the training objective never sees the projection, so the match to FTRL is achieved post hoc. This leaves open whether the post operation can be integrated directly into the optimization so that the learned representation is already the desired policy without any additional post operation steps.
Throughout this section we adopt an end-to-end viewpoint: the same nonlinearity used at inference—namely the $\softmax$ mapping—is embedded directly in the loss. Consequently, the model must optimize through the $\softmax$ transformation itself, jointly learning the attention weights while satisfying the policy-domain constraints. We show that even under this stricter objective, the stationary points remain fully interpretable: they coincide with smoothed fictitious play on the simplex. Moreover, we characterize the associated theoretical stepsize of these stationary points, which in turn yields a no-regret guarantee. This is also conjectured by \citet{park2024llm} that the stationary point of the regret-loss function is the smoothed fictitious play. 

\begin{restatable}{theorem}{stationary}
    \label{thm:stationary}
The configuration of a single-layer linear attention model as defined in \Cref{eqn:single-linear-transformer-reduced}, with \texttt{Operator} = $\softmax$, and parameters $V = -k I_{d\times d}$, $a = \pmb{0}_d$, and $v_c = v \pmb{1}_d$, is a stationary point of the regret-loss function \Cref{eqn:regret-loss} if $k = \tilde{\Theta}(1/\sqrt{T})$, with $h(x) = x^2$, $f_t(\pi) = \langle \ell_t, \pi \rangle$, where $\ell_t \sim \mathcal{N}(\pmb{0}_d, I_{d\times d})$ i.i.d.
\end{restatable}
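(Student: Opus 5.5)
The plan is to compute the forward pass at the proposed configuration and then use symmetries of the Gaussian loss distribution to kill every gradient direction except one. The remaining scalar condition will fix $k$.

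\textbf{Forward pass.} With $V=-kI_{d\times d}$, $a=\pmb{0}_d$ and $v_c=v\pmb{1}_d$, \Cref{eqn:single-linear-transformer-reduced} gives
\[
\pi_{t+1}=\softmax\!\Big(-k\sum_{i\le t}\ell_i+tv\pmb{1}_d\Big)=\softmax(-kL_t),\qquad L_t:=\sum_{i\le t}\ell_i .
\]
This is smoothed fictitious play, and by shift invariance of $\softmax$ it is independent of $v$. The loss is $\mathcal{L}=\EE[R^2]$, where $R=\sum_t\langle\ell_t,\pi_t\rangle-\min_j L_T(j)$.

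For a perturbation $\delta x_t$ of the $t$-th pre-softmax logit, the derivative is
\[
\EE\Big[2R\sum_t\big\langle \ell_{t},J_{t}\,\delta x_{t-1}\big\rangle\Big],\qquad J_t=\Diag(\pi_t)-\pi_t\pi_t^\intercal .
\]
Since $J_t\pmb{1}_d=\pmb{0}_d$, any perturbation that moves the logits only along $\pmb{1}_d$ has zero derivative.

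\textbf{Symmetry reduction.} The law of $(\ell_t)_t$ is invariant under coordinate permutations, and the configuration is permutation-equivariant. Hence the gradient with respect to $v_c$ and $a$ is a multiple of $\pmb{1}_d$, and the gradient with respect to $V$ has the form $\alpha I+\beta\pmb{1}_d\pmb{1}_d^\intercal$.

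\emph{$v_c$ and the $\beta$ part of $V$.} Perturbing $v_c$ along $\pmb{1}_d$ shifts the logits by a multiple of $\pmb{1}_d$, so its derivative vanishes. Perturbing $V$ by $\pmb{1}_d w^\intercal$ adds $\pmb{1}_d\,w^\intercal\ell_i$, which also lies along $\pmb{1}_d$, so the $\beta$ component is zero.

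\emph{$a$ along $\pmb{1}_d$.} The perturbation contributes $-k\sum_i\ell_i(\ell_i^\intercal\pmb{1}_d)+v\pmb{1}_d(\pmb{1}_d^\intercal\ell_i)$. The second term is along $\pmb{1}_d$ and drops out. For the first, decompose $\ell_i=\ell_i^\perp+s_i\pmb{1}_d/\sqrt d$. The scalars $s_i$ are i.i.d.\ standard normal and independent of $(\ell_i^\perp)$.

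Both $R$ and every $\pi_t$ depend only on $(\ell_i^\perp)$: a common shift changes the learner's loss and the comparator's loss by the same amount, and $\softmax$ ignores it. Also $\ell_t^\intercal J_t=(\ell_t^\perp)^\intercal J_t$. The directional derivative is therefore a sum of terms of the form
\[
\EE\big[\text{(function of }\ell^\perp)\cdot s_i\cdot \sqrt d\,\big],
\]
each odd in an independent centered $s_i$, so it vanishes. Thus the only surviving gradient component is $\alpha$, i.e.\ $\frac{d}{dk}\EE[R(k)^2]$.

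\textbf{Choosing $k$ (main obstacle).} It remains to show that $F(k):=\EE[R(k)^2]$ has a critical point at some $k^\star=\tilde\Theta(1/\sqrt T)$.

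I would first rescale $k=c/\sqrt T$. I would then argue, via the Brownian/CLT limit of $L_{\lfloor sT\rfloor}/\sqrt T$, that $F(c/\sqrt T)/T$ converges to a continuous function $G(c)$, uniformly on compacts. Next I would show that $G$ is strictly smaller at some finite $c$ than at both endpoints:
\begin{itemize}
\item As $c\to0$, play is uniform and $R\approx-\min_j L_T(j)+\langle L_T,\pmb{1}_d/d\rangle$, which gives a positive constant times $T$.
\item As $c\to\infty$, play approaches follow-the-leader, whose regret on Gaussian losses is again of order $\sqrt T$ with a larger or different constant.
\item At a moderate $c$, the standard FTRL bound $\log d/k+kT$, with the entropic regularizer, gives a smaller value.
\end{itemize}
If this comparison holds, $G$ attains an interior minimum. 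Since $F$ is smooth in $k$, $F$ then has an interior minimizer at $k^\star=c^\star/\sqrt T$, where its derivative vanishes. Together with the symmetry step, this gives stationarity. The logarithmic slack in $\tilde\Theta$ absorbs the $d$- and $\log$-dependence of the constants.

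The delicate part is the strict-separation comparison between the endpoint limits and the interior value of $G$. If that comparison fails, I would fall back on a direct intermediate-value argument showing that $F'(k)<0$ for $k\ll 1/\sqrt T$ and $F'(k)>0$ for $k\gg\sqrt{\log d}/\sqrt T$.
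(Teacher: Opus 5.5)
Your symmetry reduction is sound. Your treatment of the $a$-direction is cleaner than the paper's sign-flip argument: you split $\ell_i=\ell_i^\perp+s_i\pmb{1}_d/\sqrt d$ and use that $R$, $\pi_t$ and $J_t\ell_t$ depend only on $\ell^\perp$, while $s_i$ is independent and centered. One slip: $\pmb{1}_d^\intercal G=\pmb{0}_d^\intercal$ forces $\beta=-\alpha/d$, i.e.\ $G=\alpha(I_{d\times d}-\frac1d\pmb{1}_d\pmb{1}_d^\intercal)$, not $\beta=0$. This is harmless, since $\langle G,-I_{d\times d}\rangle_F=-(d-1)\alpha$.

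The genuine gap is the step that carries all the quantitative content. You never show that $F(k)=\EE[R(k)^2]$ has a critical point at the $1/\sqrt T$ scale, and the separation argument you sketch would not deliver one.

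\emph{First}, $\EE[R(k)]$ does not depend on $k$. Because $\pi_t$ is $\mathcal F_{t-1}$-measurable and $\EE[\ell_t\mid\mathcal F_{t-1}]=\pmb{0}_d$, the learner's expected loss is $0$ for every $k$, and the comparator is $k$-free. Hence $G(c)=\mathrm{Var}(R)+(\EE R)^2$ with a $c$-independent second term, so any interior dip must come from the variance. An FTRL bound of the form $\log d/k+kT$ is a pathwise upper bound whose optimized value is at best of the same order as $\EE R$ itself. It therefore cannot certify $G(c)<G(0)$. Already for $d=2$, uniform play has $G(0)=\tfrac12$, whereas the optimized Hedge bound $\frac{\log 2}{k}+\frac k8\sum_t(\ell_{t,1}-\ell_{t,2})^2$ only gives $\EE[R^2]/T$ of roughly $\log 2\approx 0.69$.

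\emph{Second}, the follow-the-leader bullet is unsupported, and for $d=2$ the two endpoints actually coincide. Take $X=B_1-B_2$ in the Brownian limit, in units of $\sqrt T$. Uniform play gives $R=\tfrac12|X_1|$, while the $c\to\infty$ limit gives $R=\tfrac12L^0_1(X)$ by Tanaka's formula. By L\'evy's theorem $L^0_1(X)\stackrel{d}{=}|X_1|$, so $G(0)=G(\infty)$.

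What is missing is a sign-determinate expression for $F'(k)$, which your fallback presupposes but does not supply. The paper obtains it by applying Stein's lemma to $\ell_t$, conditionally on $\mathcal F_{t-1}$, in $F'(k)=-2\sum_t\EE[R\,\langle\ell_t,J_tS_{t-1}\rangle]$, where $S_{t-1}$ is your $L_{t-1}$. Later rounds drop out by the martingale property, leaving
\[
F'(k)=-2\sum_{t=1}^T\EE\langle J_tS_{t-1},\pi_t-u\rangle+2\sum_{t=1}^T\EE\langle J_tS_{t-1},e_{J^\star}\rangle,\qquad u=\tfrac1d\pmb{1}_d,\ J^\star=\arg\min_j S_{T,j}.
\]
At $k=0$ the first sum vanishes. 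Using $\EE[S_{t-1}\mid S_T]=\frac{t-1}{T}S_T$, the second gives $F'(0)=\frac2d\sum_t\frac{t-1}{T}\EE[\min_jS_{T,j}]<0$. This comparator term is the actual source of the initial decrease, and it is a variance effect that regret upper bounds cannot detect.

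The increase is argued from the learner-response term. Its leading part is $\EE\langle J_0S_{t-1},\pi_t\rangle=-\frac{(t-1)k}{d}\EE[1-\|\pi_t\|_2^2]$, again by Stein, with $J_0=\Diag(u)-uu^\intercal$. This grows linearly in $k$ and overtakes the $\Theta(T^{3/2}\sqrt{\log d}/d)$ comparator term once $k\gtrsim\sqrt{\log d/T}$. The policy and Jacobian remainders are controlled by Gaussian-maximum moment bounds and MGF bounds.

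For $d=2$, your Brownian-limit route can be repaired by combining $G'(0^+)<0$ with $G(0)=G(\infty)$. For general $d$, you need an estimate of this kind, or another argument that $G$ rises again after its initial dip, before you can conclude that an interior critical point exists.
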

        \safevspace{}

\paragraph{Proof structure for \Cref{thm:stationary}.}
At \(V=-kI_{d\times d}\), \(a=\pmb{0}_d\), and \(v_c=v\pmb{1}_d\), the
round-\(t\) logits and policy satisfy
\[
    z_t=-k\sum_{s<t}\ell_s+(t-1)v\pmb{1}_d,\qquad
    \pi_t=\softmax\!\left(-k\sum_{s<t}\ell_s\right),
\]
where the second identity uses shift-invariance.  Thus the forward pass is
entropy-regularized FTRL, equivalently smoothed fictitious play, with learning
rate \(k\); the substantive issue is to prove that the full regret-loss
gradient vanishes at a nonzero \(k\) of the claimed scale.

Let \(R_T(k)\) be the external regret induced by the above policy and
\(L(k)=\EE[R_T(k)^2]\).  The first-order conditions are then verified block by
block: the \(v_c\)-block is annihilated pointwise by shift-invariance and
\(J_{\sm}(z)^\intercal \pmb{1}_d=\pmb{0}_d\); the \(a\)-block requires Gaussian
permutation symmetry, which makes the expected gradient a multiple of
\(\pmb{1}_d\), while the softmax Jacobian maps into the zero-sum subspace.  For
the \(V\)-block, permutation equivariance and the same row-sum constraint force
\[
    \nabla_V \mathcal L(-kI,\pmb{0},v\pmb{1})
    = \alpha(k)\left(I_{d\times d}
      -\frac{1}{d}\pmb{1}_d\pmb{1}_d^\intercal\right).
\]
Consequently the matrix condition reduces to the scalar derivative along the
ray \(V=-kI\).

It remains to locate a zero of \(L'(k)\).  The appendix derives
\[
    L'(k)=-2\Sigma_1(k)+2\Sigma_2(k),
\]
where \(\Sigma_1\) is the learner-response term and \(\Sigma_2\) is the
best-fixed-action comparator term.  Expanding
\(\pi_t(k)=u-kJ_0S_{t-1}+O(k^2\|S_{t-1}\|_\infty^2)\) and
\(J_t(k)=J_0+O(k\|S_{t-1}\|_\infty)\), with
\(u=\pmb{1}_d/d\) and \(J_0=\Diag(u)-uu^\intercal\), separates the signs of
these terms: the comparator contribution dominates for very small \(k\), giving
\(L'(k)<0\), while at the inverse-\(\sqrt T\) scale the learner-response term
dominates after Gaussian maximum, MGF, and softmax-Jacobian remainder bounds,
giving \(L'(k)>0\).  Continuity yields a root
\(k^\star=\tilde{\Theta}(1/\sqrt T)\), at which every parameter block is
first-order stationary.

The proof, deferred to \Cref{appendix:pfthm1}, pins down the precise scale of the attention weight \(k\). Earlier analyses of in-context learning only established that FTRL-like outputs could be recovered at stationary points if they have a $\softmax$ layer \citep{cheng2023transformers} or convergence of the training dynamics \citep{chen2024trainingdynamicsmultiheadsoftmax}; in contrast, \Cref{thm:stationary} identifies the first-order stationary point and shows that the resulting update is precisely smoothed fictitious play with learning rate \(\tilde{\Theta}(1/\sqrt{T})\). Empirically, we observe a unique \(k^\star(d,T)\) consistent with this scaling, further supporting the theoretical prediction (Figure~\ref{fig:softmax-square}).
\begin{figure}[!h]
    \centering
    \includegraphics[width=\textwidth]{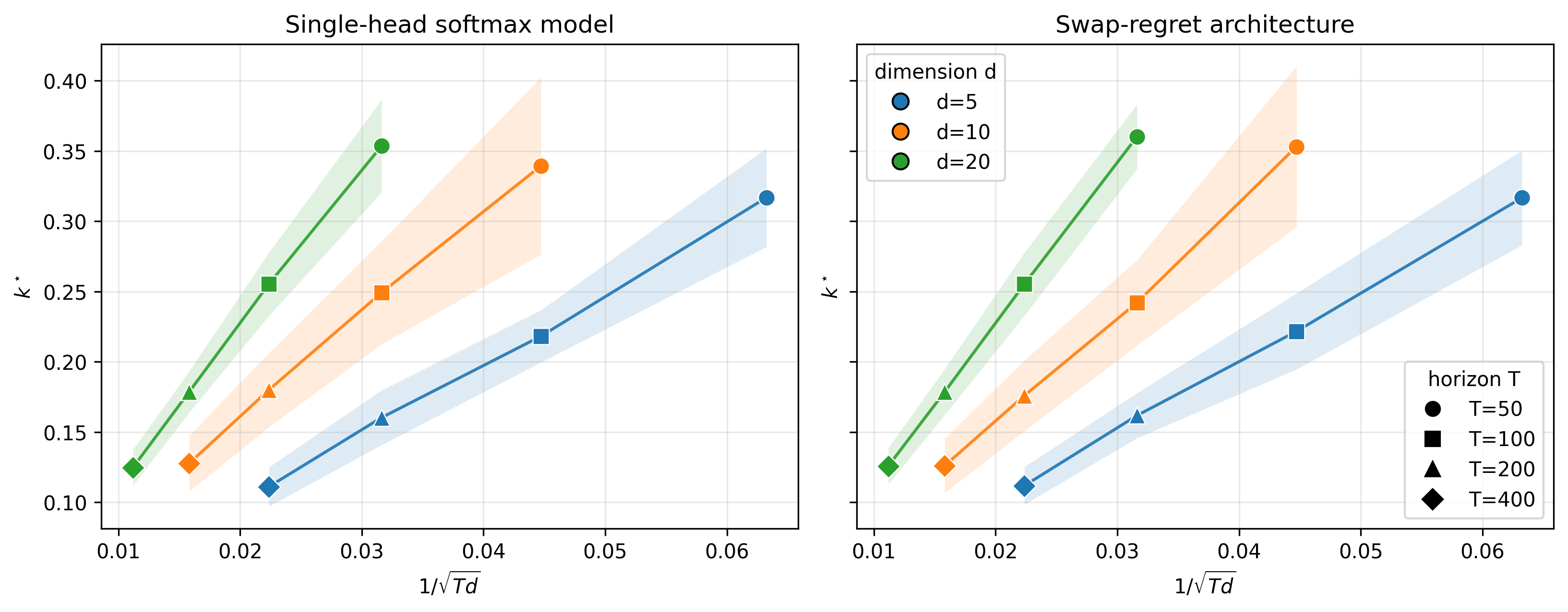}
    \caption{Estimated stationary stepsize $k^\star$ on the probability simplex as a function of $(d,T)$ for the single-head softmax model (left) and the swap-regret architecture (right). Each marker aggregates $50$ independent runs with $800$ Monte Carlo samples per run; shaded bands indicate $\pm 1$ standard deviation.}
    \label{fig:softmax-square}

        \safevspace{}
        \safevspace{}

\end{figure}

The above characterizations lift immediately to multi-agent learning dynamics.
\begin{corollary}
    Consider a repeated game where each player deploys the trained single-layer architecture from \Cref{thm:stationary} with \texttt{Operator} = $\softmax$. Then the time-averaged joint play converges to a coarse correlated equilibrium (CCE) in finite action games.
\end{corollary}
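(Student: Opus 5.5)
The plan is to reduce the corollary to two facts. First, the forward pass at the stationary point of \Cref{thm:stationary} is smoothed fictitious play (entropy-regularized FTRL) with learning rate $k^\star=\tilde\Theta(1/\sqrt T)$. Second, the standard folklore that no-external-regret play by all players drives the empirical joint distribution to the set of CCE \citep{cesa2006prediction}.

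\textbf{Step 1: identify the forward pass.} For player $n$ with loss history $\ell_{n,1},\dots,\ell_{n,t-1}$, where $\ell_{n,s}=-r_n(\cdot,\pi_{-n,s})$, the model with $V=-k^\star I$, $a=\pmb{0}$, $v_c=v\pmb{1}$ outputs
\[
\pi_{n,t}=\softmax\Bigl(-k^\star\sum_{s<t}\ell_{n,s}\Bigr),
\]
exactly as in the proof structure of \Cref{thm:stationary}. This is Hedge/FTRL with negative-entropy regularizer and stepsize $\eta=k^\star$.

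\textbf{Step 2: bound each player's regret.} Apply the textbook FTRL bound against an arbitrary (possibly adaptive) sequence of losses in $[-B,B]^{d_n}$:
\[
\Reg_n\le \frac{\log d_n}{\eta}+\eta\sum_t\|\ell_{n,t}\|_\infty^2\le \frac{\log d_n}{\eta}+\eta B^2T .
\]
With $\eta=\tilde\Theta(1/\sqrt T)$ this is $\tilde O(\sqrt T)=o(T)$.

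Two points need care here. The losses are generated adaptively by the other players, but the Hedge bound is deterministic and holds for any sequence, so adaptivity is harmless. The stepsize $k^\star$ was selected for Gaussian training losses, but it enters only through its $T$-scaling. If polylog factors in $\tilde\Theta$ make $k^\star$ as small as $1/(\sqrt T\,\mathrm{polylog}T)$ or as large as $\mathrm{polylog}T/\sqrt T$, the bound remains $\sqrt T\,\mathrm{polylog}T=o(T)$. A constant rescaling by $B$ also only affects constants. This robustness of the rate to the constant and polylog slack in $k^\star$ is the main thing I will verify carefully. I will also use that $T$ is the deployment horizon matching training, so $k^\star$ is tuned to that $T$, or else a doubling-trick remark.

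\textbf{Step 3: pass from no regret to CCE.} Let $\mu_T:=\frac1T\sum_t\bigotimes_n\pi_{n,t}$ be the time-averaged joint play. Since $r_n$ is multilinear in policies, for any deviation $\pi_n'$,
\[
\EE_{\mu_T}[r_n(\pi_n',\pi_{-n})]-\EE_{\mu_T}[r_n]=\frac1T\sum_t\langle \ell_{n,t},\pi_{n,t}-\pi_n'\rangle\le \frac{\Reg_n}{T}\to0 .
\]
Hence $\mu_T$ is an $\epsilon_T$-CCE with $\epsilon_T=\tilde O(1/\sqrt T)$. By compactness of $\Delta(\cA)$ and continuity of the CCE inequalities, every limit point of $\mu_T$ is a CCE, so the distance from $\mu_T$ to the CCE set tends to zero. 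If sampled actions are used instead of mixed policies, I will add an Azuma--Hoeffding step so that realized regret is within $O(\sqrt{T\log(1/\delta)})$ of expected regret, which yields almost-sure convergence via Borel--Cantelli.
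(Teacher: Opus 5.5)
Your proposal is correct and follows the same route as the paper, which gives no separate proof and only remarks that the characterization ``lifts immediately'': at the stationary point the forward pass is smoothed fictitious play with stepsize $k^\star=\tilde\Theta(1/\sqrt T)$, which has $o(T)$ external regret against bounded adaptive losses, and no external regret for every player makes the time-averaged joint play an $\epsilon_T$-CCE with $\epsilon_T\to 0$. Your explicit Hedge bound $\log d_n/\eta+\eta B^2T$, your use of both ends of the interval containing $k^\star$, and the multilinearity identity for the CCE gap fill in details the paper leaves implicit; of your two horizon options, the matched training/deployment horizon is the one consistent with deploying the trained model unchanged, since a doubling trick would alter the deployed architecture.
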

Thus, enforcing the simplex-respecting nonlinearity inside the loss not only yields interpretable single-agent updates but also preserves standard equilibrium guarantees when the model is embedded in multi-agent environments.  Mechanistically, the learned attention layer implements an external-regret dynamic, and external regret is precisely the deviation notion underlying CCE: no player can improve by replacing their realized sequence of actions with a single fixed action in hindsight.  This connects the stationary-point characterization of regret-trained attention directly to coarse-deviation robustness in games.

\subsection{Local minimality within the smoothed-fictitious-play family}
\label{ssec:additional-convergence-result}

\begin{proposition}[Restricted local minimality]
\label{prop:restricted-local-minimum}
Let
\[
  L(k):=\EE[R_T(k)^2],
  \qquad
  \pi_t(k)=\softmax\!\left(-k\sum_{s<t}\ell_s\right).
\]
Under the setting of \Cref{thm:stationary}, there exists
\(k^\star=\widetilde{\Theta}(1/\sqrt T)\) that is a local minimizer of
\(L(k)\). Consequently,
\[
  V=-k^\star I_{d\times d},\qquad
  a=\pmb{0}_d,\qquad
  v_c=v\pmb{1}_d
\]
is a local minimizer of the population regret loss when optimization is
restricted to this Softmax--smoothed-fictitious-play scalar family.
\end{proposition}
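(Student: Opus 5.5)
The plan reuses the sign analysis of $L'(k)$ from the proof of \Cref{thm:stationary} and upgrades ``a root of $L'$'' to ``a local minimizer of $L$'' using a one-dimensional continuity and compactness argument.

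The setting makes $L(k)=\EE[R_T(k)^2]$ smooth in $k$. For fixed $T$ and $d$, $R_T(k)$ is a finite sum of terms $\langle \ell_t,\pi_t(k)\rangle$ minus a $k$-independent comparator. Each $\pi_t(k)$ is $C^\infty$ in $k$, and its derivatives are bounded by polynomials in $\|S_{t-1}\|_\infty$, where $S_{t-1}=\sum_{s<t}\ell_s$. Gaussian moments are finite, so dominated convergence lets us differentiate under the expectation. Hence $L\in C^1([0,\infty))$, and $L'(k)=-2\Sigma_1(k)+2\Sigma_2(k)$ as derived for \Cref{thm:stationary}.

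Next we import the two sign facts established in that proof.
\begin{itemize}
\item For $0<k\le c_1/\sqrt{T}$, up to logarithmic factors, $L'(k)<0$, because the comparator term dominates.
\item At $k=C_2\sqrt{\log(dT)}/\sqrt{T}$, or the analogous scale used there, $L'(k)>0$, because the learner-response term dominates.
\end{itemize}
Set $a=c_1/\sqrt T$ and $b=C_2\sqrt{\log(dT)}/\sqrt T$, both $\widetilde\Theta(1/\sqrt T)$. Since $L$ is continuous on the compact interval $[a,b]$, it attains its minimum at some $k^\star\in[a,b]$.

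The minimizer cannot be an endpoint. Because $L'(a)<0$, $L$ is strictly decreasing just to the right of $a$, so $L(a)$ is not minimal. Because $L'(b)>0$, $L$ is strictly increasing just to the left of $b$, so $L(b)$ is not minimal. Therefore $k^\star\in(a,b)$ is an interior minimum of $L$ on $[a,b]$. That makes it a local minimizer of $L$ on $(0,\infty)$, and $L'(k^\star)=0$.

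Its scale is $k^\star=\widetilde\Theta(1/\sqrt T)$ because $a\le k^\star\le b$. Since $L'(k^\star)=0$, the block-by-block gradient argument in \Cref{thm:stationary} applies to this particular root. So the configuration $V=-k^\star I_{d\times d}$, $a=\pmb{0}_d$, $v_c=v\pmb 1_d$ is both stationary and a local minimizer along the scalar family. The value of $v$ is irrelevant by shift-invariance of $\softmax$.

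To pass from the scalar to the family statement, note that along this family the regret loss equals $L(k)$ exactly, since $v$ and the zero $a$ do not affect $\pi_t$. Local minimality of $k^\star$ for $L$ is therefore exactly local minimality within the restricted family.

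The main obstacle is making sure the two sign facts hold at explicit endpoints, not merely in asymptotic regimes. The small-$k$ negativity must hold on a whole interval $(0,a]$, or at least at one point $a$ of order $1/\sqrt T$ up to logarithms. This needs the expansion $\pi_t(k)=u-kJ_0S_{t-1}+O(k^2\|S_{t-1}\|_\infty^2)$ with remainder bounds that are uniform for $k\le a$, controlled through the Gaussian MGF. I would first try quoting the explicit thresholds from the appendix proof of \Cref{thm:stationary}. If those thresholds are only asymptotic, I would redo the leading-order computation $L'(k)=-c\,k^{0}\sqrt{T}\cdot(\text{comparator})+c'kT+\text{remainder}$. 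The goal is to show the remainder is $o$ of the leading terms on $[a,b]$ for large $T$.
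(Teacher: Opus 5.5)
Your proposal is correct and matches the paper's proof. You import $k_-<k_+$ with $L'(k_-)<0<L'(k_+)$ from the sign analysis, minimize the continuous $L$ on $[k_-,k_+]$, use the strict derivative signs to exclude both endpoints so the minimizer is an interior local minimizer with $L'(k^\star)=0$, and then apply the full-gradient symmetry reduction to get stationarity in the unrestricted parameterization. One small correction: the negativity estimate actually proved in the paper holds only for $k=\widetilde O(1/\sqrt{Td})$, so your left endpoint should be $a\asymp 1/\sqrt{Td}$ up to logarithms (or $c_1$ must be allowed to depend on $d$), which is why the paper pins down $k^\star$ only up to dimension and polylogarithmic factors.
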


\begin{proof}
The sign estimates in the proof of \Cref{thm:stationary} provide
\(k_-<k_+\), both at the stated scale up to the displayed dimension and
polylogarithmic factors, such that \(L'(k_-)<0\) and \(L'(k_+)>0\).
The continuous function \(L\) attains a minimum on
\([k_-,k_+]\). The strict derivative signs exclude both endpoints, so at
least one minimizer \(k^\star\) lies in the interior. It is therefore a local
minimizer of \(L\), and \(L'(k^\star)=0\). The full-gradient symmetry
reduction in the proof of \Cref{thm:stationary} then also gives first-order
stationarity in the unrestricted parameterization.
\end{proof}

This proposition is deliberately restricted to the scalar FTRL family. It
does not establish local minimality over the full parameter space or global
convergence of gradient descent. We separately test whether training from
random initialization reaches the identified behavior.

\subsection{Empirical FTRL matching}
\label{ssec:empirical-ftrl-matching}

We test whether unrestricted random-initialization training reaches the
entropy-FTRL behavior identified by \Cref{thm:stationary}.  We train the
reduced one-layer model with squared regret loss at \(d=3,T=20\) for three
independent seeds, without FTRL trajectories, FTRL parameters, or a
constraint \(V=-kI\).  For each learned checkpoint, we fit a scalar
\(\eta\in[0,5]\) on \(4{,}096\) held-out Gaussian sequences by minimizing the
mean policy MSE between the model output and
\[
  \pi_t^{\mathrm{FTRL}}(\eta)
  =\softmax\!\left(-\eta\sum_{s<t}\ell_s\right).
\]

\Cref{fig:training-experiments} (left) shows that the learned policies are close
to entropy-FTRL: on average, only \(2.71\%\) of probability mass must be
moved to match the fitted FTRL policy.  This is finite-horizon evidence that
the theoretically identified family is empirically reachable; it is not a
claim of global convergence from arbitrary initialization.

\subsection{Regret training in deeper Softmax Transformers}
\label{ssec:deeper-transformer-experiment}

We next train two- and four-layer causal Softmax Transformers under the same
Gaussian regret objective.  These models use four attention heads per layer;
we do not fit their outputs to FTRL or claim an FTRL identification theorem.
We evaluate on held-out biased-Gaussian sequences
\(\ell_t=m+\epsilon_t\), where the sequence-specific mean \(m\) creates a
best action that must be learned from history.  Writing \(R_t\) for
cumulative regret through time \(t\), both depths maintain substantially
smaller \(R_t/t\) than the non-adaptive uniform policy.  At \(T=20\), the
two- and four-layer models attain respectively
\(R_{20}=4.380\pm0.046\) and \(4.322\pm0.017\), or normalized regret
\(0.219\) and \(0.216\), compared with \(R_{20}=7.021\pm0.052\) and
\(R_{20}/20=0.351\) for uniform play
(\Cref{fig:training-experiments}, right).

\begin{figure}[t]
  \centering
  \begin{minipage}[c]{0.48\linewidth}
    \centering
    \textbf{(a) One-layer FTRL matching}\\[2pt]
    {
    \scriptsize
    \begin{tabular}{@{}lc@{}}
      \toprule
      Quantity & Mean \(\pm\) SD \\
      \midrule
      Fitted \(\widehat\eta\) & \(0.4456\pm.0046\) \\
      Policy MSE & \(0.000739\pm.000176\) \\
      Mean \(L_1\) & \(0.0542\pm.0071\) \\
      TV (\% mass) & \(0.0271\pm.0036\) (\(2.71\%\)) \\
      Error/action & \(0.0181\pm.0024\) (\(1.81\) pp) \\
      \bottomrule
    \end{tabular}}
  \end{minipage}
  \hfill
  \begin{minipage}[c]{0.48\linewidth}
    \centering
    \textbf{(b) Regret over time}\\[2pt]
    \includegraphics[width=\linewidth]{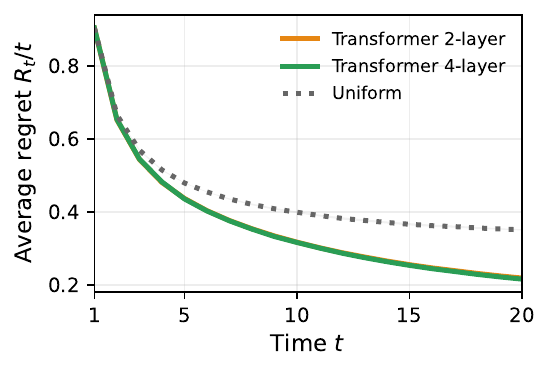}
  \end{minipage}
  \caption{Random-initialization regret-loss training at
  \(d=3,T=20\).  Left: entropy-FTRL fit to one-layer policies; ``pp'' denotes
  percentage points.  Right: normalized regret for deeper Transformers on
  biased-Gaussian sequences.  The table reports mean \(\pm\) sample standard
  deviation and curves show means with \(\pm1\) standard-error bands across
  three seeds.}
  \label{fig:training-experiments}
\end{figure}

\subsection{Coarse-correlated-equilibrium evaluation}
\label{ssec:empirical-cce}

We evaluate the learned policies in a \(3\times3\) weighted zero-sum
rock--paper--scissors game whose equilibrium marginal is
\((3,2,1)/6\).  With expected full-information loss feedback, we form the
time-averaged joint mixed play
\(\bar\mu_T=T^{-1}\sum_{t=1}^T
\pi_{1,t}\otimes\pi_{2,t}\) and measure its maximum CCE-deviation gap.
At \(T=20\), the one-, two-, and four-layer models obtain respectively
\(0.0832\pm0.0028\), \(0.0467\pm0.0095\), and
\(0.0653\pm0.0054\), compared with \(0.1111\) for uniform play.
\Cref{fig:equilibrium-experiment} (left) reports the complete curves.  This
directly measures the equilibrium quantity controlled by average external
regret.

\section{Swap-Regret Loss Minimization on the Probability Simplex}
We now recall the classical reduction of Blum and Mansour, which turns any
external-regret minimizer on the simplex into a swap-regret minimizer. Let the action set be $\cA = \{1,\dots,d\}$ and write $\Delta(\cA)$ for the corresponding probability simplex.
A \emph{swap transformation} is represented by a column-stochastic matrix
\[
\Phi \;\coloneqq\; \bigl\{P=(p_1 \mid \dots \mid p_d) \in \R^{d\times d}
  : p_i \in \Delta(\cA) \;\text{for all } i\bigr\},
\]
which acts on $\pi\in\Delta(\cA)$ by $\pi \mapsto P\pi = \sum_{i=1}^d \pi_i p_i$.
At round $t$, the adversary reveals a loss vector $\ell_t\in\R^d$ so that the learner incurs the linear loss $\langle \ell_t, \pi_t\rangle$.

\paragraph{Blum--Mansour construction.}
Fix $d$ copies of an external-regret minimizer for $\Delta(\cA)$,
denoted $\cR_1,\dots,\cR_d$.
At each round $t\in[T]$ the swap-regret algorithm performs:

\begin{enumerate}
  \item \textbf{Next strategy.}
  For each $i\in[d]$, query $\cR_i$ for  $p_{i,t}\in\Delta(\cA)$.
  Assemble the matrix
  \[
    P_t \;=\; \bigl(p_{1,t} \mid \dots \mid p_{d,t}\bigr) \in \Phi.
  \]
  Compute a fixed point $\pi_t = \FixedPoint(P_t)\in \Delta(\cA)$ and play $\pi_t$.
  \item \textbf{Loss feedback.}
  After observing the loss vector $\ell_t$,
  each external-regret copy $\cR_i$ receives the scaled linear loss $
    x_{i,t} \;=\; \pi_{t,i}\,\ell_t$,
  i.e., it runs on the loss vector $x_{i,t}$.
\end{enumerate}

Thus each $\cR_i$ runs on the same action space $\Delta(\cA)$, but with losses
rescaled by the $i$-th coordinate of the fixed point $\pi_t$.

\begin{fact}[Blum--Mansour]\label{fact:BM-swap}
Let $\Reg_i$ denote the external regret of $\cR_i$ after $T$ rounds when
fed the losses $(x_{i,t})_{t \in [T]}$, that is
\begin{equation}\label{eq:BM-Reg-i}
  \Reg_i((x_{i,t})_{t \in [T]})
  \;\coloneqq\;
  \max_{\tilde p_i \in \Delta(\cA)}
  \sum_{t=1}^T
  \bigl(\langle x_{i,t}, p_{i,t}\rangle-\langle x_{i,t}, \tilde p_i\rangle\bigr).
\end{equation}
Then the cumulative swap regret admits the exact column-wise decomposition
\begin{equation}\label{eq:BM-swap-decomposition}
  \SwapReg((\ell_t)_{t \in [T]})
  =-\min_{\hat P=(\hat p_1\mid\dots\mid\hat p_d)\in\Phi}
    \sum_{t=1}^T\sum_{i=1}^d
    \bigl\langle x_{i,t},\hat p_i-p_{i,t}\bigr\rangle,
\end{equation}
and therefore satisfies
\[
  \SwapReg((\ell_t)_{t \in [T]}) \;\le\; \sum_{i=1}^d \Reg_i((x_{i,t})_{t \in [T]}).
\]
In particular, if each $\cR_i$ has sublinear external regret, then the
Blum--Mansour construction has sublinear swap regret.
\end{fact}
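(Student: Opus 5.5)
The plan is to prove the decomposition \eqref{eq:BM-swap-decomposition} as an exact identity by rewriting both the learner's loss and every swap deviation in terms of the scaled losses $x_{i,t}=\pi_{t,i}\ell_t$. The inequality and the sublinearity claim then follow by separating the minimization over $\Phi$ column by column.

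\textbf{Step 1 (align conventions).} In \eqref{eq:swap-regret} we have $\langle \pi_t, P^\intercal \ell_t\rangle = \langle P\pi_t,\ell_t\rangle$ for row-stochastic $P$. Since this is linear in $P$, its extreme points are the $P_\sigma$, and $P_\sigma^\intercal$ is column-stochastic with columns $e_{\sigma(i)}$. I will therefore rewrite
\[
\SwapReg = \max_{\hat P\in\Phi}\sum_{t=1}^T\bigl(\langle \ell_t,\pi_t\rangle - \langle \ell_t,\hat P\pi_t\rangle\bigr),
\]
checking that the maximum over $\Phi$ is again attained at a $0/1$ matrix, so the two formulations agree. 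This bookkeeping of transposes is the step I expect to be most error-prone, rather than truly hard. I will also note briefly that $\FixedPoint(P_t)$ exists for every column-stochastic $P_t$, either by Brouwer's theorem on $\Delta(\cA)$ or by Perron--Frobenius.

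\textbf{Step 2 (the two key identities).} The fixed-point property $\pi_t=P_t\pi_t=\sum_i \pi_{t,i}p_{i,t}$ gives
\[
\langle \ell_t,\pi_t\rangle=\sum_i \pi_{t,i}\langle \ell_t,p_{i,t}\rangle=\sum_i\langle x_{i,t},p_{i,t}\rangle .
\]
For any $\hat P\in\Phi$, the deviation satisfies
\[
\langle \ell_t,\hat P\pi_t\rangle=\sum_i \pi_{t,i}\langle\ell_t,\hat p_i\rangle=\sum_i\langle x_{i,t},\hat p_i\rangle .
\]
Substituting both into Step 1 and summing over $t$ yields
\[
\SwapReg=\max_{\hat P\in\Phi}\sum_{t,i}\langle x_{i,t},p_{i,t}-\hat p_i\rangle ,
\]
which is exactly \eqref{eq:BM-swap-decomposition}.

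\textbf{Step 3 (separation and conclusion).} Since $\Phi=\Delta(\cA)^d$ is a product set and the objective is a sum of terms each depending on only one column $\hat p_i$, the maximum splits as
\[
\sum_i \max_{\tilde p_i\in\Delta(\cA)}\sum_t\langle x_{i,t},p_{i,t}-\tilde p_i\rangle=\sum_i\Reg_i((x_{i,t})_{t\in[T]}).
\]
This gives the stated bound, in fact with equality. Finally, $|\pi_{t,i}|\le 1$ implies that each $x_{i,t}$ stays in the same bounded loss class as $\ell_t$. Hence each copy $\cR_i$ retains its $o(T)$ external-regret guarantee, and a sum of $d$ such terms (with $d$ fixed) is still $o(T)$.
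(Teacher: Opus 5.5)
Your proposal is correct and follows the paper's proof in Appendix~\ref{appendix:pfBM} essentially line for line. Both arguments use $\pi_t=P_t\pi_t=\sum_i\pi_{t,i}p_{i,t}$ to rewrite the learner's loss and the deviation loss $\langle \ell_t,\hat P\pi_t\rangle$ in terms of the $x_{i,t}$. Your column-wise separation of the maximum over $\Phi=\Delta(\cA)^d$ only sharpens the paper's inequality (ii) to an equality, and your boundedness remark supplies the sublinearity step that the paper leaves implicit.

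One caution for Step~1. Under the paper's convention $(P_\sigma)_{ij}=\mathbbm{1}\{\sigma(i)=j\}$, the literal expression $\langle\pi_t,P_\sigma^\intercal\ell_t\rangle=\langle P_\sigma\pi_t,\ell_t\rangle$ in \eqref{eq:swap-regret} has the transpose on the wrong side: its deviation $P_\sigma\pi_t$ need not even lie in the simplex. Your planned ``agreement check'' therefore will not go through verbatim. You should simply adopt the intended deviation $P_\sigma^\intercal\pi_t=\sum_i\pi_{t,i}e_{\sigma(i)}$, which is exactly the $\Phi$-formulation the paper's proof also uses.
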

The proof is deferred to \Cref{appendix:pfBM}.

Inspired by the classical Blum–Mansour construction, we introduce a new neural architecture for swap-regret minimization. Our design uses a $d$-head linear self-attention module to parameterize the transition matrix. After the multi-head attention step, each head applies a $\softmax$ to produce one strictly positive column of the transition matrix, thereby obtaining a valid column-stochastic Markov matrix. The final policy is then computed via a $\FixedPoint$ procedure, which can be efficiently implemented using power iteration. This yields a differentiable, end-to-end model that mirrors the theoretical structure of the Blum–Mansour minimizer while allowing scalable learning. Mathematically, the architecture is given by:
\begin{equation}
  \begin{aligned}
    \pi_{t+1} &= \texttt{Fixed\text{-}Point}\!\left( \left( p_{1,{t+1}} \,\mid\,\cdots\mid \, p_{d,{t+1}} \right) \right), \\
    p_{j,{t+1}} &= \softmax\!\left( \sum_{i=1}^{t} 
      \big( V^{(j)} X_i X_i^\intercal a^{(j)} 
        + (V^{(j)} + v_{c}^{(j)} a^{(j)^\intercal}) X_i 
        + v_{c}^{(j)} \big)\right),
     \qquad j \in [d].
  \end{aligned}
  \label{eqn:new-architecture}
  \end{equation}
  
where $V^{(j)} \in \RR^{d \times d^2}, a^{(j)} \in \RR^{d^2}, v_{c}^{(j)} \in \RR^d$ are the parameters for the $j$-th head of the linear self-attention module, and $X_i := \ell_i \otimes \pi_i = (x_{i,t})_{t=1}^\intercal \in \RR^{d^2}$, which consider every possible combination of $\ell_i$ and $\pi_i$. The $\FixedPoint$ operation is implemented using power iteration.

\begin{remark}[Differentiability of the fixed-point layer]
Because every column $p_{j,t+1}$ is produced by a $\softmax$, the matrix
$P_{t+1}=(p_{1,t+1}\mid\cdots\mid p_{d,t+1})$ has strictly positive entries and
is column-stochastic.  By Perron--Frobenius theory, $P_{t+1}$ is primitive and
has a unique stationary distribution in the relative interior of the simplex.
Moreover, the eigenvalue $1$ is simple, so the fixed-point map
$P\mapsto \FixedPoint(P)$ is smooth on the open set of strictly positive
column-stochastic matrices; equivalently, differentiability follows from the
implicit function theorem after restricting to the simplex tangent space
\(\{u:\pmb{1}_d^\intercal u=0\}\).  In implementation, gradients can be obtained
by backpropagating through the finite power-iteration procedure, or by implicit
differentiation of the stationary equation \(\pi=P\pi\).
\end{remark}
\begin{figure}[h]
  \centering
  \includegraphics[width=\textwidth]{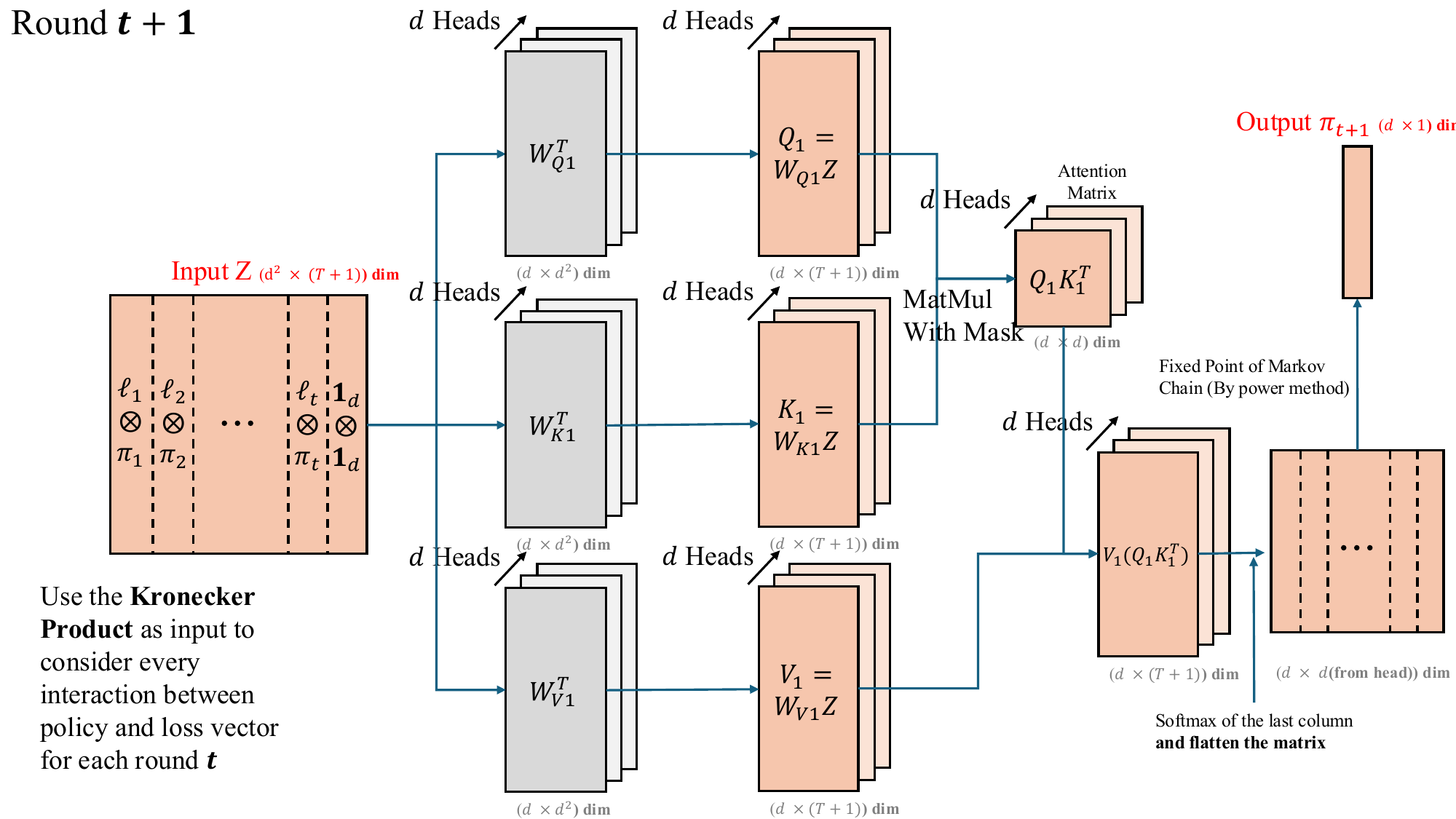}
  \caption{Multi-head linear self-attention architecture for swap-regret minimization. Each head processes the block features $X_t=\ell_t\otimes\pi_t$ and outputs one simplex vector $p_{j,t+1}$ through a $\softmax$ layer; these vectors form a transition matrix, and the final policy $\pi_{t+1}$ is obtained as its fixed point, matching the Blum--Mansour reduction from external regret to swap regret.}
  \label{fig:architecture}
\end{figure}

Now, we provide a theoretical analysis of the new architecture.
\begin{restatable}{theorem}{stationarynew}
  \label{thm:new-architecture-stationary}
  The configuration of the multi-head linear self-attention architecture 
  defined in \Cref{eqn:new-architecture} is a stationary point of the 
  swap-regret loss
  \begin{align}
  \mathcal{L}((V^{(j)}, a^{(j)}, v_{c}^{(j)})_{j\in[d]})
   &= \EE_{X\sim\mathcal{N}(0,I_{d^2T})}
      \left[
        \left(
          \min_{\hat{P}\in(\Delta([d]))^d}
          \sum_{t=1}^T \sum_{i=1}^d 
          \bigl\langle x_{i,t},\hat{P}_i - p_{i,t}\bigr\rangle
        \right)^2
      \right], \label{eq:Swap-regret-loss}
  \end{align}
  where 
  \[
  x_{i,t} := X_{d(i-1)+1:di,\; t} \in \mathbb{R}^d,
  \]
  and the parameters satisfy
  \[
  V^{(j)} = (V^{(j)}_1,\dots,V^{(j)}_d), \qquad 
  V^{(j)}_r =
  \begin{cases}
   -k I_{d\times d}, & r=j,\\[3pt]
   \pmb{O}_{d\times d}, & r\neq j,
  \end{cases}
  \qquad
  a^{(j)}=\pmb{0}_{d^2},\quad v_c^{(j)} = v\pmb{1}_d,
  \]
  for some $k=\Theta(1/\sqrt{T})$.
\end{restatable}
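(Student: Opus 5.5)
The plan is to mirror the proof of \Cref{thm:stationary} head by head. First I will identify the forward pass. At the given configuration, head $j$ has logits $-k\sum_{s\le t} x_{j,s} + t v\pmb{1}_d$, so by shift-invariance
\[
p_{j,t+1}=\softmax\!\left(-k\sum_{s\le t} x_{j,s}\right).
\]
This is exactly smoothed fictitious play run on the $j$-th block of $X$. In the loss \Cref{eq:Swap-regret-loss}, $X\sim\mathcal N(0,I_{d^2T})$ is sampled directly, so the blocks $x_{1,\cdot},\dots,x_{d,\cdot}$ are i.i.d.\ Gaussian sequences. The inner minimization over $\hat P\in(\Delta([d]))^d$ separates over columns. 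Hence the quantity inside the square is $-\sum_{j}R^{(j)}_T(k)$, where $R^{(j)}_T(k)$ is the external regret of head $j$ on its own sequence. The $R^{(j)}_T(k)$ are i.i.d.\ copies of $R_T(k)$ from \Cref{thm:stationary}, and the $\FixedPoint$ layer does not enter the loss at all, because the $x_{i,t}$ are exogenous here. So
\[
\mathcal L = d\,\EE[R_T(k)^2] + d(d-1)\,\EE[R_T(k)]^2 .
\]

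Next I will reduce the gradient block by block. Only head $j$'s parameters affect $R^{(j)}$, so $\nabla_{\theta^{(j)}}\mathcal L = 2\EE\big[(\sum_i R^{(i)})\nabla_{\theta^{(j)}}R^{(j)}\big]$. By independence this equals
\[
2\,\EE\big[R^{(j)}\nabla R^{(j)}\big]+2(d-1)\,\EE[R]\,\EE\big[\nabla R^{(j)}\big].
\]
I then run the symmetry argument of \Cref{thm:stationary} on each block of head $j$.
\begin{itemize}
\item The $v_c^{(j)}$-block vanishes pointwise, since $J_{\sm}^\intercal\pmb{1}_d=\pmb{0}_d$.
\item The $a^{(j)}$-block: the expected gradient is a multiple of $\pmb{1}_{d^2}$ restricted appropriately, and it is killed by the softmax Jacobian after the same argument. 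Here I use Gaussian sign symmetry $x\mapsto -x$ together with permutation symmetry.
\item The off-diagonal blocks $V^{(j)}_r$, $r\neq j$: the gradient has the form $\EE[(\cdot)\,J\,(\sum_s x_{r,s})^\intercal]$. Here $x_{r,\cdot}$ is independent of everything else in the expression except through the factor $\sum_i R^{(i)}$, and that factor contains $R^{(r)}$, which is even under $x_{r}\mapsto -x_{r}$ only in distribution. I will handle this by noting that $R^{(r)}$ depends on $x_r$ while the linear factor in $x_r$ enters. Flipping $x_{r,\cdot}\mapsto -x_{r,\cdot}$ leaves the law invariant but does not fix $R^{(r)}$, so instead I condition on head $j$'s sequence. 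The term becomes $\EE[R^{(r)}\sum_s x_{r,s}]$ times a head-$j$ factor, and then permutation invariance of coordinates within block $r$ forces this expectation to be a multiple of $\pmb{1}_d$. The head-$j$ factor is $J^\intercal(\cdot)$ and annihilates it after transposition, giving zero.
\item The diagonal block $V^{(j)}_j$: exactly as before, the gradient is $\alpha(k)(I-\frac1d\pmb{1}\pmb{1}^\intercal)$, with $\alpha(k)\propto\frac{d}{dk}\big[\EE[R_T^2]+(d-1)\EE[R_T]^2\big]$.
\end{itemize}

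The remaining step, which I expect to be the main obstacle, is to find a root of the scalar function
\[
F(k)=\EE[R_T(k)^2]+(d-1)\EE[R_T(k)]^2 .
\]
For its derivative I reuse the expansions from the appendix proof of \Cref{thm:stationary}: $\pi_t(k)=u-kJ_0S_{t-1}+O(k^2\|S_{t-1}\|^2)$.
\begin{itemize}
\item Both $\EE[R_T]$ and $\EE[R_T^2]$ decrease for small $k$. The comparator term $\EE[\max_a(-S_{T,a})]\sim\sqrt{T\log d}$ dominates there, and the learner's gain is linear in $k$ with slope of order $T$.
\item At $k=C/\sqrt T$ for large $C$, the learner-response term dominates in both. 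Note that $\EE R_T$ itself is minimized at $k=\Theta(1/\sqrt T)$ by the standard regret-tradeoff computation, and so is $\EE R_T^2$ by the theorem.
\end{itemize}
I will therefore show $F'(c/\sqrt T)<0$ and $F'(C/\sqrt T)>0$ using the same Gaussian-maximum, MGF and Jacobian-remainder bounds. Each of the two summands of $F$ has a derivative of the same sign at these endpoints, so their sum does as well. Continuity then gives $k^\star=\Theta(1/\sqrt T)$, at which all blocks are stationary. The delicate part is making the constants $c,C$ uniform so that both summands share signs simultaneously.
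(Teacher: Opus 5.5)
\textbf{Gap in the final step: $\mathbb{E}[R_T(k)]$ does not depend on $k$.} Your reduction follows the paper's route: per-head smoothed fictitious play, column-wise splitting of the inner minimum, $\mathcal L=d\,\mathbb{E}[R_T^2]+d(d-1)(\mathbb{E}R_T)^2$, block-wise symmetry, and reduction to a scalar root. The last step, however, rests on a false claim. Because $\pi_t(k)$ is $\sigma(x_1,\dots,x_{t-1})$-measurable and $x_t$ is independent of it with mean zero, $\mathbb{E}\langle x_t,\pi_t(k)\rangle=0$. Hence $\mathbb{E}[R_T(k)]=-\mathbb{E}[\min_j S_{T,j}]$ for every $k$. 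The regret tradeoff you cite is a worst-case adversarial statement. Under i.i.d.\ centered Gaussian losses every predictable policy has zero expected loss, so there is no first-moment \emph{learner's gain linear in $k$}, and $k\mapsto\mathbb{E}R_T(k)$ is flat. Your plan to show that $\frac{d}{dk}\mathbb{E}[R_T]$ is strictly negative at $c/\sqrt T$ and strictly positive at $C/\sqrt T$ therefore cannot succeed. The uniformity-of-constants difficulty you single out as the main obstacle does not exist.

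\textbf{The missing idea, and a fix for your off-diagonal bullet.} This martingale observation is exactly what the paper uses (\Cref{prop:R-independent-of-k}). It gives $\Phi'(k)=d\,L_{\mathrm{ext}}'(k)$, i.e.\ $F'(k)=\frac{d}{dk}\mathbb{E}[R_T(k)^2]$. So the sign changes of \Cref{thm:smallk} and \Cref{thm:positivity} carry over verbatim, and the root is the single-head $k^\star$ of \Cref{thm:stationary}, with no new estimates. The same fact is what actually closes your off-diagonal bullet. You are right that $\Delta_{j,t}=2R_{\mathrm{tot}}J_{\sm}(z_{j,t})^\intercal x_{j,t}$ depends on block $r$ through $R^{(r)}$; the paper's one-line independence argument for $G_{j,r}$ passes over this coupling. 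But your closing mechanism does not work. After factorizing over independent blocks, the surviving term is $\mathbb{E}[J_{\sm}(z_{j,t})x_{j,t}]\,\mathbb{E}[R^{(r)}S_{r,t-1}]^\intercal$, and $J_{\sm}(z_{j,t})\pmb{1}_d=\pmb{0}_d$ does not annihilate an outer product of the form $w\,\pmb{1}_d^\intercal$. What kills it is $\mathbb{E}[J_{\sm}(z_{j,t})x_{j,t}]=\mathbb{E}\big[J_{\sm}(z_{j,t})\,\mathbb{E}[x_{j,t}\mid x_{j,1},\dots,x_{j,t-1}]\big]=\pmb{0}_d$. More generally, every head reads only past inputs, so $\mathbb{E}[R^{(j)}]$ is constant over the entire parameter space. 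Consequently the cross term $2(d-1)\,\mathbb{E}[R]\,\mathbb{E}[\nabla R^{(j)}]$ in your gradient formula vanishes identically.
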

\paragraph{Proof structure for \Cref{thm:new-architecture-stationary}.}
At the proposed configuration, head \(j\) ignores all blocks of \(X_t\) except
the block \(x_{j,t}\), and its logits become
\[
    z_{j,t}=-k\sum_{s<t}x_{j,s}+(t-1)v\pmb{1}_d.
\]
Thus every head is a single-head smoothed fictitious-play learner, while the
swap objective couples the heads through the fixed-point policy and the
Blum--Mansour decomposition in \Cref{fact:BM-swap}.  The gradient calculation
separates this coupling carefully: shift invariance eliminates the
\(v_c^{(j)}\) and \(a^{(j)}\) directions, block independence eliminates
off-diagonal \(V^{(j)}_r\) terms for \(r\neq j\), and permutation equivariance
reduces the diagonal block \(V^{(j)}_j\) to the projector direction
\(I-d^{-1}\pmb{1}_d\pmb{1}_d^\intercal\).  The remaining directional
derivative along \(V^{(j)}_j=-kI\) is exactly \(d\) times the scalar derivative
of the single-head external-regret objective, so the same
\(k^\star\) scale from \Cref{thm:stationary} yields stationarity.

Here, \Cref{eq:Swap-regret-loss} is the square of the swap-regret decomposition in \Cref{eq:BM-swap-decomposition}. Therefore, \Cref{thm:new-architecture-stationary} shows that for the multi-head linear self-attention with $\FixedPoint$ operation, minimizing the swap regret loss provides a no-swap-regret guarantee, and the stationary point is equivalent to the Blum–Mansour swap-regret minimizer -- input and output are the same as the Blum–Mansour construction. Moreover, for each head, the configuration is equivalent to the external-regret smoothed fictitious play with appropriate stepsize. The proof is deferred to \Cref{appendix:pfthm3}.

We also provide an empirical $k^\star(d,T)$ consistent with this scaling, further supporting the theoretical prediction (Figure~\ref{fig:softmax-square}, right).

\begin{remark}[$\FixedPoint$ iteration for the swap-regret minimizer]
  A natural question is why the architecture requires a $\FixedPoint$ operation. The rationale is twofold. First, the computational complexity of fixed-point iteration is the same as that of classical no-swap-regret minimization \citep{hazan2007computational}, making it algorithmically consistent with known constructions. Second, most swap-regret and $\Phi$-regret algorithms explicitly rely on a fixed-point computation \citep{greenwald2003general}: a common approach is to minimize external regret in the $\Phi$-space and then recover the final policy via a fixed point of the induced transition operator. While alternative methods exist—such as the Markov-chain-tree–theorem–based approach of \citet{anagnostides2023near}, which replaces the fixed point with a linear map—that method requires $d^{\,d-1}$ parameters and is therefore exponentially large and unsuitable for transformer-based architectures. For these reasons, using a $\FixedPoint$ operation offers both theoretical alignment and practical scalability.
  \end{remark}

Finally, we provide a corollary that the stationary point of the new architecture is a CE in finite action games.
\begin{corollary}
  Consider a repeated game where each player deploys the trained multi-head single-layer linear attention from \Cref{thm:new-architecture-stationary} with \texttt{Operator} = $\softmax$. Then the time-averaged joint play converges to a CE in finite action games.
\end{corollary}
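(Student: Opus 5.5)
The plan is to reduce the statement to two facts: the no-swap-regret guarantee at the stationary configuration of \Cref{thm:new-architecture-stationary}, and the classical fact recalled in \Cref{ssec:cce-ce-phie} that if every player has sublinear swap regret, the empirical joint distribution of play converges to the set of CE. The approach is the same as for the CCE corollary after \Cref{thm:stationary}, with external regret replaced by swap regret.

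\textbf{Step 1: identify the forward pass.} At the configuration of \Cref{thm:new-architecture-stationary}, head $j$ outputs
\[
p_{j,t+1}=\softmax\!\left(-k\sum_{s\le t}x_{j,s}\right),\qquad x_{j,s}=\pi_{s,j}\ell_s .
\]
This is smoothed fictitious play (entropy-regularized FTRL) with step $k=\Theta(1/\sqrt T)$ on the losses $(x_{j,s})_s$. In a game, player $n$ receives $\ell_{n,t}=-r_n(\cdot,\pi_{-n,t})$, which is bounded by $B$, and the coordinates $\pi_{t,j}\in[0,1]$. So the standard FTRL bound with entropy regularizer applies to each head:
\[
\Reg_j \le \frac{\log d}{k}+\frac{k}{2}\sum_t \|x_{j,t}\|_\infty^2 = O(\sqrt T\log d).
\]
This bound holds for adaptive, adversarial loss sequences, because FTRL guarantees are pathwise.

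\textbf{Step 2: get the swap-regret bound.} The policy is $\pi_t=\FixedPoint(P_t)$, which is well defined and unique by the differentiability remark. \Cref{fact:BM-swap} therefore gives
\[
\SwapReg_n\le\sum_j \Reg_j = O(d\sqrt T\log d)=o(T).
\]

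\textbf{Step 3: pass to equilibrium.} Let $\bar\mu_T=\frac1T\sum_t \bigotimes_n \pi_{n,t}$. For each player $n$ and each row-stochastic matrix $M_n$, the CE deviation gain under $\bar\mu_T$ equals $\frac1T\sum_t\langle \pi_{n,t},\ell_{n,t}-M_n\ell_{n,t}\rangle$. This identity holds because players randomize independently given the round, so $\pi_{-n,t}$ enters only through $\ell_{n,t}$. The gain is therefore at most $\SwapReg_n/T\to0$. By compactness of $\Delta(\cA)$, every limit point of $\bar\mu_T$ satisfies all CE inequalities, so $\bar\mu_T$ converges to the CE set. If realized actions are sampled rather than expected, an Azuma--Hoeffding step adds an $O(\sqrt{T\log T})$ term and preserves sublinearity.

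\textbf{Main obstacle.} The main care point is the scale of $k$. \Cref{thm:new-architecture-stationary} fixes $k$ from a Gaussian training distribution with horizon $T$, while the game guarantee needs $k$ tuned to the deployment horizon. I would state that the model is trained at the deployment horizon $T$. Then $k=\Theta(1/\sqrt T)$, and any $\Theta(1/\sqrt T)$ step already gives $O(\sqrt T)$ external regret regardless of the constant, so no retuning is needed. I would also note that the corollary's phrase ``\texttt{Operator} $=\softmax$'' refers to the row-wise softmax followed by $\FixedPoint$ in \Cref{eqn:new-architecture}.
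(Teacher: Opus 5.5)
Your proposal is correct and follows essentially the same route as the paper. The paper gives no separate proof; it relies on exactly this chain: each head at the stationary configuration is smoothed fictitious play with a $\Theta(1/\sqrt{T})$ step and so has sublinear external regret, \Cref{fact:BM-swap} turns this into sublinear swap regret, and no-swap-regret play converges to CE by the classical fact recalled in \Cref{ssec:cce-ce-phie}. Your version spells out what the paper leaves implicit: the pathwise Hedge bound for the bounded, adaptively generated losses $x_{j,t}=\pi_{t,j}\ell_t$; the identity writing the CE deviation gain of $\bar\mu_T$ as $\frac{1}{T}\sum_t\langle \pi_{n,t},\ell_{n,t}-M_n\ell_{n,t}\rangle$; and the caveat that the model must be trained at the deployment horizon $T$.
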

The distinction from the single-head result is the strength of the allowed deviations.  External-regret dynamics rule out fixed-action deviations and therefore imply CCE, whereas swap-regret dynamics rule out action-contingent deviations and therefore imply CE.  The fixed-point layer is the mechanism that converts the parallel external-regret heads into a single policy with swap-regret guarantees, so changing the regret surrogate changes the equilibrium concept approached by interacting agents.

\subsection{Correlated-equilibrium evaluation}
\label{ssec:empirical-ce}

We train all attention parameters of the \(d\)-head architecture jointly
from random initialization under the Gaussian block-input objective in
\Cref{eq:Swap-regret-loss}, then evaluate the learned heads through the
recurrent fixed-point layer.  Because the training objective uses full-scale
head blocks whereas the fixed-point recurrence supplies
\(x_{i,t}=\pi_{t,i}\ell_t\), we apply the predetermined dimension-only
calibration \(k_{\mathrm{head}}=d\,k_{\mathrm{train}}\).

For each run, we compute the standard maximum pairwise CE-deviation gap
directly from \(\bar\mu_T\).  At \(T=20\), the learned fixed-point
architecture obtains \(0.0627\pm0.0018\), a \(43.6\%\) reduction from
uniform play (\(0.1111\)) and nearly the value of the classical
Blum--Mansour entropy-FTRL algorithm (\(0.0617\)).  Its stronger full-swap
gap is \(0.0956\pm0.0068\).  Without the dimension calibration, the pairwise
and full-swap gaps are \(0.0968\pm0.0049\) and
\(0.1627\pm0.0110\), respectively.  Thus the finite-horizon empirical
distribution is an approximate, not exact, CE.

\begin{figure}[H]
  \centering
  \includegraphics[width=0.98\linewidth]{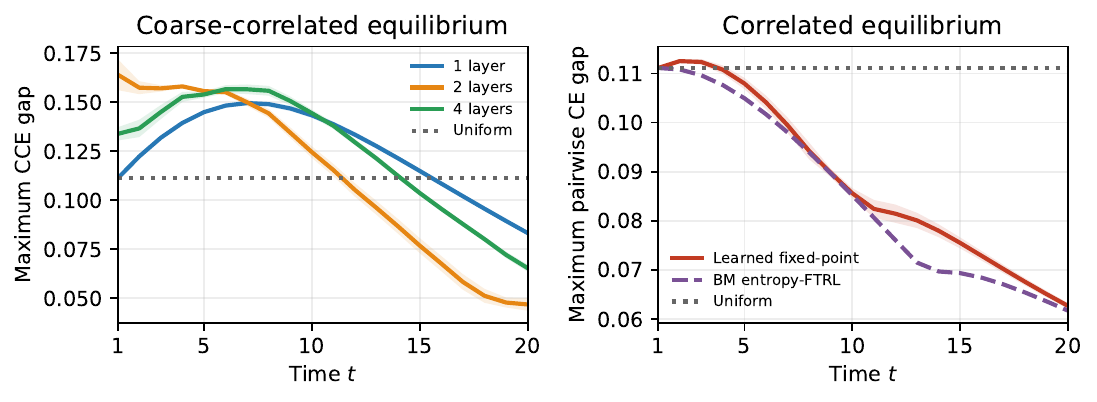}
  \caption{Equilibrium gaps of time-averaged joint mixed play in
  weighted rock--paper--scissors.  Left: external-regret-trained
  architectures and the maximum CCE gap.  Right: the learned fixed-point
  architecture, classical Blum--Mansour entropy-FTRL, and uniform play,
  evaluated by the maximum pairwise CE gap.  Curves are means with
  \(\pm1\) standard-error bands over the nine ordered pairs of three learned
  row and column checkpoints; algorithmic baselines are deterministic.}
  \label{fig:equilibrium-experiment}
\end{figure}

\section{Conclusion}
We revisited regret-based training of linear attention with policy space $\Delta(\cA)$ and showed that even minimal architectures admit interpretable stationary points once the non-linearity used at inference time is included in the training objective.  For single-head models on the simplex, the regret loss recovers smoothed fictitious play with stepsize $\tilde{\Theta}(1/\sqrt{T})$.  Extending the loss to swap regret and lifting the architecture to a multi-head configuration leads to a differentiable realization of the Blum–Mansour procedure in which each head implements an entropy-regularized FTRL learner and a $\FixedPoint$ layer aggregates the resulting transition matrix.

Our characterization highlights that regret supervision can endow attention models with provably no-regret behavior without auxiliary projections or post-processing.  Our analysis identifies interpretable stationary points rather than proving global optimality or convergence of training dynamics.  The proof also relies on full-information Gaussian training losses, simplex-valued policies, one-layer linear attention, the $\softmax$ operator, and horizon-dependent scaling of the learned stepsize.  The resulting forward pass, however, coincides with classical no-regret algorithms whose regret guarantees extend to bounded adversarial losses.

Several avenues remain open.  First, understanding whether deeper or non-linear attention layers share the same stationary geometry would bridge the gap to practical Transformers.  Second, translating these insights to LLMs, for example by pairing regret-based objectives with post-training pipelines as in \citet{park2025post}, would test whether the same mechanism appears at realistic scale.  Third, extending the stationary-point characterization to bounded non-Gaussian training distributions, horizon-free rates, and richer feedback models such as delayed or bandit observations would clarify how robust the structure is beyond the full-information Gaussian setting.

\clearpage
\bibliographystyle{ims}
\bibliography{main} 
\clearpage 
\appendix
\section{Proof of \Cref{thm:stationary}}
\label{appendix:pfthm1}
\stationary*
The proof will be divided into two parts. First, we prove the existence of the stationary point with $V = -k I_{d\times d}, a = \pmb{0}_d, v_c = v \pmb{1}_d$. Second, we show that $k$ will have a scale of $\Theta(1/\sqrt{T})$.
\begin{proof}[{\color{blue}\textbf{Proof for the existence of the stationary point with $V = -k I_{d\times d}, a = \pmb{0}_d, v_c = v \pmb{1}_d$}}]
    Fix $d\ge 2$ and horizon $T$. For $t\in[T]$ write
    \[
    S_t := \sum_{i=1}^t \ell_i \in \mathbb{R}^d,\qquad 
    z_t(V,a,v_c) \ :=\ \sum_{i=1}^t \big(V\,\ell_i\ell_i^\intercal a + (V+v_c a^\intercal)\ell_i + v_c\big)\in\mathbb{R}^d,
    \]
    and $\pi_t=\operatorname{\softmax}(z_t)$. With $V=-kI_{d\times d}$ and $a=\pmb{0}_d$ we have
    \[
    z_t(-k I_{d\times d},0,v_c) \ =\ -k\,S_t + t\,v_c.
    \]
    Regret can be written as:
    \[
    \mathrm{Regret}((\ell_t)_{t\le T}) \ :=\ \sum_{t=1}^T \langle \ell_t,\pi_t\rangle - \inf_{\pi\in\Delta^d}\,\Big\langle\sum_{t=1}^T \ell_t,\ \pi\Big\rangle
    \ =\ \sum_{t=1}^T \langle \ell_t,\pi_t\rangle - \min_{j\in[d]} (S_T)_j.
    \]
    For a fixed realization $(\ell_t)_{t\le T}$, write $R := \mathrm{Regret}((\ell_t)_{t\le T})$, and $\mathcal{L} \ =\ \mathbb{E}[h(R)]$, where $h(x)=x^2$. First, compute the derivative w.r.t.\ $\pi_t$. We have:
    \[
    \frac{\partial R}{\partial \pi_t} = \ell_t,
    \]
    since only the term $\langle \ell_t,\pi_t\rangle$ depends on $\pi_t$ and the comparator $\min_j (S_T)_j$ is constant with respect to the parameters.
    
    Let $J_{\mathrm{sm}}(z_t)\in\mathbb{R}^{d\times d}$ denote the Jacobian of $\softmax$ at $z_t$, i.e.
    \[
    J_{\mathrm{sm}}(z_t)
    \ =\ \frac{\partial\,\operatorname{\softmax}(z_t)}{\partial z_t}
    \ =\ \Diag(\pi_t) - \pi_t\pi_t^\intercal,\qquad \pi_t=\operatorname{\softmax}(z_t). 
    \]
    Then
    \[
    \frac{\partial R}{\partial z_t}
    = J_{\mathrm{sm}}(z_t)^\intercal\,\frac{\partial R}{\partial \pi_t}
    = J_{\mathrm{sm}}(z_t)^\intercal \ell_t.
    \]
    By the chain rule for $h(R)$,
    \[
    \frac{\partial}{\partial z_t} h(R)\ =\ h'(R)\,\frac{\partial R}{\partial z_t}
    = 2R\,J_{\mathrm{sm}}(z_t)^\intercal \ell_t.
    \]
    
    We define the (random) vector
    \[
    \Delta_t\ :=\ \frac{\partial}{\partial z_t} h(R)
    \ =\ 2\,\mathrm{Regret}((\ell_s)_{s\le T})\;J_{\mathrm{sm}}(z_t)^\intercal \ell_t
    \ \in\ \mathbb{R}^d.
    \]
    The mapping $(z_1,\dots,z_T)\mapsto h(R)$ is $C^1$, and by dominated convergence ($\softmax$ and $h$ are smooth, Gaussian tails), we may exchange gradient and expectation, so for any perturbation $(\delta z_t)_{t\le T}$:
    \[
    \delta \mathcal{L}\ =\ \mathbb{E}\Big[\sum_{t=1}^T \Big\langle \Delta_t,\ \delta z_t\Big\rangle \Big].
    \]
    
    For arbitrary perturbations $\delta V,\delta a,\delta v$ (with $\delta v_c=\delta v\,\pmb{1}_d$), $\delta z_t$ is given by
    \begin{align*}
    \delta z_t
    &= \sum_{i=1}^t \Big(\delta V\,\ell_i\ell_i^\intercal a + V\,\ell_i\ell_i^\intercal \delta a\Big)
     + \sum_{i=1}^t \Big(\delta V\,\ell_i + v_c\,\langle \delta a,\ell_i\rangle + \delta v_c\,a^\intercal \ell_i\Big)
     + \sum_{i=1}^t \delta v_c\\
    &= \underbrace{\sum_{i=1}^t \big(\delta V\,\ell_i\ell_i^\intercal a + \delta V\,\ell_i\big)}_{(\ast_V)}
    \ +\ \underbrace{\sum_{i=1}^t \big(V\,\ell_i\ell_i^\intercal \delta a + v_c\,\ell_i^\intercal \delta a\big)}_{(\ast_a)}
    \ +\ \underbrace{\sum_{i=1}^t \big(\delta v_c\,a^\intercal\ell_i + \delta v_c\big)}_{(\ast_v)}.
    \end{align*}
    
    We now instantiate at $(V,a,v_c)=(-k I_{d\times d},0,v\pmb{1}_d)$, so that $S_t=\sum_{i=1}^t \ell_i$ and
    \[
    (\ast_V) = \delta V\,S_t,\qquad
    (\ast_a) = \big(-k\sum_{i=1}^t \ell_i\ell_i^\intercal + v\,\pmb{1}_d S_t^\intercal\big)\delta a,\qquad
    (\ast_v) = t\,\delta v\,\pmb{1}_d.
    \]
    Thus at $(V,a,v_c)=(-k I_{d\times d},0,v\pmb{1}_d)$ we may write
    \begin{equation}\label{eq:master}
    \delta \mathcal{L}\ =\ \mathbb{E}\Big[\sum_{t=1}^T \langle \Delta_t,\ \delta z_t\rangle \Big],
    \quad
    \delta z_t\ =\ \delta V\,S_t\ +\ \big(-k\sum_{i=1}^t \ell_i\ell_i^\intercal + v\,\pmb{1}_d S_t^\intercal\big)\delta a\ +\ t\,\delta v\,\pmb{1}_d,
    \end{equation}
    with
    \[
    \Delta_t\ =\ 2\,\mathrm{Regret}((\ell_s)_{s\le T})\;\big(\Diag(\pi_t)-\pi_t\pi_t^\intercal\big)^\intercal \ell_t,
    \quad
    \pi_t=\operatorname{\softmax}(z_t).
    \]
    
    \paragraph{(i) The $v$-direction.}
    Using $\delta z_t^{(v)} = t\,\delta v\,\pmb{1}_d$ in \eqref{eq:master} we obtain
    \[
    \delta \mathcal L^{(v)}
    = \delta v\,\sum_{t=1}^T t\,\mathbb E\big[\langle \Delta_t,\pmb{1}_d\rangle\big].
    \]
    Now
    \[
    \Delta_t
    = 2R\big(\Diag(\pi_t)-\pi_t\pi_t^\intercal\big)\ell_t,\qquad \pi_t=\operatorname{\softmax}(z_t),
    \]
    and hence
    \begin{align*}
    \langle \Delta_t,\pmb{1}_d\rangle
    &= 2R\,\pmb{1}_d^\intercal\big(\Diag(\pi_t)-\pi_t\pi_t^\intercal\big)\ell_t\\
    &= 2R\,\big(\underbrace{\pmb{1}_d^\intercal \Diag(\pi_t)}_{=\pmb{1}_d^\intercal\pi_t=1}
              -\underbrace{\pmb{1}_d^\intercal \pi_t\pi_t^\intercal}_{=1\cdot\pi_t^\intercal}\big)\ell_t
    = 2R\,(1-1)\,\pi_t^\intercal \ell_t
    = 0.
    \end{align*}
    Thus each term in the sum vanishes pointwise, so $\delta\mathcal L^{(v)}=0$ for all $\delta v$, i.e.
    \[
    \frac{\partial\mathcal L}{\partial v}=0,
    \]
    which matches (and refines) the earlier shift-invariance argument.
    
    \paragraph{(ii) The $a$-direction at $a=\pmb{0}_d$.}
    At $(V,a,v_c)=(-k I_{d\times d},0,v\pmb{1}_d)$, \eqref{eq:master} gives
    \begin{align*}
    \delta\mathcal L^{(a)}
    &= \mathbb E\Big[\sum_{t=1}^T
      \Big\langle \Delta_t,\ \big(-k\sum_{i=1}^t \ell_i\ell_i^\intercal + v\,\pmb{1}_d S_t^\intercal\big)\delta a\Big\rangle\Big]\\
    &= -k\,\mathbb E\Big[\sum_{t=1}^T\Big\langle \Delta_t,\ \sum_{i=1}^t \ell_i\ell_i^\intercal \delta a\Big\rangle\Big]
    \ +\ v\,\mathbb E\Big[\sum_{t=1}^T \langle \Delta_t,\pmb{1}_d S_t^\intercal\delta a\rangle\Big].
    \end{align*}
    The second term vanishes because $\pmb{1}_d^\intercal \Delta_t = 0$:
    \[
    \langle \Delta_t,\pmb{1}_d S_t^\intercal\delta a\rangle
    = (\pmb{1}_d^\intercal \Delta_t)\,\langle S_t,\delta a\rangle
    = 0.
    \]
    Hence
    \[
    \delta\mathcal L^{(a)}
    = -k\,\Big\langle \mathbb E\Big[\sum_{t=1}^T\sum_{i=1}^t \ell_i\ell_i^\intercal \Delta_t\Big],\ \delta a\Big\rangle.
    \]
    We may write
    \[
    \nabla_a \mathcal L\big|_{(V,a,v_c)=(-k I_{d\times d},0,v\pmb{1}_d)}
    = -k\,\mathbb E\Big[\sum_{t=1}^T\sum_{i=1}^t \ell_i\ell_i^\intercal \Delta_t\Big]\ \in\ \mathbb R^d.
    \]
    
    Set
    \[
    H := \mathbb{E}\Big[\sum_{t=1}^T \sum_{i=1}^t \ell_i\ell_i^\intercal \Delta_t\Big]\in\mathbb{R}^d,
    \]
    so that $\nabla_a \mathcal L = -k H$. We claim $H=\pmb{0}_d$.
    
    To see this, use the \emph{permutation invariance} of the law of $(\ell_t)_{t\le T}$ (it is spherically symmetric since $\ell_t \sim \mathcal{N}(0, I_{d\times d})$ i.i.d.) and the \emph{permutation equivariance}
    of $\softmax$: for any permutation matrix $P$, $P\,\operatorname{\softmax}(x)=\operatorname{\softmax}(Px)$.
    Under $(\ell_t)\mapsto (P\ell_t)$ we have $S_t\mapsto PS_t$ and hence $z_t\mapsto -k\,PS_t$, so
    $\pi_t\mapsto P\pi_t$ and $\Delta_t\mapsto P\Delta_t$. Therefore
    \[
    H\ =\ \mathbb{E}\Big[\sum_{t,i} (P\ell_i)(P\ell_i)^\intercal (P\Delta_t)\Big]
    \ =\ P\,\mathbb{E}\Big[\sum_{t,i}\ell_i\ell_i^\intercal \Delta_t\Big]\ =\ P\,H\quad\text{for all permutations }P.
    \]
    The only vectors fixed by all coordinate permutations are multiples of $\pmb{1}_d$. On the other hand,
    \[
    \pmb{1}_d^\intercal H
    = \mathbb{E}\Big[\sum_{t,i} \langle \pmb{1}_d,\ \ell_i\ell_i^\intercal \Delta_t\rangle\Big]
    = \mathbb{E}\Big[\sum_{t,i} \langle \ell_i, \Delta_t\rangle\cdot\langle \pmb{1}_d,\ell_i\rangle\Big].
    \]
    By symmetry the joint law is invariant under $\ell_i\mapsto -\ell_i$ while $\Delta_t$ changes sign exactly as
    $\ell_t$ does through the $\softmax$ Jacobian (the mapping is odd along each coordinate in expectation); thus
    $\pmb{1}_d^\intercal H=0$. Hence the only permutation-fixed vector orthogonal to $\pmb{1}_d$ is $\pmb{0}_d$,
    so $H=\pmb{0}_d$ and therefore
    \[
    \big.\nabla_a \mathcal{L}\big|_{a=\pmb{0}}=\pmb{0}_d.
    \]
    
    \paragraph{(iii) The $V$-direction at $V=-k I_{d\times d}$.}
    From $(\ast_V)$, at $a=0$ we have $\delta z_t = \delta V\,S_t$. Using \eqref{eq:master} and writing the Frobenius product,
    \begin{align*}
    \delta \mathcal{L}
    &= \mathbb{E}\Big[\sum_{t=1}^T \langle \Delta_t,\ \delta V S_t\rangle\Big]
    = \mathbb{E}\Big[\sum_{t=1}^T \langle \Delta_t S_t^\intercal,\ \delta V\rangle_F\Big]\\
    &= \Big\langle \underbrace{\mathbb{E}\Big[\sum_{t=1}^T \Delta_t S_t^\intercal\Big]}_{=:G(V)},\ \delta V\Big\rangle_F,
    \end{align*}
    so $\nabla_V \mathcal{L}=G(V)$. Evaluate at $V=-k I_{d\times d}$. For every
    permutation matrix $P$, the Gaussian law is invariant under
    $(\ell_t)\mapsto(P\ell_t)$, and $\softmax$ is permutation-equivariant.  Thus
    $(S_t,\Delta_t)\stackrel{d}{=}(PS_t,P\Delta_t)$ and
    \[
    G(-k I_{d\times d})\ =\ \mathbb{E}\Big[\sum_{t=1}^T \Delta_t S_t^\intercal\Big]
    \ =\ \mathbb{E}\Big[\sum_{t=1}^T (P\Delta_t)(PS_t)^\intercal\Big]
    \ =\ P\,G(-k I_{d\times d})\,P^\intercal
    \]
    for all permutation matrices $P$. By \Cref{lem:permutation-commutant}, there
    are scalars $\alpha(k),\beta(k)$ such that
    \[
    G(-k I_{d\times d})=\alpha(k)I_{d\times d}
    +\beta(k)\pmb{1}_d\pmb{1}_d^\intercal.
    \]
    Moreover, $\pmb{1}_d^\intercal\Delta_t=0$ for every $t$ because
    $J_{\mathrm{sm}}(z_t)^\intercal\pmb{1}_d=\pmb{0}_d$, and hence
    $\pmb{1}_d^\intercal G(-k I_{d\times d})=\pmb{0}_d^\intercal$. Therefore
    \[
    \nabla_V \mathcal{L}(-k I_{d\times d},0,v\pmb{1}_d)
    =\alpha(k)\Pi_0,\qquad
    \Pi_0:=I_{d\times d}-\frac{1}{d}\pmb{1}_d\pmb{1}_d^\intercal.
    \]
    
    Let $k^\star$ be any stationary point of the one-dimensional function 
    \[
    \phi(k):=\mathcal{L}(V=-k I_{d\times d},a=0,v_c=v\pmb{1}_d),
    \]
    which exists by smoothness and standard compactification/continuity
    arguments; at a local minimizer we certainly have $\phi'(k^\star)=0$. Then
    \[
    0\ =\ \phi'(k^\star)\ =\ \Big\langle \nabla_V \mathcal{L}(-k^\star I,0,v\pmb{1}_d),\ \frac{\partial(-k I_{d\times d})}{\partial k}\Big\rangle_F
    \ =\ \Big\langle \alpha(k^\star)\Pi_0,\ -I\Big\rangle_F \ =\ -(d-1)\alpha(k^\star),
    \]
    so $\alpha(k^\star)=0$ and hence $\nabla_V \mathcal{L}(-k^\star I,0,v\pmb{1}_d)=\pmb{0}$.
    
    \paragraph{Conclusion.}
    At the parameter triple $(V,a,v_c)=(-k^\star I_{d\times d},\ \pmb{0}_d,\ v\,\pmb{1}_d)$ we have
    \[
    \nabla_V \mathcal{L}=\pmb{O}_{d\times d},\qquad \nabla_a \mathcal{L}=\pmb{0}_d,\qquad \nabla_{v_c} \mathcal{L}=\pmb{0}_d.
    \]
    Thus the configuration is a first-order stationary point of $\mathcal{L}$.
    \end{proof}
    
    \begin{remark}
    The argument uses only: (1) $\softmax$ shift-invariance and its Jacobian $J_{\mathrm{sm}}(z)=\Diag(\pi)-\pi\pi^\intercal$;
    (2) permutation invariance of the Gaussian hierarchical model $\ell_t \sim \mathcal{N}(0, I_{d\times d})$ i.i.d.; and (3) the zero-sum range of the $\softmax$ Jacobian.
    \end{remark}
    Now we show that $k$ will have a scale of $\Theta(1/\sqrt{T})$.
\begin{proof}[{\color{blue}\textbf{Proof for the scale of $k$}}]

        Fix integers $d\ge 2$ and $T\ge 1$. Let $(\ell_t)_{t=1}^\intercal$ be i.i.d.\ $\mathcal N(0,I_{d\times d})$, and define partial sums
        $S_t:=\sum_{i=1}^t \ell_i$ with $S_0:=0$, exactly as in the previous part of the proof.
        
        For $k\in\mathbb R$, the $\softmax$ policy induced by $V=-kI_{d\times d}$, $a=\pmb{0}_d$, $v_c=v\pmb{1}_d$ reads
        \[
        \pi_t(k) := \operatorname{\softmax}(-k S_{t-1})\in\Delta_d,
        \qquad
        J_t(k):=J_{\mathrm{sm}}(-k S_{t-1})=\Diag(\pi_t(k))-\pi_t(k)\pi_t(k)^\intercal,
        \]
        where we re-use the Jacobian notation $J_{\mathrm{sm}}$ from the first part.
        Let the regret against the best fixed coordinate be
        \[
        R_T(k):=\sum_{t=1}^T \langle \ell_t,\pi_t(k)\rangle - \min_{j\in[d]} S_{T,j},
        \]
        and the objective $\mathcal L(k):=\mathbb E[R_T(k)^2]$.
        Let $J^\star:=\arg\min_{j\in[d]} S_{T,j}$ (unique a.s.).
        Write $u:=\tfrac1d\pmb{1}_d$ and $J_0:=\Diag(u)-uu^\intercal=\tfrac1d \Pperp{\pmb{1}_d}$.
        
        We will use the following two elementary facts.

    \begin{fact}\label{fact:conditional-expectation}
        For each $t\le T$, $\EE[S_{t-1}\mid S_T]=\frac{t-1}{T}S_T$ and 
        $S_{t-1}-\frac{t-1}{T}S_T \perp\!\!\!\perp S_T$.
        Hence
        \[
        \EE\big[S_{t-1,J^\star}\big]
        =\frac{t-1}{T}\,\EE\!\left[\min_{1\le j\le d} S_{T,j}\right].
        \]
        \end{fact}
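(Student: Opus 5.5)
The plan is to treat the path $(S_1,\dots,S_T)$ of a Gaussian random walk in $\mathbb{R}^d$ as a Gaussian process indexed by $t$ and to project it onto the endpoint $S_T$.

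\textbf{Decomposition.} Fix $t\le T$ and define the residual
\[
W_{t-1}:=S_{t-1}-\frac{t-1}{T}S_T .
\]
The pair $(W_{t-1},S_T)$ is jointly Gaussian, since it is a linear image of the i.i.d.\ standard Gaussian vectors $\ell_1,\dots,\ell_T$. Hence independence follows from vanishing cross-covariance.

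\textbf{Covariance computation.} Coordinates decouple because $\ell_s\sim\mathcal N(\pmb 0_d,I_{d\times d})$, so $\mathrm{Cov}(S_{t-1},S_T)=(t-1)I_{d\times d}$ and $\mathrm{Cov}(S_T,S_T)=T\,I_{d\times d}$. Therefore
\[
\mathrm{Cov}(W_{t-1},S_T)=(t-1)I_{d\times d}-\frac{t-1}{T}\,T\,I_{d\times d}=\pmb O_{d\times d}.
\]
Joint Gaussianity then gives $W_{t-1}\perp\!\!\!\perp S_T$. In particular $\EE[W_{t-1}\mid S_T]=\EE[W_{t-1}]=\pmb 0_d$, which yields $\EE[S_{t-1}\mid S_T]=\frac{t-1}{T}S_T$. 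The case $t=1$ is trivial, since $S_0=0$.

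\textbf{Identity for the minimizing coordinate.} The index $J^\star=\arg\min_j S_{T,j}$ is a measurable function of $S_T$; it is unique almost surely because ties have probability zero under the nondegenerate Gaussian law. I would write
\[
S_{t-1,J^\star}=\sum_{j=1}^d \mathbbm{1}\{J^\star=j\}\,S_{t-1,j}
\]
and condition on $S_T$. Each indicator is $\sigma(S_T)$-measurable, so the tower property gives
\[
\EE[S_{t-1,J^\star}]=\EE\Big[\sum_j \mathbbm{1}\{J^\star=j\}\,\tfrac{t-1}{T}S_{T,j}\Big]=\tfrac{t-1}{T}\,\EE\big[S_{T,J^\star}\big]=\tfrac{t-1}{T}\,\EE\big[\min_j S_{T,j}\big].
\]
Integrability holds because Gaussian coordinates have finite first moments, and $|S_{t-1,J^\star}|\le\max_j|S_{t-1,j}|$.

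There is no real obstacle here. The only step needing care is justifying that $J^\star$ is $\sigma(S_T)$-measurable, so that the conditioning is legitimate. This holds because $J^\star$ can be taken as the smallest index attaining the minimum, which is a Borel function of $S_T$.
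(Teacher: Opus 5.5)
Your proof is correct and takes the route the paper intends. The paper states this as an elementary fact without a separate proof, and the ``Hence'' in the statement points to your argument: zero cross-covariance plus joint Gaussianity makes $S_{t-1}-\frac{t-1}{T}S_T$ independent of $S_T$, and conditioning on $S_T$, with $J^\star$ a $\sigma(S_T)$-measurable index, gives the identity.
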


        \begin{fact}[Exact derivative identity]\label{lem:derivative}
        For every $k\in\mathbb R$, 
        \begin{equation}\label{eq:split}
            \mathcal L'(k)
            = -2\underbrace{\sum_{t=1}^T \mathbb E\!\big[\langle J_t(k)S_{t-1},\,\pi_t(k)-u\rangle\big]}_{=:\Sigma_1}
            \;+\; 2\underbrace{\sum_{t=1}^T \mathbb E\!\big[\langle J_t(k)S_{t-1},\,e_{J^\star}\rangle\big]}_{=:\Sigma_2}.
            \end{equation}             
        \end{fact}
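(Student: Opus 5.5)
The plan is to differentiate under the expectation and then remove the explicit factor \(\ell_t\) using Gaussian integration by parts (Stein's lemma), exploiting the fact that \(\pi_t(k)\) and \(J_t(k)\) depend only on \(\ell_1,\dots,\ell_{t-1}\).

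\textbf{Step 1: differentiate in \(k\).} The comparator \(\min_j S_{T,j}\) does not depend on \(k\). Since \(\frac{d}{dk}\softmax(-kS_{t-1}) = -J_t(k)S_{t-1}\), we get
\[
R_T'(k) = -\sum_{t=1}^T \langle \ell_t, J_t(k)S_{t-1}\rangle .
\]
The integrand \(R_T(k)^2\) is smooth in \(k\), and its \(k\)-derivative is dominated by a polynomial in \(\max_t\|\ell_t\|\), because \(\|J_t\|\le 1\) and \(\|\pi_t\|\le 1\). Dominated convergence therefore gives
\[
\mathcal L'(k) = -2\sum_{t=1}^T \mathbb E\bigl[R_T(k)\,\langle \ell_t, J_t(k)S_{t-1}\rangle\bigr].
\]

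\textbf{Step 2: Stein's lemma in \(\ell_t\).} Condition on \((\ell_s)_{s\ne t}\). The vector \(w_t := J_t(k)S_{t-1}\) is then fixed, since it depends only on \(\ell_1,\dots,\ell_{t-1}\), and \(\ell_t\sim\mathcal N(0,I_{d\times d})\) is independent of it. The map \(\ell_t\mapsto R_T(k)\) is locally Lipschitz with polynomially bounded gradient: the term \(\min_j S_{T,j}\) is \(1\)-Lipschitz in \(\ell_t\) with a.e.\ gradient \(e_{J^\star}\), and \(J^\star\) is a.s.\ unique. Stein's identity \(\mathbb E[\langle \ell_t,w\rangle F]=\mathbb E[\langle w,\nabla_{\ell_t}F\rangle]\) then yields
\[
\mathbb E\bigl[R_T\langle \ell_t,w_t\rangle\bigr]=\mathbb E\bigl[\langle w_t,\nabla_{\ell_t}R_T\rangle\bigr].
\]
Here
\[
\nabla_{\ell_t}R_T=\pi_t(k)+\sum_{s>t}\nabla_{\ell_t}\langle \ell_s,\pi_s(k)\rangle-e_{J^\star},
\qquad
\nabla_{\ell_t}\langle\ell_s,\pi_s\rangle=-k\,J_s(k)\ell_s \quad (s>t),
\]
because \(\pi_s\) depends on \(\ell_t\) only through \(S_{s-1}\).

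\textbf{Step 3: kill the cross terms and insert \(u\).} For \(s>t\), the quantity \(\langle w_t, J_s\ell_s\rangle\) is linear in \(\ell_s\). Its coefficient \(J_s w_t\) depends only on \(\ell_1,\dots,\ell_{s-1}\), which is independent of \(\ell_s\), so conditioning on the past gives zero expectation. Next, \(J_t^\intercal\pmb{1}_d=\pmb{0}_d\) implies \(\langle w_t,u\rangle=0\), so \(\pi_t\) may be replaced by \(\pi_t-u\) at no cost. Summing over \(t\) gives
\[
\mathcal L'(k)=-2\sum_t\mathbb E\bigl[\langle J_tS_{t-1},\pi_t-u\rangle\bigr]+2\sum_t\mathbb E\bigl[\langle J_tS_{t-1},e_{J^\star}\rangle\bigr],
\]
which is exactly \eqref{eq:split}.

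\textbf{Main obstacle.} The only delicate point is justifying Stein's lemma for the nonsmooth term \(\min_j S_{T,j}\). I would handle it by noting that this term is Lipschitz, hence weakly differentiable with gradient \(e_{J^\star}\) a.e., and the Gaussian integration-by-parts formula holds for such functions with polynomial growth. Alternatively, one can replace \(\min\) by a log-sum-exp smoothing and pass to the limit by dominated convergence.
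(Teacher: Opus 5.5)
Your proposal is correct and follows the paper's proof step for step. Both arguments differentiate under the expectation, apply Stein's lemma in $\ell_t$ with $v=J_t(k)S_{t-1}$, compute $\nabla_{\ell_t}R_T(k)=\pi_t(k)-e_{J^\star}-k\sum_{s>t}J_s(k)\ell_s$, drop the cross terms by conditioning on the past, and insert $u$ using $\pmb{1}_d^\intercal J_t(k)=\pmb{0}_d^\intercal$. Your two departures are minor refinements of points the paper treats briefly: you condition on all of $(\ell_s)_{s\neq t}$ rather than only on $\mathcal F_{t-1}$, and you explicitly justify Stein's identity for the merely Lipschitz comparator $\min_j S_{T,j}$.
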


        \begin{proof}
        Since the comparator $\min_j S_{T,j}$ is $k$-independent,
        \[
        \frac{d}{dk}\sum_{t=1}^T \langle \ell_t,\pi_t(k)\rangle
        = -\sum_{t=1}^T \langle \ell_t, J_t(k) S_{t-1}\rangle.
        \]
        Thus $\mathcal L'(k)=2\,\mathbb E\!\big[R_T(k)\cdot \tfrac{d}{dk}\sum_t \langle \ell_t,\pi_t(k)\rangle\big]
        = -2\sum_t \mathbb E\!\big[R_T(k)\langle \ell_t, J_t(k) S_{t-1}\rangle\big]$.
        
        Fix $t$ and condition on $\mathcal F_{t-1}=\sigma(\ell_1,\dots,\ell_{t-1})$.
        Stein’s lemma (\Cref{lem:stein}) for $\ell_t\sim\mathcal N(0,I_{d\times d})$ yields
        \[
        \mathbb E\!\big[\langle \ell_t,v\rangle R_T(k)\mid\mathcal F_{t-1}\big]
        =\mathbb E\!\big[\langle v,\nabla_{\ell_t}R_T(k)\rangle\mid\mathcal F_{t-1}\big],
        \quad v=J_t(k) S_{t-1}.
        \]
        Because $\partial_{\ell_t}\langle \ell_t,\pi_t(k)\rangle=\pi_t(k)$, for $s>t$ one has
        $\partial_{\ell_t}\langle \ell_s,\pi_s(k)\rangle=-k\,J_s(k) \ell_s$, and
        $\partial_{\ell_t}\min_j S_{T,j}=e_{J^\star}$ almost surely; hence
        \(
        \nabla_{\ell_t}R_T(k)=\pi_t(k) - e_{J^\star} - k\sum_{s>t} J_s(k) \ell_s.
        \)
        Therefore
        \[
        \mathbb E\!\big[R_T(k)\langle \ell_t,v\rangle\big]
        =\mathbb E[\langle v,\pi_t(k)-e_{J^\star}\rangle]
        - k\sum_{s>t}\mathbb E[\langle v, J_s(k)\ell_s\rangle].
        \]
        Conditioning on $\mathcal F_{s-1}$, the second term is $0$ because $\mathbb E[\ell_s\mid \mathcal F_{s-1}]=0$.
        Summing in $t$ gives the claim. Here, we also used that $J_s(k)\pmb{1}_d = \pmb{0}_d$ for all $s$ and $k$ by definition of $J_s(k)$.
        \end{proof}

Now, we provide a first-order expansion of the policy with bounds on the remainder term.
\begin{proposition}[First–order expansion of the policy]\label{prop:policy-expansion}
    The following equation and inequalities hold:
    \begin{equation}\label{eq:policy-expansion}
    \pi_t(k) - u \;=\; -\,J_0\,k\,S_{t-1} \;+\; r_t,\qquad
    \|r_t\|_2 \le \frac{3}{4}\,k^2\,\|S_{t-1}\|_2^2, \qquad 
    \|r_t\|_\infty \le k^2\,\|S_{t-1}\|_\infty^2.
    \end{equation}
    \end{proposition}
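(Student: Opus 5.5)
The plan is to treat \eqref{eq:policy-expansion} as a second-order Taylor expansion of $\softmax$ around the origin along the ray $\tau\mapsto \tau x$, where $x:=-kS_{t-1}$. Write $\pi(\tau):=\softmax(\tau x)$, so that $\pi(0)=u$ and $\pi(1)=\pi_t(k)$. Differentiating once gives $\pi'(\tau)=J_{\mathrm{sm}}(\tau x)x$, hence $\pi'(0)=J_0x=-J_0kS_{t-1}$; this is the linear term. Taylor's formula with integral remainder then gives
\[
r_t=\pi(1)-\pi(0)-\pi'(0)=\int_0^1(1-s)\,\pi''(s)\,ds .
\]
Both bounds therefore reduce to a pointwise bound on $\pi''(s)$ that is uniform in $s$. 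Each bound picks up a factor $\int_0^1(1-s)\,ds=\tfrac12$ from this formula.

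Next I would compute the second derivative explicitly. Fix $s$ and write $p=\softmax(sx)$, $m=\langle p,x\rangle$ and $\sigma^2=\sum_ip_i(x_i-m)^2$, the variance of $x$ under $p$. Differentiating $p_i'=p_i(x_i-m)$ once more, using $m'=\sigma^2$, yields
\[
\pi_i''(s)=p_i\bigl((x_i-m)^2-\sigma^2\bigr).
\]
This is a routine calculation; everything else is elementary inequalities.

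For the $\ell_\infty$ bound, put $M:=\|x\|_\infty=k\|S_{t-1}\|_\infty$. The key observation is $x_i-m=\sum_j p_j(x_i-x_j)$, so $|x_i-m|\le 2M(1-p_i)$. This gives $p_i(x_i-m)^2\le 4M^2p_i(1-p_i)^2\le \tfrac{16}{27}M^2$. Trivially $p_i\sigma^2\le\sigma^2\le M^2$. Since $\pi_i''(s)$ is a difference of two nonnegative quantities, each at most $M^2$, we get $|\pi_i''(s)|\le M^2$. Integrating yields $\|r_t\|_\infty\le \tfrac12 k^2\|S_{t-1}\|_\infty^2$, which is within the claimed bound.

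For the $\ell_2$ bound, use the triangle inequality together with $\|\cdot\|_2\le\|\cdot\|_1$ on the nonnegative vector $p\odot(x-m)^2$:
\[
\|\pi''(s)\|_2\le \|p\odot(x-m)^2\|_1+\sigma^2\|p\|_2\le 2\sigma^2 .
\]
A variance is at most a quarter of the squared range, so $\sigma^2\le (x_{\max}-x_{\min})^2/4\le \|x\|_2^2/2$, using $(x_{\max}-x_{\min})^2\le 2(x_{\max}^2+x_{\min}^2)$. Hence $\|\pi''(s)\|_2\le\|x\|_2^2$ and $\|r_t\|_2\le\tfrac12k^2\|S_{t-1}\|_2^2\le\tfrac34k^2\|S_{t-1}\|_2^2$.

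The main obstacle is obtaining constants uniform in $d$ and in $s$, rather than the crude factor $2M^2$ or $2\|x\|^2$ that a naive bound gives. The trick $|x_i-m|\le2M(1-p_i)$, together with the variance–range inequality, is what closes this gap, so I would verify those two inequalities carefully first.
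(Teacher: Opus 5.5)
Your proof is correct, but it goes one order deeper than the paper's. The paper stops at first order. It writes $r_t=\int_0^1\bigl(J_{\mathrm{sm}}(\theta x)-J_0\bigr)x\,d\theta$ with $x=-kS_{t-1}$ and inserts the generic Jacobian perturbation bounds of Lemma~\ref{lem:softmax}: $\|J_{\mathrm{sm}}(z)-J_{\mathrm{sm}}(z')\|_{\op}\le\frac32\|z-z'\|_2$ from part (d), and $\|J_{\mathrm{sm}}(\theta z)-J_0\|_{\infty\to\infty}\le 2\theta\|z\|_\infty$ from part (e). Integrating in $\theta$ gives exactly the constants $\frac34$ and $1$ in the statement.

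Since $\pi''(\tau)=DJ_{\mathrm{sm}}(\tau x)[x]\,x$, your remainder $\int_0^1(1-\tau)\pi''(\tau)\,d\tau$ is the same integral after exchanging the order of integration. The real difference is that you bound the vector $DJ_{\mathrm{sm}}(\tau x)[x]\,x$ directly. The paper instead bounds the operator norm of $DJ_{\mathrm{sm}}(\tau x)[x]$ and multiplies by $\|x\|$. Because the perturbation acts on its own direction, the variance structure $p_i\bigl((x_i-m)^2-\sigma^2\bigr)$ becomes visible, and that is what yields your sharper constant $\frac12$ in both norms.

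What the paper's route buys is modularity. The operator-norm bounds of Lemma~\ref{lem:softmax} are needed anyway; for example, Proposition~\ref{prop:jac-expansion} bounds $J_t(k)-J_0$ as an operator. So the paper obtains this proposition as a two-line corollary of machinery it reuses in later remainder estimates. Your route is self-contained and tighter.

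One simplification: the $(1-p_i)$ trick is not needed for the $\ell_\infty$ bound. The term $p_i(x_i-m)^2$ is a single nonnegative summand of $\sigma^2$, so both pieces of $\pi_i''(s)$ lie in $[0,\sigma^2]$. Hence $|\pi_i''(s)|\le\sigma^2\le M^2$ at once.
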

    
    \begin{proof}
    Use the fundamental theorem of calculus on the ray $\theta\mapsto \theta z_t$:
    \[
    \pi_t(k)-u \;=\; \operatorname{\softmax}(-kS_{t-1})-\operatorname{\softmax}(0) \;=\;  \int_0^1 J_{\mathrm{sm}}(\theta (-kS_{t-1}))\,(-kS_{t-1})\,d\theta.
    \]
    Add and subtract $- J_0 kS_{t-1}$ and define
    \(
    r_t:=\int_0^1 \big(J_{\mathrm{sm}}(\theta (-kS_{t-1}))-J_0\big)\,(-kS_{t-1})\,d\theta,
    \)
    Then by \Cref{lem:softmax}-(d),
    \[
    \|r_t\|_2 \;\le\; \int_0^1 \big\|J_{\mathrm{sm}}(\theta (-kS_{t-1}))-J_0\big\|_{\mathrm{op}}\,\|(-kS_{t-1})\|_2\,d\theta
    \;\le\; \int_0^1 \frac{3}{2}\theta\|kS_{t-1}\|_2^2\,d\theta
    = \frac{3}{4}\|kS_{t-1}\|_2^2.
    \]
    Thus \eqref{eq:policy-expansion} holds with $C_1:=\frac{3}{4}$.
    Moreover, using \Cref{lem:softmax}-(e) with $h=-kS_{t-1}$,
    \[
    \bigl\|J_{\sm}(\theta (-kS_{t-1}))-J_0\bigr\|_{\infty\to\infty}
    \le 2\theta\,|k|\,\|S_{t-1}\|_\infty.
    \]
    Therefore, the following inequality holds:
    \[
    \|r_t\|_\infty
    \le \int_0^1 \bigl\|J_{\sm}(-\theta kS_{t-1})-J_0\bigr\|_{\infty\to\infty}\,\|kS_{t-1}\|_\infty\,d\theta
    \le \int_0^1 2\theta\,|k|\,\|S_{t-1}\|_\infty\,|k|\,\|S_{t-1}\|_\infty\,d\theta
    = k^2\|S_{t-1}\|_\infty^2.
    \]
    \end{proof}

    \begin{proposition}[First–order expansion of the Jacobian]\label{prop:jac-expansion}
        There exists an absolute $L>0$ such that
        \begin{equation}\label{eq:jac-expansion}
        J_t(k) \;=\; J_0 \;+\; R_t,
        \qquad
        \|R_t\|_{\mathrm{op}} \;\le\; L\,k\,\|S_{t-1}\|_2 \qquad \norm{R_t}_{\infty \to \infty} \le 2 k \|S_{t-1}\|_\infty.
        \end{equation}
        \end{proposition}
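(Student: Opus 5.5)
We will prove \Cref{prop:jac-expansion} by writing $J_t(k)-J_0$ as a line integral along the ray $\theta\mapsto -\theta kS_{t-1}$, $\theta\in[0,1]$. Set $h:=-kS_{t-1}$ and $R_t:=J_{\sm}(h)-J_{\sm}(0)$, where $J_{\sm}(0)=J_0$ because $\softmax(0)=u$. The first step is the product-rule identity
\[
J_{\sm}(h)-J_0=\Diag(\pi-u)-(\pi-u)\pi^\intercal-u(\pi-u)^\intercal,\qquad \pi=\softmax(h).
\]
Everything then reduces to controlling $\pi-u$.

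For the $\infty\to\infty$ bound, note that the $\infty\to\infty$ norm is the maximum absolute row sum. Each of the three terms contributes at most a constant times $\|\pi-u\|_\infty$ or $\|\pi-u\|_1$, because $\|\pi\|_1=\|u\|_1=1$. A sharper route is to invoke \Cref{lem:softmax}-(e) directly at $\theta=1$. That lemma already gives $\|J_{\sm}(\theta h)-J_0\|_{\infty\to\infty}\le 2\theta\|h\|_\infty$, the estimate used in the proof of \Cref{prop:policy-expansion}. Taking $\theta=1$ yields $\|R_t\|_{\infty\to\infty}\le 2|k|\,\|S_{t-1}\|_\infty$, which is the claimed bound for $k\ge0$.

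For the operator-norm bound, there are two options. The first is to cite \Cref{lem:softmax}-(d), which was used in \Cref{prop:policy-expansion} in the form $\|J_{\sm}(\theta h)-J_0\|_{\op}\le \tfrac32\theta\|h\|_2$; at $\theta=1$ it gives $L=3/2$. Alternatively, we can argue self-containedly from the identity above. Bound $\|\Diag(\pi-u)\|_{\op}\le\|\pi-u\|_\infty\le\|\pi-u\|_2$, $\|(\pi-u)\pi^\intercal\|_{\op}\le\|\pi-u\|_2\|\pi\|_2\le\|\pi-u\|_2$, and $\|u(\pi-u)^\intercal\|_{\op}\le d^{-1/2}\|\pi-u\|_2$. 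Then control $\|\pi-u\|_2$ by the mean value theorem on softmax, using $\|J_{\sm}(z)\|_{\op}\le \max_i\pi_i\le 1$ (the Jacobian is PSD with trace at most $1$). This gives $\|\pi-u\|_2\le\|h\|_2=|k|\|S_{t-1}\|_2$, so $L=3$ suffices as an absolute constant.

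The main obstacle is bookkeeping: the constants must be dimension-free. The self-contained route must use $\|\pi\|_2\le1$ and $\|u\|_2\le1$ rather than any bound growing with $d$. The $\infty\to\infty$ constant $2$ can also be verified directly if \Cref{lem:softmax}-(e) is unavailable. Row $i$ of $R_t$ has absolute sum at most $2|\pi_i-u_i|+\pi_i\|\pi-u\|_1+u_i\|\pi-u\|_1$, and this needs the finer estimate $|\pi_i-u_i|\le \max(\pi_i,u_i)(e^{2\|h\|_\infty}-1)$. That estimate is only linear for small $\|h\|_\infty$, so in that route we would fall back on the lemma.
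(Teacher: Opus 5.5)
Your proposal is correct and matches the paper, which treats this proposition as a direct re-instantiation of \Cref{lem:softmax}: part (e) at $\theta=1$ with $h=-kS_{t-1}$ gives the $\infty\to\infty$ bound with constant $2$, and part (d) with $z'=\pmb{0}_d$ gives the operator-norm bound with $L=3/2$. Your caveat that the bound should be read with $|k|$ (equivalently, for $k\ge 0$) is correct, and your self-contained operator-norm route via $J_{\sm}(h)-J_0=\Diag(\pi-u)-(\pi-u)\pi^\intercal-u(\pi-u)^\intercal$ is also valid. That route gives $L=3$, or $L=3/2$ if you use $\|J_{\sm}\|_{\op}\le 1/2$.
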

        \Cref{prop:jac-expansion} is a re-instantitation of \Cref{lem:softmax}-(e). 

    We now establish two auxiliary lemmas specific to this proof.

\begin{theorem}[Negativity for $k\le c/\sqrt{T d}$] \label{thm:smallk}
    There exists an absolute constant $c_\ast>0$ such that if $k = \tilde{\mathcal{O}}(1/\sqrt{T d})$, then $\mathcal L'(k)<0$.
    \end{theorem}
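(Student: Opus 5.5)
The plan is to evaluate both terms of the exact derivative identity \eqref{eq:split} to leading order in $k$, using \Cref{prop:policy-expansion} and \Cref{prop:jac-expansion}. For $k$ at the stated scale, the comparator term $2\Sigma_2$ should be strictly negative, of order $T^{3/2}\sqrt{\log d}/d$. The learner-response term $-2\Sigma_1$ should be positive but only of order $kT^2/d$. All remainders should be of lower order than $|\Sigma_2|$.

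\emph{Step 1 (leading part of $\Sigma_2$).} Write $J_t(k)=J_0+R_t$ and use $J_0=\tfrac1d\Pperp{\pmb{1}_d}$, so that
\[
\langle J_0S_{t-1},e_{J^\star}\rangle=\tfrac1d\bigl(S_{t-1,J^\star}-\bar S_{t-1}\bigr),
\qquad \bar S_{t-1}:=\tfrac1d\pmb{1}_d^\intercal S_{t-1}.
\]
\Cref{fact:conditional-expectation} gives $\EE[S_{t-1,J^\star}]=\tfrac{t-1}{T}\EE[\min_j S_{T,j}]$. The same conditioning gives $\EE[\bar S_{t-1}]=\tfrac{t-1}{T}\EE[\bar S_T]=0$. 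Since $S_T\sim\mathcal N(0,TI_{d\times d})$, we have $\EE[\min_j S_{T,j}]=-\sqrt T\,m_d$ with $m_d=\EE[\max_j g_j]\asymp\sqrt{\log d}$, and $m_d>0$ for $d\ge2$. Summing over $t$ gives
\[
\sum_t \EE\langle J_0S_{t-1},e_{J^\star}\rangle=-\frac{(T-1)\sqrt T\,m_d}{2d}.
\]

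\emph{Step 2 (remainder of $\Sigma_2$).} By \Cref{prop:jac-expansion}, $|\langle R_tS_{t-1},e_{J^\star}\rangle|\le\|R_tS_{t-1}\|_\infty\le 2k\|S_{t-1}\|_\infty^2$. A standard Gaussian maximal bound gives $\EE\|S_{t-1}\|_\infty^2\lesssim (t-1)\log d$. Hence this remainder is $O(kT^2\log d)$ in total.

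\emph{Step 3 ($\Sigma_1$).} Insert $\pi_t-u=-kJ_0S_{t-1}+r_t$ and $J_t=J_0+R_t$. The main piece is $-k\EE\|J_0S_{t-1}\|_2^2=-k(t-1)(d-1)/d^2$. Its contribution to $-2\Sigma_1$ is therefore $+O(kT^2/d)$. The cross terms are bounded via $\|R_t\|_{\op}\le Lk\|S_{t-1}\|_2$ and $\|r_t\|_2\le\tfrac34k^2\|S_{t-1}\|_2^2$, together with Gaussian moments $\EE\|S_{t-1}\|_2^p\lesssim(td)^{p/2}$. They contribute $O(k^2T^{5/2}d^{3/2})$ in total, which is lower order at this scale up to the polylogarithmic slack allowed by $\tilde{\mathcal O}$.

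\emph{Step 4 (combine).} Putting the three steps together,
\[
\mathcal L'(k)\le -\frac{(T-1)\sqrt T\,m_d}{d}+C\Bigl(\frac{kT^2}{d}+kT^2\log d+k^2T^{5/2}d^{3/2}\Bigr).
\]
Choosing $k\le c_\ast/\bigl(\sqrt{Td}\,\mathrm{polylog}(d)\bigr)$ with $c_\ast$ small makes every bracketed term a small fraction of the negative leading term, so $\mathcal L'(k)<0$ (the boundary case $k\le0$ is immediate from the same bounds).

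\emph{Main obstacle.} I expect the hardest step to be the bookkeeping of dimension factors in Step 2. The leading negative term carries a $1/d$, whereas the naive $\ell_\infty$ remainder bound on $\langle R_tS_{t-1},e_{J^\star}\rangle$ does not. This is exactly why the threshold involves $1/\sqrt{Td}$ together with log factors, and possibly extra powers of $d$ hidden in the $\tilde{\mathcal O}$. To tighten this I would first try exploiting the explicit form $R_t=\Diag(\pi_t-u)-(\pi_tu^\intercal+u\pi_t^\intercal-2uu^\intercal)-\dots$. That form gives $|(R_tS)_{J^\star}|\lesssim\|\pi_t-u\|_\infty\|S\|_\infty\lesssim \tfrac{k}{d}\|S\|_\infty^2$, which would recover the missing $1/d$.
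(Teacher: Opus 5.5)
Your skeleton matches the paper's. You use the split $\mathcal L'(k)=-2\Sigma_1+2\Sigma_2$ from \Cref{lem:derivative}, the exact baseline $\sum_t\EE\langle J_0S_{t-1},e_{J^\star}\rangle=-(T-1)\sqrt T\,m_d/(2d)$ via \Cref{fact:conditional-expectation}, and the expansions $\pi_t-u=-kJ_0S_{t-1}+r_t$, $J_t=J_0+R_t$. The gap is in the dimension bookkeeping of the remainders, and it is not confined to Step 2. In Step 3 you control $E_{1,t},E_{2,t},E_{3,t}$ through the dimension-free bounds $\|r_t\|_2\le\frac34k^2\|S_{t-1}\|_2^2$ and $\|R_t\|_{\op}\le Lk\|S_{t-1}\|_2$, together with $\EE\|S_{t-1}\|_2^p\lesssim(td)^{p/2}$. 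Near the uniform point the softmax derivatives are of size $1/d$, and $\|S_{t-1}\|_2\sim\sqrt{td}$ whereas $\|S_{t-1}\|_\infty\sim\sqrt{t\log d}$. These estimates are therefore polynomially loose in $d$. Concretely, at $k\asymp1/\sqrt{Td}$ your remainder $k^2T^{5/2}d^{3/2}$ equals $T^{3/2}d^{1/2}$, while the negative leading term is only of order $T^{3/2}\sqrt{\log d}/d$. The ratio $d^{3/2}/\sqrt{\log d}$ is not polylogarithmic. The $E_{3,t}$ bound $k^3T^3d^2$ behaves the same way, and your unrepaired Step 2 term $kT^2\log d$ is too large by a factor $\sqrt{d\log d}$. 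So the claim in Step 3 that these terms are lower order is false. Step 4 as written only gives $\mathcal L'(k)<0$ for $k$ polynomially smaller in $d$ (roughly $k\lesssim T^{-1/2}d^{-5/4}$ up to logs), not for $k=\tilde{\mathcal O}(1/\sqrt{Td})$ with an absolute $c_\ast$.

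The missing ingredient is to run every remainder estimate in $\ell_\infty/\ell_1$ norms, as the paper does. That means Hölder in $(1,\infty)$ or $(\infty,1)$, combined with:
$\|J_0S\|_\infty\le\frac2d\|S\|_\infty$ and $\|J_0S\|_1\le2\|S\|_\infty$;
$\|r_t\|_\infty\le k^2\|S\|_\infty^2$ (\Cref{prop:policy-expansion});
$\|R_t\|_{\infty\to\infty},\|R_t\|_{1\to1}\le2k\|S\|_\infty$ (\Cref{lem:softmax}(e));
the null-direction identity $R_tS=R_tJ_0S$ (\Cref{lem:null-direction});
and the Gaussian-maximum moments $\EE\|S_{t-1}\|_\infty^p\lesssim((t-1)\log(ed))^{p/2}$ (\Cref{lem:gauss-max-final}).
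These give $\sum_t\EE|E_{1,t}|\lesssim k^2T^{5/2}(\log ed)^{3/2}$, $|E_{2,t}|\le\frac8dk^2\|S_{t-1}\|_\infty^3$, $\sum_t\EE|E_{3,t}|\lesssim\frac{k^3}{d}T^3(\log ed)^2$ and $|\Sigma_{2,\mathrm{pert}}|\lesssim\frac kdT^2\log(ed)$. Each is a small fraction of $T^{3/2}\sqrt{\log d}/d$ once $k\le c_\ast/(\sqrt{Td}\,\mathrm{polylog}(d))$, and the $E_{1,t}$ term is exactly where the polylogarithmic factor in the threshold is spent. (The paper additionally cancels the linear part of $E_{2,t}$ by the $S\mapsto-S$ symmetry and bounds the quadratic remainder via \Cref{lem:softmax}(f) and \Cref{lem:mgf-sup}, though the plain $\ell_\infty$ bound already suffices.) Your proposed Step 2 repair, $|(R_tS)_{J^\star}|\le3\|\pi_t-u\|_\infty\|S\|_\infty$ with $\|\pi_t-u\|_\infty\le\frac1d(e^{2k\|S\|_\infty}-1)$, is correct and reaches the same $\frac kdT^2\log(ed)$ order after an MGF bound. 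The null-direction route avoids that exponential factor. In either case, the same $\ell_\infty$ treatment must also replace your $\ell_2$ bounds in Step 3.
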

\begin{proof}
    First, we provide an upper bound on $-2\Sigma_1$. Fix $t$ and abbreviate $S:=S_{t-1}$, $\pi:=\pi_t(k)$, $J_t:=J_t(k)$.
    From \Cref{prop:policy-expansion} and \Cref{prop:jac-expansion}, we have
    \[
    \pi-u = -kJ_0 S + r_t,
    \qquad
    J_t = J_0 + R_t.
    \]
    By plugging in formula for $\Sigma_1$, we have
    \begin{align}
    \langle J_t S,\pi-u\rangle
    &= \langle (J_0+R_t)S, -kJ_0S + r_t\rangle \notag\\
    &= -k\|J_0S\|_2^2
    \;+\; \underbrace{\langle J_0S,r_t\rangle}_{E_{1,t}}
    \;-\; \underbrace{k\langle R_tS,J_0S\rangle}_{E_{2,t}}
    \;+\; \underbrace{\langle R_tS,r_t\rangle}_{E_{3,t}}.
    \label{eq:Sigma1-E123}
    \end{align}
    Therefore, we have 
    \begin{equation}\label{eq:Sigma1-upper-structure}
    -2\Sigma_1
    \;\le\;
    2k\sum_{t=1}^T \EE\|J_0S_{t-1}\|_2^2
    + 2\sum_t \EE|E_{1,t}|
    + 2k\sum_t \EE|E_{2,t}|
    + 2\sum_t \EE|E_{3,t}|.
    \end{equation}
    
    {\color{red} \emph{Main term.}}
    Because $S_{t-1}\sim\cN(\pmb{0}_d,(t-1)I_{d\times d})$ and $J_0$ is a projection (scaled by $1/d$) onto an $(d-1)$-dimensional subspace,
    \[
    \EE\|J_0S_{t-1}\|_2^2 = \frac{d-1}{d^2}(t-1),
    \]
    and thus
    \begin{equation}\label{eq:Sigma1-main}
    2k\sum_{t=1}^T \EE\|J_0S_{t-1}\|_2^2
    = k\frac{d-1}{d^2}T(T-1).
    \end{equation}
    
    {\color{red} \emph{Bound on $E_{1,t}=\langle J_0S,r_t\rangle$.}}
    By Hölder inequality in $(1,\infty)$, \Cref{prop:policy-expansion}, and $\|S_{t-1}\|_\infty \le \|S_{t-1}\|_\infty$,
    \[
    |E_{1,t}|
    \le \|J_0S\|_1\|r_t\|_\infty
    \le \|J_0\|_{1\to 1}\|S\|_1\cdot k^2\|S\|_\infty^2
    \le \frac{2}{d} k^2 \|S\|_1 \|S\|_\infty^2
    \le 2k^2 \|S\|_\infty^3. 
    \]
    Taking expectations and summing over $t$ with \Cref{lem:gauss-max-final}, for some absolute constant $C_1>0$,
    \begin{equation}\label{eq:E1t-sum}
    \sum_{t=1}^T \EE|E_{1,t}|
    \le 2 k^2 \sum_{t=1}^T \EE\|S_{t-1}\|_\infty^3 \le C_1 k^2 T^{5/2}(\log(e d))^{3/2}.
    \end{equation}

    {\color{red} \emph{Bound on $E_{2,t}=k\langle R_tS,J_0S\rangle$.}}
We will prove there exists an absolute constant $C>0$ such that for all $t\le T$,
\begin{equation}\label{eq:E2-single-step}
\big|\EE[E_{2,t}]\big|
\;\le\;
\frac{C}{d^{3/2}}\,
k^2\,(t-1)^{3/2}\big(\log(ed)\big)^{3/2}\,e^{4k^2(t-1)}.
\end{equation}
and consequently,
\begin{equation}\label{eq:E2-sum}
\sum_{t=1}^T \big|\EE[E_{2,t}]\big|
\;\le\;
\frac{C}{d^{3/2}}\,
k^2\,T^{5/2}\big(\log(ed)\big)^{3/2}\,e^{4k^2T}.
\end{equation}
By Lemma~\ref{lem:null-direction}, $(J_t(k)-J_0)S = (J_t(k)-J_0)J_0S$, so
\[
E_{2,t}
= k\big\langle (J_{\sm}(-kS)-J_0)J_0S,\ J_0S\big\rangle.
\]

Let $J(z) := J_{\sm}(z)$.
Along the ray $\theta\mapsto -\theta kS$ we have
\[
J(-kS)-J_0 \;=\; \int_0^1 DJ(-\theta kS)[-kS]\,d\theta.
\]
Therefore
\begin{equation}\label{eq:E2-ray-representation}
E_{2,t}
= k\int_0^1 \big\langle DJ(-\theta kS)[-kS]\,J_0S,\ J_0S\big\rangle\,d\theta.
\end{equation}
Expand $DJ(-\theta kS)$ around $0$:
\[
DJ(-\theta kS)[-kS]
= DJ(0)[-kS] \;+\; \int_0^\theta D^2J(-\tau kS)[-kS,-kS]\,d\tau.
\]
Plugging this into \eqref{eq:E2-ray-representation}, we obtain
\[
E_{2,t} = H_{\mathrm{lin}}(S) + H_{\mathrm{rem}}(S),
\]
where
\begin{align*}
H_{\mathrm{lin}}(S)
&:= k\big\langle DJ(0)[-kS]\,J_0S,\ J_0S\big\rangle,\\
H_{\mathrm{rem}}(S)
&:= k\int_0^1\!\int_0^\theta
\big\langle D^2J(-\tau kS)[-kS,-kS]\,J_0S,\ J_0S\big\rangle\,d\tau\,d\theta.
\end{align*}

Consider the transformation $S\mapsto -S$.
Since $J_0$ is linear, $J_0(-S)=-J_0S$.
Furthermore, $DJ(0)[h]$ is linear in $h$, so
$DJ(0)[-k(-S)] = DJ(0)[kS] = -DJ(0)[-kS]$.
A direct sign check yields
\[
H_{\mathrm{lin}}(-S)
= k\big\langle DJ(0)[-k(-S)]\,J_0(-S),\ J_0(-S)\big\rangle
= -H_{\mathrm{lin}}(S).
\]
Since $S$ is symmetric in law, $\cL(S)=\cL(-S)$, we obtain
$\EE[H_{\mathrm{lin}}(S)]=0$ and therefore
\[
\EE[E_{2,t}] = \EE[H_{\mathrm{rem}}(S)].
\]

Let
\[
M_\tau(S) := D^2J(-\tau kS)[-kS,-kS].
\]
By Hölder in $(\infty,1)$,
\[
\big|\langle M_\tau(S)J_0S,\ J_0S\rangle\big|
\le \|M_\tau(S)\|_{\infty\to\infty}\,\|J_0S\|_\infty\,\|J_0S\|_1.
\]
By Lemma~\ref{lem:softmax}\,(f),
\[
\|M_\tau(S)\|_{\infty\to\infty}
\le \frac{16}{d}\,e^{2\tau kq}\,k^2 q^2,
\qquad q:=\|S\|_\infty.
\]
By Lemma~\ref{lem:J0-infty},
\[
\|J_0S\|_\infty\le \frac{2}{d}\,q,
\qquad
\|J_0S\|_1\le 2\,q.
\]
Hence
\[
\big|\langle M_\tau(S)J_0S,\ J_0S\rangle\big|
\le \frac{64}{d^2}\,k^2 q^4 e^{2\tau kq}.
\]

Therefore,
\begin{align*}
|H_{\mathrm{rem}}(S)|
&\le k\int_0^1\!\int_0^\theta
\big|\langle M_\tau(S)J_0S,\ J_0S\rangle\big|\,d\tau\,d\theta\\
&\le \frac{64}{d^2}\,k^3 q^4\int_0^1\!\int_0^\theta e^{2\tau kq}\,d\tau\,d\theta.
\end{align*}
Using
\[
\int_0^1\!\int_0^\theta e^{2\tau kq}\,d\tau\,d\theta
= \int_0^1 (1-\tau)e^{2\tau kq}\,d\tau
\le \int_0^1 e^{2\tau kq}\,d\tau
= \frac{e^{2kq}-1}{2kq}
\le \frac{1}{2kq}e^{2kq},
\]
we obtain
\[
|H_{\mathrm{rem}}(S)|
\le \frac{32}{d^2}\,k^2 q^3 e^{2kq}.
\]
Taking expectations,
\[
\big|\EE[E_{2,t}]\big|
= \big|\EE[H_{\mathrm{rem}}(S)]\big|
\le \frac{32}{d^2}\,k^2\,\EE\big[q^3 e^{2kq}\big].
\]

Apply Cauchy--Schwarz:
\[
\EE\big[q^3 e^{2kq}\big]
\le \big(\EE[q^6]\big)^{1/2}\big(\EE[e^{4kq}]\big)^{1/2}.
\]
By Lemma~\ref{lem:gauss-max-final} with $q=6$,
\[
\EE[q^6]
\lesssim (t-1)^3\big(\log(ed)\big)^3.
\]
By the MGF bound for the sup (Lemma~\ref{lem:mgf-sup}),
\[
\EE[e^{4kq}]
\lesssim d\,\exp\big(8k^2(t-1)\big).
\]
Thus
\[
\EE\big[q^3 e^{2kq}\big]
\lesssim (t-1)^{3/2}\big(\log(ed)\big)^{3/2}\,d^{1/2}e^{4k^2(t-1)}.
\]

Combining this with the prefactor $32/d^2$ gives
\[
\big|\EE[E_{2,t}]\big|
\;\le\;
\frac{C}{d^{3/2}}\,
k^2\,(t-1)^{3/2}\big(\log(ed)\big)^{3/2}e^{4k^2(t-1)}
\]
for some absolute constant $C>0$, proving \eqref{eq:E2-single-step}.
Summing over $t$ and using $\sum_{t=1}^T (t-1)^{3/2}\lesssim T^{5/2}$ yields \eqref{eq:E2-sum}.

    {\color{red} \emph{Bound on $E_{3,t}=\langle R_tS,r_t\rangle$.}}
    We use the integral representation of $r_t$.
    For $\theta\in[0,1]$ set
    \(
    A_\theta:=J_{\sm}(\theta(-kS))-J_0
    \)
    so that $R_t=A_1$ and
    \(
    r_t=\int_0^1 A_\theta (-kS)\,d\theta=-k\int_0^1 A_\theta S\,d\theta.
    \)
    By \Cref{lem:null-direction}, $A_\theta S=A_\theta J_0S$ and $R_tS=R_tJ_0S$. Therefore
    \[
    E_{3,t}
    = \big\langle R_tS,\,r_t\big\rangle
    = -k\int_0^1 \big\langle R_t(J_0S),\,A_\theta(J_0S)\big\rangle\,d\theta.
    \]
    Taking absolute values and using Hölder in $(\infty,1)$,
    \[
    |E_{3,t}|
    \le k\int_0^1 \|R_t(J_0S)\|_\infty\,\|A_\theta(J_0S)\|_1\,d\theta
    \le k\int_0^1 \|R_t\|_{\infty\to\infty}\,\|J_0S\|_\infty\cdot \|A_\theta\|_{1\to 1}\,\|J_0S\|_1\,d\theta.
    \]
    By \Cref{lem:softmax}-(e) with $h=-kS$,
    \(
    \|R_t\|_{\infty\to\infty}\le 2k\|S\|_\infty
    \)
    and
    \(
    \|A_\theta\|_{1\to 1}=\|J_{\sm}(\theta(-kS))-J_0\|_{1\to 1}\le 2\theta k\|S\|_\infty.
    \)
    By the definition of $J_0$,
    \(
    \|J_0S\|_\infty\le \tfrac{2}{d}\|S\|_\infty
    \)
    and
    \(
    \|J_0S\|_1\le \tfrac{2}{d}\|S\|_1\le 2\|S\|_\infty
    \)
    (the last inequality uses $\|S\|_1\le d\|S\|_\infty$). Plugging these four bounds,
    \[
    |E_{3,t}|
    \le k\int_0^1 \Big(2k\|S\|_\infty\Big)\Big(\tfrac{2}{d}\|S\|_\infty\Big)
    \Big(2\theta k\|S\|_\infty\Big)\Big(2\|S\|_\infty\Big)\,d\theta
    = \frac{8}{d}\,k^3\,\|S\|_\infty^4.
    \]
    Taking expectations, using \Cref{lem:gauss-max-final} with $q=4$, and summing over $t$,
    \begin{equation}\label{eq:E3t-sum}
    \sum_{t=1}^T \EE|E_{3,t}|
    \le \frac{8}{d}\,k^3\sum_{t=1}^T \EE\|S_{t-1}\|_\infty^4
    \le \frac{C}{d}\,k^3\sum_{t=1}^T (t-1)^2\log(e d)
    \le \frac{C}{d}\,k^3\,T^3\,\big(\log(e d)\big)^2.
    \end{equation}
    
    Combining \eqref{eq:Sigma1-upper-structure}, \eqref{eq:Sigma1-main}, \eqref{eq:E1t-sum}, \eqref{eq:E2-sum}, and \eqref{eq:E3t-sum}, we obtain
    \begin{equation}\label{eq:Sigma1-final}
    -2\Sigma_1
    \;\le\; \frac{k}{d} T^2  + C_1 k^2 T^{5/2}(\log(e d))^{3/2} + C_2 \frac{k^2}{d^{3/2}}T^{5/2}(\log(ed))^{3/2}\,e^{4k^2T}
    + C_3 \frac{k^3}{d} T^3(\log(e d))^2.
    \end{equation}
    If $k = \tilde{\mathcal{O}}(1/\sqrt{T d})$ for some absolute constant $c_\ast>0$, then
    \[
    -2\Sigma_1
    \;\le\; \tilde{\mathcal{O}}\left(\frac{T^{3/2}}{d^{3/2}}  + \frac{T^{3/2}}{d} + \frac{T^{3/2}}{d^{5/2}} +  \frac{T^{3/2}}{d^{7/2}}\right). \]
                            
Next, we provide an upper bound on $2\Sigma_2$.
    We split $\Sigma_2$ into two parts:
    \[
    \Sigma_2 = \Sigma_{2,0} + \Sigma_{2,\mathrm{pert}},
    \qquad
    \Sigma_{2,0} := \sum_{t=1}^T \EE\big[\langle J_0S_{t-1},e_{J^\star}\rangle\big],
    \quad
    \Sigma_{2,\mathrm{pert}} := \sum_{t=1}^T \EE\big[\langle (J_t(k)-J_0)S_{t-1},e_{J^\star}\rangle\big].
    \]
    
    {\color{red} \emph{Baseline negative term. ($\Sigma_{2,0}$)}}
    By \Cref{fact:conditional-expectation} and \Cref{lem:regression-extreme},
    \begin{equation}\label{eq:Sigma20-final-again}
    2\Sigma_{2,0}
    \;\le\; -\,c_0\,\frac{\sqrt{T}\,(T-1)\,\sqrt{\log d}}{d}.
    \end{equation}
        \begin{equation}\label{eq:Sigma20-final-again}
    2\Sigma_{2,0}
    \;=\; -\,\Theta\left(\,\frac{T^{3/2}\,\sqrt{\log d}}{d}\right).
    \end{equation}
    {\color{red} \emph{Perturbative term $\Sigma_{2,\mathrm{pert}}$.}}
    
    By \Cref{lem:null-direction},
        \[
        \big\langle (J_t(k)-J_0)S,\,e_{J^\star}\big\rangle
        \;=\; \big\langle R_t J_0S,\,e_{J^\star}\big\rangle,\qquad R_t:=J_t(k)-J_0.
        \]
        Hence, using Hölder in $(\infty,1)$ and $\|e_{J^\star}\|_1=1$,
        \[
        \big|\big\langle (J_t(k)-J_0)S,\,e_{J^\star}\big\rangle\big|
        \le \|R_tJ_0S\|_\infty
        \le \|R_t\|_{\infty\to\infty}\,\|J_0S\|_\infty.
        \]
        By Lemma~\ref{lem:softmax}\,(e) with $h=-kS$,
        \(
        \|R_t\|_{\infty\to\infty}
        =\|J_{\sm}(-kS)-J_0\|_{\infty\to\infty}
        \le 2k\|S\|_\infty.
        \)
        By the definition of $J_0$, $\|J_0S\|_\infty\le \frac{2}{d}\|S\|_\infty$. Therefore
        \[
        \big|\big\langle (J_t(k)-J_0)S,\,e_{J^\star}\big\rangle\big|
        \le \frac{4}{d}\,k\,\|S\|_\infty^2.
        \]
        Taking expectations and summing over $t$, and using Lemma~\ref{lem:gauss-max-final} with $q=2$,
        \[
        |\Sigma_{2,\mathrm{pert}}|
        \le \frac{4}{d}\,k\sum_{t=1}^T \EE\|S_{t-1}\|_\infty^2
        \le \frac{C}{d}\,k\sum_{t=1}^T (t-1)\log(e d)
        \le \frac{C}{d}\,k\,T^2\log(e d).
        \]

        Together, we have
        \[
        2\Sigma_2        \le -c_0\frac{\sqrt{T}\,(T-1)\,\sqrt{\log d}}{d}
        + \frac{C}{d}\,k\,T^2\log(e d). 
        \]
    If $k = \tilde{\mathcal{O}}(1/\sqrt{T d})$ for some absolute constant $c_\ast>0$, then
    \[
    2\Sigma_2
    \le -c_0\frac{\sqrt{T}\,(T-1)\,\sqrt{\log d}}{d}
    + \tilde{\mathcal{O}}\left(\frac{T^{3/2}}{d^{3/2}}\right). \]

    Therefore, $\mathcal L'(k) < 0$ for $k = \tilde{\mathcal{O}}(1/\sqrt{T d})$ as desired.

\end{proof}

\begin{theorem}[Positivity for $k = C/\sqrt{T} \mathrm{polylog}(d)$] \label{thm:positivity}
    There exists an absolute constant $C>0$ such that for all $T,d\ge 2$, if $k = \tilde{\Theta}(1/\sqrt{T})$, then $\mathcal L'(k)>0$. 
    \end{theorem}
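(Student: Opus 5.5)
The plan is to work directly from the exact derivative identity of \Cref{lem:derivative}, $\mathcal L'(k)=-2\Sigma_1+2\Sigma_2$, and to reverse the roles of the two sums compared with the proof of \Cref{thm:smallk}. There, the comparator term $2\Sigma_{2,0}=-\Theta(T^{3/2}\sqrt{\log d}/d)$ dominated. Here I want to show that the learner-response term $-2\Sigma_1$ is positive and of order $kT^2/d$. Once $k$ is a large enough multiple of $\mathrm{polylog}(d)/\sqrt T$, it then beats $|2\Sigma_2|$, which I will show is only of order $T^{3/2}\,\mathrm{polylog}(d)/d$ uniformly in this range of $k$. I fix $k=C\lambda_d/\sqrt T$ with $\lambda_d$ a power of $\log(ed)$, and choose $C$ at the end.

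\emph{Step 1: upper bound on $|\Sigma_2|$.} Write $\Sigma_2=\Sigma_{2,0}+\Sigma_{2,\mathrm{pert}}$ as before. $\Sigma_{2,0}$ is negative and $k$-independent, with magnitude $\Theta(T^{3/2}\sqrt{\log d}/d)$ by \Cref{fact:conditional-expectation} and \Cref{lem:regression-extreme}. Its sign is favourable for positivity only after it has been dominated, so I treat it as a cost. For $\Sigma_{2,\mathrm{pert}}$, the linear-in-$k$ bound $\frac{C}{d}kT^2\log(ed)$ used earlier is too weak at this scale. Instead I will use the crude but $k$-uniform bound
\[
|\langle J_t(k)S_{t-1},e_{J^\star}\rangle|\le \|J_t(k)J_0S_{t-1}\|_\infty \le \tfrac{2}{d}\|S_{t-1}\|_\infty ,
\]
obtained from $\|J_{\sm}(z)\|_{\infty\to\infty}\le 1$, \Cref{lem:null-direction}, and \Cref{lem:J0-infty}. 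Summed with \Cref{lem:gauss-max-final}, this gives $|2\Sigma_2|\lesssim T^{3/2}\sqrt{\log(ed)}/d$ for every $k$.

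\emph{Step 2: lower bound on $-2\Sigma_1$.} I split each summand on the good event $G_t=\{k\|S_{t-1}\|_\infty\le 1/2\}$ and its complement. On $G_t$, I use \Cref{prop:policy-expansion}, \Cref{prop:jac-expansion}, and the decomposition \eqref{eq:Sigma1-E123}. I want to bound the errors relative to the main term $k\|J_0S\|_2^2$, not absolutely. The route is to write everything through $J_0S$ via \Cref{lem:null-direction} and to use the Hessian bound \Cref{lem:softmax}(f), which carries the $1/d$ prefactors. The target is
\[
\langle J_tS,\pi-u\rangle\le -k\|J_0S\|_2^2\,\bigl(1-c'k\|S\|_\infty\bigr)\le -\tfrac{k}{2}\|J_0S\|_2^2 \quad\text{on } G_t .
\]
I will also check the sign on $G_t^c$ separately. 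The map $x\mapsto\softmax(x)$ is the gradient of the convex log-sum-exp, so $\langle \pi-u,S\rangle\le 0$ always. I would try to transfer this to $J_tS$ via the integral representation $\pi-u=-k\int_0^1 J_{\sm}(-\theta kS)S\,d\theta$. Failing that, I bound $|\langle J_tS,\pi-u\rangle|\le \tfrac{4}{d}\|S\|_\infty$ and control $\PP(G_t^c)$ by Gaussian-maximum tails. These contributions are exponentially small in $1/(k^2 t)$ and so are negligible when $k$ is at most a polylog multiple of $1/\sqrt T$. Summing over $t$ with $\EE\|J_0S_{t-1}\|_2^2=\frac{d-1}{d^2}(t-1)$ gives $-2\Sigma_1\gtrsim kT^2/d$, restricted to those $t$ for which $G_t$ has probability bounded below.

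\emph{Step 3: conclusion and main obstacle.} Combining the two steps yields $\mathcal L'(k)\gtrsim kT^2/d - C''T^{3/2}\sqrt{\log(ed)}/d$, which is positive once $C\lambda_d$ exceeds a multiple of $\sqrt{\log(ed)}$.

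The main obstacle is the tension in Step 2. Positivity needs $k\sqrt T\gtrsim\sqrt{\log d}$, but then $k\|S_{t-1}\|_\infty\approx k\sqrt{t\log d}$ is of order $\log d$ for $t\approx T$. The perturbative regime $G_t$ therefore covers only an initial fraction of the rounds, roughly $t\lesssim T/(C\lambda_d\log d)^2$, which costs polylogarithmic factors. I would first try to absorb this loss into the $\tilde\Theta$ by summing only over those $t$, since they still contribute order $kT^2/(d\,\mathrm{polylog}\,d)$. If that is insufficient, I would try to prove directly, using the convexity of log-sum-exp, that $\langle J_{\sm}(-kS)S,\softmax(-kS)-u\rangle\le 0$ pointwise. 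Then no rounds need to be discarded and the large-$k\|S\|_\infty$ rounds cannot hurt the sign.
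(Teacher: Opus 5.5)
Your Step~1 bound is off by a factor of $d$, and the lost factor cannot be recovered uniformly in $k$. Since $J_0=\frac1d\bigl(I_{d\times d}-\frac1d\pmb{1}_d\pmb{1}_d^\intercal\bigr)$ and $J_t(k)\pmb{1}_d=\pmb{0}_d$, one has $J_t(k)S_{t-1}=d\,J_t(k)J_0S_{t-1}$, not $J_t(k)J_0S_{t-1}$. The decomposition in \Cref{lem:null-direction} has the same slip; it should read $x=d\,J_0x+\frac1d(\pmb{1}_d^\intercal x)\pmb{1}_d$.

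Concretely, $\langle J_t(k)S_{t-1},e_{J^\star}\rangle=\pi_{t,J^\star}\bigl(S_{t-1,J^\star}-\langle\pi_t,S_{t-1}\rangle\bigr)$, and $\pi_{t,J^\star}\approx 1/d$ only for small $k$. By \Cref{fact:conditional-expectation}, the eventual winner sits near level $\frac{t-1}{T}\mathbb{E}\min_jS_{T,j}$, far below the bulk of $S_{t-1}$. So once $k\sqrt{t}$ is of order $\sqrt{\log d}$, the exponential tilt typically makes $\pi_{t,J^\star}$ polynomially larger than $1/d$. What the crude argument honestly gives is $|\langle J_tS_{t-1},e_{J^\star}\rangle|\le 2\|S_{t-1}\|_\infty$, hence only $|2\Sigma_2|\lesssim T^{3/2}\sqrt{\log(ed)}$. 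Dominating this by $kT^2/d$ would require $k\gtrsim d\sqrt{\log(ed)/T}$, far outside the polylogarithmic window.

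Step~2 has a second, independent gap, and it is not polylogarithmic bookkeeping. Because $\|S_{t-1}\|_\infty\approx\sqrt{2(t-1)\log d}$, the event $G_t$ is typical only for $t\lesssim 1/(k^2\log d)$. The tail bound $\mathbb{P}(G_t^c)\le 2d\,e^{-1/(8k^2(t-1))}$ is small only when $k^2t\ll 1/\log d$, i.e.\ exactly on the rounds you keep, so the complement is not negligible. Summing $2k\,\mathbb{E}\|J_0S_{t-1}\|_2^2\approx 2k(t-1)/d$ over the good rounds yields order $\min\{kT^2,\;1/(k^3\log^2 d)\}/d$. This decreases in $k$ once $k\sqrt{T\log d}\gtrsim 1$, and for every $k$ it is at most of order $T^{3/2}/(d\sqrt{\log d})$. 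That is a factor $\log d$ below the fixed cost $|2\Sigma_{2,0}|\asymp T^{3/2}\sqrt{\log d}/d$. Enlarging the polylog factor in $k$ therefore shrinks the certified gain, and the inequality of Step~3 never materializes.

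Your fallback sign inequality is true: because $\langle J_tS,u\rangle=0$, it is exactly \Cref{lem:globalsign}. The convexity fact $\langle\pi-u,S\rangle\le 0$ concerns a different pairing, so it does not give you that inequality directly. In any case, the sign only gives the discarded rounds zero credit. The missing ingredient is a quantitative estimate of $\mathbb{E}\langle J_t(k)S_{t-1},e_{J^\star}-\pi_t(k)\rangle$ on the rounds with $k\sqrt{t}\gtrsim\sqrt{\log d}$, where neither $\Sigma_1$ nor $\Sigma_2$ is governed by the $J_0$-linearization.

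For comparison, the paper avoids truncating the leading term via the exact Stein identity $\mathbb{E}\langle J_0S_{t-1},\pi_t\rangle=-\frac{(t-1)k}{d}\mathbb{E}\bigl[1-\|\pi_t\|_2^2\bigr]$ (\Cref{lem:softmax-expectation}). It then controls the remainders in expectation by second-order expansions and Gaussian MGF bounds. However, its bound on the cross term $\langle (J_t(k)-J_0)S_{t-1},\pi_t(k)-u\rangle$ is proved only for $k\le c_0/\sqrt{T}$, and its $d^{-3/2}$ prefactor rests on the same $J_0$ scaling slip. So the regime you correctly single out as the crux is not rigorously handled there either.
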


\begin{proof}
First, we use \Cref{lem:derivative} and analyze $\Sigma_1$ and $\Sigma_2$ separately. First, we restate \Cref{eq:Sigma1-E123} as 
\begin{align}
    \langle J_t S,\pi-u\rangle
    &= \langle (J_0+R_t)S, \pi - u\rangle \notag\\
    &= \underbrace{\langle J_0S, \pi \rangle}_{N_{1,t}} 
    \;-\; \underbrace{\langle R_tS, \pi - u\rangle}_{N_{2,t}}.
    \label{eq:Sigma1-E123-again}
    \end{align}
First, we will bound $N_{2,t}$. 
\begin{claim}\label{prop:jac-policy-perturb}
    There exist absolute constants $c_0,C_0>0$ such that the following holds.
    For every $k$ with $0<k\le c_0/\sqrt{T}$,
    \begin{equation}\label{eq:jac-policy-main-bound}
    \Big|\EE\Big\langle\big(J_t(k)-J_t(0)\big)S_{t-1},\ \pi_t(k)-\pi_0\Big\rangle\Big|
    \ \le\ C_0\,T\,k\,d^{-3/2}\,\log(ed).
    \end{equation}
    \end{claim}
    
    \begin{proof}
    Again, we abbreviate $S:=S_{t-1}$, and write $\pi_t(k)=\sm(-kS)$ and $J_t(k)=J_{\sm}(-kS)$.
    We wish to bound the following term:
    \begin{equation}\label{eq:Qtk-def}
    Q_t(k)
    := \EE\big\langle (J_t(k)-J_t(0))S_{t-1},\ \pi_t(k)-\pi_0\big\rangle
    = \EE\langle R S,\ \pi_k-\pi_0\rangle.
    \end{equation}
    
    By Lemma~\ref{lem:null-direction}, for every $z\in\R^d$ we have
    \[
    \big(J_{\sm}(z)-J_0\big)\one_d = \pmb{0}_d.
    \]
    Then $S$ decomposes into a multiple of $\one_d$ plus $S_\perp\in\mathrm{range}(J_0)$, and Lemma~\ref{lem:null-direction} implies
    \begin{equation}\label{eq:RS-perp}
    R S = R S_\perp.
    \end{equation}
    
    Moreover, $\pi_k$ and $\pi_0$ are probability vectors, so $\|\pi_k-\pi_0\|_1\le 2$.
    Thus, by Hölder's inequality in $(\infty,1)$,
    \begin{equation}\label{eq:Qtk-holder}
    |\langle R S,\pi_k-\pi_0\rangle|
    = |\langle R S_\perp,\pi_k-\pi_0\rangle|
    \le \|R S_\perp\|_\infty\,\|\pi_k-\pi_0\|_1
    \le 2\,\|R S_\perp\|_\infty.
    \end{equation}
    Taking expectations in \eqref{eq:Qtk-holder} and using \eqref{eq:Qtk-def},
    \begin{equation}\label{eq:Qtk-reduction}
    |Q_t(k)|\ \le\ 2\,\EE\|R S_\perp\|_\infty.
    \end{equation}
    It remains to bound $\EE\|R S_\perp\|_\infty$.
    Along the ray $\theta\mapsto -\theta kS$ we have the second-order expansion
    \begin{equation}\label{eq:R-ray}
    R = J_{\sm}(-kS)-J_{\sm}(0)
    = DJ_{\sm}(0)[-kS] + \int_0^1 (1-\theta)\,D^2J_{\sm}(-\theta kS)[-kS,-kS]\,d\theta.
    \end{equation}
    Set
    \[
    L:=DJ_{\sm}(0)[-kS],\qquad
    \mathcal R:=\int_0^1 (1-\theta)\,D^2J_{\sm}(-\theta kS)[-kS,-kS]\,d\theta,
    \]
    so that $R=L+\mathcal R$.
    Then
    \begin{equation}\label{eq:RSperp-decomp}
    \|R S_\perp\|_\infty
    \le \|L S_\perp\|_\infty + \|\mathcal R S_\perp\|_\infty.
    \end{equation}
    We bound the expectations of the two terms in \eqref{eq:RSperp-decomp} separately.
    By Lemma~\ref{lem:DJ0},
    \[
    \|DJ(0)[h]\|_{\infty\to\infty} \le \frac{6}{d}\,\|h\|_\infty
    \qquad\forall h\in\R^d.
    \]
    With $h=-kS$ we obtain
    \[
    \|L\|_{\infty\to\infty}=\|DJ(0)[-kS]\|_{\infty\to\infty}
    \le \frac{6}{d}\,k\|S\|_\infty.
    \]
    By Lemma~\ref{lem:J0-infty},
    \[
    \|S_\perp\|_\infty
    = \|J_0 S\|_\infty
    \le \frac{2}{d}\,\|S\|_\infty.
    \]
    Hence
    \begin{equation}\label{eq:LSperp-pointwise}
    \|L S_\perp\|_\infty
    \le \|L\|_{\infty\to\infty}\,\|S_\perp\|_\infty
    \le \frac{12}{d^2}\,k\|S\|_\infty^2.
    \end{equation}
    
    Recall that $S=S_{t-1}\sim\cN(0,(t-1)I_{d\times d})$. By Lemma~\ref{lem:gauss-max-final} with $q=2$,
    \begin{equation}\label{eq:gauss-max-q2}
    \EE\|S\|_\infty^2
    \le C\,(t-1)\,\log(ed)
    \end{equation}
    for some absolute constant $C>0$.
    Taking expectations in \eqref{eq:LSperp-pointwise} and using \eqref{eq:gauss-max-q2} yields
    \begin{equation}\label{eq:LSperp-exp}
    \EE\|L S_\perp\|_\infty
    \ \le\ C_1\,\frac{k}{d^2}\,(t-1)\,\log(ed)
    \end{equation}
    for an absolute constant $C_1>0$.
    
    From Lemma~\ref{lem:softmax}\,(f), for each $\theta\in[0,1]$ and all $S$,
    \begin{equation}\label{eq:D2J-bound}
    \Big\|D^2J(-\theta kS)[-kS,-kS]\Big\|_{\infty\to\infty}
    \le \frac{16}{d}\,e^{2\theta k\|S\|_\infty}\,k^2\|S\|_\infty^2.
    \end{equation}
    Therefore,
    \begin{align}
    \|\mathcal R S_\perp\|_\infty
    &\le \int_0^1 (1-\theta)\,
    \Big\|D^2J(-\theta kS)[-kS,-kS]\Big\|_{\infty\to\infty}\,
    \|S_\perp\|_\infty\,d\theta \notag\\
    &\le \int_0^1 (1-\theta)\,\frac{16}{d}e^{2\theta k\|S\|_\infty}k^2\|S\|_\infty^2
    \cdot \frac{2}{d}\|S\|_\infty\,d\theta \notag\\
    &= \frac{32}{d^2}k^2\|S\|_\infty^3
    \int_0^1 (1-\theta)e^{2\theta k\|S\|_\infty}\,d\theta.
    \label{eq:Rsperp-int}
    \end{align}
    For $q>0$, a direct computation gives
    \[
    \int_0^1 (1-\theta)e^{2\theta kq}\,d\theta
    \le \frac{1}{2kq}\,e^{2kq}.
    \]
    Applying this in \eqref{eq:Rsperp-int} with $q=\|S\|_\infty$ yields
    \begin{equation}\label{eq:Rsperp-pointwise}
    \|\mathcal R S_\perp\|_\infty
    \le \frac{16}{d^2}k\,\|S\|_\infty^2 e^{2k\|S\|_\infty}.
    \end{equation}
    
    Now take expectations. By Cauchy--Schwarz,
    \begin{equation}\label{eq:RS-CS}
    \EE\big[\|S\|_\infty^2 e^{2k\|S\|_\infty}\big]
    \le \Big(\EE\|S\|_\infty^4\Big)^{1/2}
    \Big(\EE e^{4k\|S\|_\infty}\Big)^{1/2}.
    \end{equation}
    By Lemma~\ref{lem:gauss-max-final} with $q=4$,
    \begin{equation}\label{eq:gauss-max-q4}
    \EE\|S\|_\infty^4 \le C\,(t-1)^2 (\log(ed))^2.
    \end{equation}
    By the MGF bound for the sup (Lemma~\ref{lem:mgf-sup}),
    \begin{equation}\label{eq:mgf-sup}
    \EE e^{4k\|S\|_\infty}
    \le C'\,d\,\exp\big(C''\,k^2(t-1)\big)
    \end{equation}
    for absolute constants $C',C''>0$.
    Plugging \eqref{eq:gauss-max-q4} and \eqref{eq:mgf-sup} into \eqref{eq:RS-CS}, we obtain
    \begin{equation}\label{eq:RS-CS-final}
    \EE\big[\|S\|_\infty^2 e^{2k\|S\|_\infty}\big]
    \le C_2\,(t-1)\,\log(ed)\,\sqrt{d}\,\exp\big(C_3 k^2(t-1)\big)
    \end{equation}
    for some absolute constants $C_2,C_3>0$.
    Combining \eqref{eq:Rsperp-pointwise} and \eqref{eq:RS-CS-final} gives
    \begin{equation}\label{eq:Rsperp-exp}
    \EE\|\mathcal R S_\perp\|_\infty
    \le C_2\,\frac{k}{d^2}\,(t-1)\log(ed)\,\sqrt{d}\,\exp\big(C_3 k^2(t-1)\big)
    = C_2\,\frac{k}{d^{3/2}}\,(t-1)\log(ed)\,e^{C_3 k^2(t-1)}.
    \end{equation}
    
    From \eqref{eq:RSperp-decomp}, \eqref{eq:LSperp-exp}, and \eqref{eq:Rsperp-exp},
    \begin{equation}\label{eq:RSperp-exp-total}
    \EE\|R S_\perp\|_\infty
    \le C_1\,\frac{k}{d^2}\,(t-1)\log(ed)
    + C_2\,\frac{k}{d^{3/2}}\,(t-1)\log(ed)\,e^{C_3 k^2(t-1)}.
    \end{equation}
    Recalling \eqref{eq:Qtk-reduction}, we obtain
    \begin{equation}\label{eq:Qtk-prelim}
    |Q_t(k)|
    \le 2\,\EE\|R S_\perp\|_\infty
    \le C\,k\,(t-1)\log(ed)\Big(\frac{1}{d^2} + \frac{1}{d^{3/2}}e^{C_3 k^2(t-1)}\Big)
    \end{equation}
    for some absolute $C>0$.
    
    Now assume $0<k\le c_0/\sqrt{T}$ for a sufficiently small absolute constant $c_0>0$.
    Then, for all $t\le T$,
    \[
    k^2(t-1)\le k^2 T \le c_0^2,
    \qquad\Rightarrow\qquad
    e^{C_3 k^2(t-1)} \le e^{C_3 c_0^2}=:C_\ast,
    \]
    where $C_\ast$ is an absolute constant. Moreover $t-1\le T$, and, for all $d\ge 2$,
    \[
    \frac{1}{d^2}\le \frac{1}{d^{3/2}}.
    \]
    Thus, absorbing $C_\ast$ and the factor $2$ into a new constant $C_0$,
    \eqref{eq:Qtk-prelim} yields
    \[
    |Q_t(k)|
    \le C_0\,\frac{kT}{d^{3/2}}\,\log(ed),
    \]
    uniformly over all $t\in[T]$, all $d\ge 2$, and all $k\in(0,c_0/\sqrt{T}]$.
    This is exactly \eqref{eq:jac-policy-main-bound}, which completes the proof.
    \end{proof}
    
Now, we focus on calculating $N_{1,t}$. By \Cref{lem:softmax-expectation}, we have $N_{1,t} = \tilde{\Theta}(\frac{T^2k}{d})$. more specifically, $N_{1,t} \leq -C_2 \frac{T^2k}{d}$.
Therefore, we have 
\begin{align}
    -2\Sigma_1 \geq 2C_2 \frac{T^2k}{d} - C_0 \frac{kT^2}{d^{3/2}}\,\log(ed) \label{eq:Sigma1-final}
\end{align}

Now, we focus on calculating $\Sigma_2$. For $\Sigma_{2, 0}$, by \Cref{fact:conditional-expectation} and \Cref{lem:regression-extreme-lb}, we have $\Sigma_{2, 0} = \tilde{\Theta}(\frac{T^{1/2}(T-1)\sqrt{\log d}}{d})$. more specifically, $\Sigma_{2, 0} \geq -C_4 \frac{T^{1/2}(T-1)\sqrt{\log d}}{d}$ for some $C_4 > 0$.

\begin{proposition}\label{prop:sigma2pert-7/4}
    There exist absolute constants $C_1,C_2>0$ such that
    \begin{equation}\label{eq:sigma2-7/4}
    |\Sigma_{2,\mathrm{pert}}|
    \le \frac{C_1}{d^2}\,k\,T^2\log(ed)
    + \frac{C_3}{d^{3/2}}\,k\,T^{5/2}\log(ed)e^{4k^2 T}.
    \end{equation}
    In particular, with $k=1/\sqrt{T}$,
    \(
    |\Sigma_{2,\mathrm{pert}}|\le C\,d^{-7/4}\,T^{3/2}\,\mathrm{polylog}(d).
    \)
    \end{proposition}
    
    \begin{proof}
    Fix $t$ and write $S:=S_{t-1}$.
    By \Cref{lem:null-direction} and Hölder in $(\infty,1)$,
    \[
    |\langle R_t S,e_{J^\star}\rangle|
    \le \|R_t J_0S\|_\infty.
    \]
    where $R_t = J_t(k) - J_0$ defined in \Cref{prop:jac-expansion}
    Decompose $R_t$ into its linear and quadratic parts along the ray:
    \[
    R_t
    =\underbrace{D J(\pmb{0})[-kS]}_{=:L_t}
    +\underbrace{\int_0^1 (1-\theta)\,D^2 J(-\theta kS)[-kS,-kS]\,d\theta}_{=:\mathcal R_t}.
    \]
    Thus
    \[
    \|R_tJ_0S\|_\infty
    \;\le\; \|L_t J_0S\|_\infty \;+\; \|\mathcal R_t J_0S\|_\infty.
    \]
    
    \emph{(i) Linear piece.}
    By Lemma~\ref{lem:DJ0} and \Cref{lem:J0-infty},
    \[
    \|L_t J_0S\|_\infty
    \le \|D J(\pmb{0})[-kS]\|_{\infty\to\infty}\,\|J_0S\|_\infty
    \le \frac{6}{d}k\|S\|_\infty \cdot \frac{2}{d}\|S\|_\infty
    = \frac{12}{d^2}k\,\|S\|_\infty^2.
    \]
    Taking expectations and summing (using \Cref{lem:gauss-max-final} with $q=2$),
    \begin{equation}\label{eq:sig2-linear}
    \sum_{t=1}^T \EE \|L_t J_0S_{t-1}\|_\infty
    \;\le\; \frac{C_1}{d^2}\,k\,T^2\log(ed).
    \end{equation}
    
    \emph{(ii) Quadratic remainder, via Cauchy--Schwarz and MGF.}
    For a fixed $t$,
    \begin{align*}
    \|\mathcal R_t J_0S\|_\infty
    &\le \int_0^1 (1-\theta)\,\big\|D^2 J(-\theta kS)[-kS,-kS]\,J_0S\big\|_\infty\,d\theta
    \\
    &\leq \int_0^1 (1-\theta)\,\|D^2 J(-\theta kS)[-kS,-kS]\|_{\infty\to\infty}\,\|J_0S\|_\infty\,d\theta
    \\
    &\underset{(i)}{\leq} \int_0^1 (1-\theta)\,\frac{16}{d}\,e^{2\theta k\|S\|_\infty}k^2\|S\|_\infty^2\|J_0S\|_\infty d\theta
    \\
    &\underset{(ii)}{\leq} \int_0^1 (1-\theta)\,\frac{32}{d^2}\,e^{2\theta k\|S\|_\infty}k^2\|S\|_\infty^3 d\theta
    \\
    &\leq \int_0^1 \frac{32}{d^2}\,e^{2\theta k\|S\|_\infty}k^2\|S\|_\infty^3 d\theta =  \frac{32}{d^2}\,k^2\|S\|_\infty^3 \frac{1}{2k\|S\|_\infty}(e^{2k\|S\|_\infty} - 1) \leq \frac{16}{d^2}\,k\|S\|_\infty^2 e^{2k\|S\|_\infty}.
    \end{align*}
    where (i) is due to \Cref{lem:softmax}-(f) and (ii) is due to \Cref{lem:J0-infty}. Therefore, we have 
    \begin{align*}
    \sum_{t=1}^T \EE \|\mathcal R_t J_0S_{t-1}\|_\infty
    &\le \frac{16}{d^2}\,k\sum_{t=1}^T \EE[\|S_{t-1}\|_\infty^2 e^{2k\|S_{t-1}\|_\infty}]
    \\
    &\le \frac{16}{d^2}\,k\sum_{t=1}^T \sqrt{\EE[\|S_{t-1}\|_\infty^4]\EE[ e^{4k\|S_{t-1}\|_\infty}]}
    \\
    &\underset{(iii)}{\leq}\frac{C_1}{d^2}\,k\sum_{t=1}^T \sqrt{\EE[\|S_{t-1}\|_\infty^4]d \exp\left(8k^2(t-1)\right)} 
    \\
    &\underset{(iv)}{\leq} \frac{C_2}{d^2}\,k\sum_{t=1}^T \sqrt{(t-1)^2\log(ed)^2d \exp\left(8k^2(t-1)\right)} \leq \frac{C_3}{d^{3/2}}kT^2\log(ed) e^{4k^2T}
    \end{align*}
    where (iii) is due to \Cref{lem:mgf-sup} and (iv) is due to \Cref{lem:gauss-max-final} with $q=4$. Therefore, we have
    
    \begin{align*}
    |\Sigma_{2,\mathrm{pert}}|
    \le \frac{C_1}{d^2}\,k\,T^2\log(ed)
    + \frac{C_3}{d^{3/2}}\,k\,T^{5/2}\log(ed)e^{4k^2 T}.
    \end{align*}
    \end{proof} 
Therefore, we have 
\begin{align}
2 \Sigma_2 \geq -C_4 \frac{T^{1/2}(T-1)\sqrt{\log d}}{d}- \frac{C_5}{d^2}\,k\,T^2\log(ed) - \frac{C_6}{d^{3/2}}\,k\,T^{5/2}\log(ed)e^{4k^2 T} \label{eq:Sigma2-final}
\end{align}
and combining \eqref{eq:Sigma1-final} and \eqref{eq:Sigma2-final} with $k = \Theta(1/\sqrt{T} \mathrm{polylog}(d))$, we have $\mathcal L'(k) > 0$. 

\end{proof}
By \Cref{thm:smallk} and \Cref{thm:positivity}, we have $\mathcal L'(k) < 0$ for $k = \tilde{\mathcal{O}}(1/\sqrt{T d})$ and $\mathcal L'(k) > 0$ for $k = \Theta(1/\sqrt{T} \mathrm{polylog}(d))$. Therefore, there exists $k^\star$ in the interval $(\tilde{{\Theta}}(1/\sqrt{T d}), \tilde{\Theta}(1/\sqrt{T}))$ such that $\mathcal L'(k^\star) = 0$. 
\end{proof}

\section{Deferred Lemmas}
We provide the proofs of lemmas related to Gaussian distributions and the $\softmax$ function that are used in the main text for the sake of completeness.

\begin{lemma}[Permutation commutant]\label{lem:permutation-commutant}
    Let $A\in\mathbb R^{d\times d}$ satisfy
    \[
    PAP^\intercal \;=\; A
    \qquad\text{for every permutation matrix }P.
    \]
    Then there exist scalars $\alpha,\beta\in\mathbb R$ such that
    \[
    A=\alpha I_{d\times d}+\beta \pmb{1}_d\pmb{1}_d^\intercal.
    \]
    If, in addition, $\pmb{1}_d^\intercal A=\pmb{0}_d^\intercal$ or
    $A\pmb{1}_d=\pmb{0}_d$, then
    \[
    A=\alpha\left(I_{d\times d}
      -\frac{1}{d}\pmb{1}_d\pmb{1}_d^\intercal\right).
    \]
    Consequently,
    \[
    \left\langle A,-I_{d\times d}\right\rangle_F=-\alpha(d-1)
    \]
    in this zero-sum case.
    \end{lemma}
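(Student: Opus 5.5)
The plan is to identify $A$ by its entries. Fix distinct indices $i\neq j$ and take permutation matrices $P_\sigma$ with $(P_\sigma)_{ab}=\mathbbm{1}\{\sigma(b)=a\}$, so that $(P_\sigma A P_\sigma^\intercal)_{\sigma(a)\sigma(b)}=A_{ab}$. The invariance $PAP^\intercal=A$ then reads $A_{\sigma(a)\sigma(b)}=A_{ab}$ for every permutation $\sigma$ and all $a,b\in[d]$.

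\textbf{Step 1 (diagonal).} For any $i,i'$, a transposition sending $i$ to $i'$ gives $A_{i'i'}=A_{ii}$. Hence all diagonal entries equal a common value $c_1$.

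\textbf{Step 2 (off-diagonal).} For pairs $(i,j)$ and $(i',j')$ with $i\neq j$ and $i'\neq j'$, there is a permutation with $\sigma(i)=i'$ and $\sigma(j)=j'$: extend the injective partial map on two points arbitrarily to a bijection. So all off-diagonal entries equal a common $c_2$. This step needs $d\ge2$; for $d=1$ the statement is trivial. Setting $\beta=c_2$ and $\alpha=c_1-c_2$ gives $A=\alpha I+\beta\pmb{1}_d\pmb{1}_d^\intercal$.

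\textbf{Step 3 (zero-sum constraint).} Compute $\pmb{1}_d^\intercal A=(\alpha+d\beta)\pmb{1}_d^\intercal$ and $A\pmb{1}_d=(\alpha+d\beta)\pmb{1}_d$. Either vanishing condition forces $\beta=-\alpha/d$, so $A=\alpha\bigl(I-\tfrac1d\pmb{1}_d\pmb{1}_d^\intercal\bigr)$.

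\textbf{Step 4 (Frobenius pairing).} We have $\langle A,-I\rangle_F=-\operatorname{tr}(A)=-\alpha\bigl(d-\tfrac{d}{d}\bigr)=-\alpha(d-1)$.

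No step is a real obstacle. The only care needed is the index convention for $P$: whichever convention is used, conjugation relabels rows and columns by the same permutation, which is all the argument uses. The existence of a permutation realizing any prescribed ordered pair of distinct points is also worth stating explicitly.
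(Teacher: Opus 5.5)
Your proposal is correct and matches the paper's proof step for step. Transpositions equalize the diagonal, a permutation sending one ordered pair of distinct indices to another equalizes the off-diagonal entries, either zero-sum condition forces $\beta=-\alpha/d$, and the trace computation gives $-\alpha(d-1)$. Your explicit check of the index convention $(P_\sigma A P_\sigma^\intercal)_{\sigma(a)\sigma(b)}=A_{ab}$ and your remark on $d\ge 2$ add a little rigor that the paper leaves implicit.
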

    
    \begin{proof}
    Fix $A$ satisfying $PAP^\intercal=A$ for all permutation matrices $P$.
    For any two coordinates $i,j$, a transposition exchanging them shows
    $A_{ii}=A_{jj}$, so all diagonal entries are equal.  For any two ordered
    pairs $(i,j)$ and $(r,s)$ with $i\neq j$ and $r\neq s$, there is a
    permutation sending $i$ to $r$ and $j$ to $s$, so $A_{ij}=A_{rs}$.  Thus
    all off-diagonal entries are equal, which is equivalent to
    $A=\alpha I_{d\times d}+\beta\pmb{1}_d\pmb{1}_d^\intercal$ after
    reparameterizing the common diagonal and off-diagonal values.

    If $\pmb{1}_d^\intercal A=\pmb{0}_d^\intercal$, then
    $\pmb{1}_d^\intercal A=(\alpha+d\beta)\pmb{1}_d^\intercal$, so
    $\beta=-\alpha/d$.  The same conclusion follows from
    $A\pmb{1}_d=\pmb{0}_d$.  Finally,
    $\langle I_{d\times d}-d^{-1}\pmb{1}_d\pmb{1}_d^\intercal,
    I_{d\times d}\rangle_F=d-1$, giving the displayed Frobenius identity.
    \end{proof}

\begin{lemma}[$\softmax$ Jacobian: spectral, $\infty$, and $1$ bounds]\label{lem:softmax}
For $z\in\mathbb R^d$, let 
\[
p=\softmax(z), 
\qquad 
J_{\mathrm{sm}}(z)=\Diag(p)-pp^\intercal.
\]
Then:
\begin{enumerate}
    \item[\textup{(a)}] $J_{\mathrm{sm}}(z)$ is symmetric positive semidefinite and $J_{\mathrm{sm}}(z)\pmb{1}_d=0$.

    \item[\textup{(b)}] Operator, $\infty\to\infty$, and $1\to 1$ bounds:
    \[
    \|J_{\mathrm{sm}}(z)\|_{\op}
    = \|J_{\mathrm{sm}}(z)\|_{\infty\to\infty}
    = \|J_{\mathrm{sm}}(z)\|_{1\to 1}
    \le \frac12.
    \]

    \item[\textup{(c)}] $\ell_2$–Lipschitz continuity of $\softmax$:
    \[
    \|\softmax(z)-\softmax(z')\|_2 \;\le\; \frac12\|z-z'\|_2.
    \]

    \item[\textup{(d)}] $\ell_2$–Lipschitz continuity of the Jacobian:
    \[
        \|J_{\mathrm{sm}}(z)-J_{\mathrm{sm}}(z')\|_{\op}
        \;\le\; \frac32\,\|z-z'\|_2.
    \]

    \item[\textup{(e)}] Derivative bounds in induced norms:
    \[
    \|D J_{\mathrm{sm}}(z)[h]\|_{\infty\to\infty} \le 2\,\|h\|_\infty,
    \qquad
    \|D J_{\mathrm{sm}}(z)[h]\|_{1\to 1} \le 2 \|h\|_\infty,
    \]
    and therefore $\norm{J_{\sm}(\theta z) - J_{\sm}(\pmb{0}_d)}_{\infty \to \infty} \le 2 \theta \norm{z}_\infty$.

    \item[\textup{(f)}] $\infty \to \infty$ bound of the second derivative of $J_{\sm}$:
        \begin{equation}\label{eq:D2J-inf}
            \|D^2 J_{\sm}(z)[h,h]\|_{\infty\to\infty}
            \;\le\; \frac{16}{d}\,e^{2\|z\|_\infty}\|h\|_\infty^2
            \end{equation}
            for some absolute constant $C>0$.
\end{enumerate}
\end{lemma}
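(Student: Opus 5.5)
We prove the six items in order, each from the previous ones and the explicit formulas for $p=\softmax(z)$ and its derivatives. Two identities drive everything:
\[
(J_{\mathrm{sm}}(z)h)_i = p_i\bigl(h_i-\langle p,h\rangle\bigr),
\qquad
DJ_{\mathrm{sm}}(z)[h]=\Diag(w)-wp^\intercal-pw^\intercal,\quad w:=J_{\mathrm{sm}}(z)h .
\]
The second follows from $Dp[h]=w$ and the product rule.

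\emph{(a)--(c).} Symmetry of $\Diag(p)-pp^\intercal$ is immediate. For positive semidefiniteness,
\[
x^\intercal J_{\mathrm{sm}}x=\sum_i p_ix_i^2-\langle p,x\rangle^2=\mathrm{Var}_{i\sim p}(x_i)\ge 0 .
\]
The null direction is $J_{\mathrm{sm}}\pmb{1}_d=p-p(\pmb{1}_d^\intercal p)=0$. For (b), the $i$-th absolute row sum is $p_i(1-p_i)+p_i\sum_{j\ne i}p_j=2p_i(1-p_i)\le\frac12$, which bounds $\|\cdot\|_{\infty\to\infty}$. By symmetry the same bound holds for $\|\cdot\|_{1\to1}$, and $\|\cdot\|_{\op}\le\sqrt{\|\cdot\|_{1\to1}\|\cdot\|_{\infty\to\infty}}\le\frac12$. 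I will only prove the three bounds by $\frac12$, not the displayed equalities, since only the bounds are used later. Item (c) is the mean value inequality: write $\softmax(z)-\softmax(z')=\int_0^1 J_{\mathrm{sm}}(z'+\theta(z-z'))(z-z')\,d\theta$ and apply (b).

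\emph{(d)--(e).} For (d), bound the three terms of $DJ_{\mathrm{sm}}(z)[h]$ in operator norm:
\[
\|\Diag(w)\|_{\op}\le\|w\|_\infty\le\|w\|_2,\qquad \|wp^\intercal\|_{\op}=\|pw^\intercal\|_{\op}\le\|w\|_2\|p\|_2\le\|w\|_2 .
\]
By (b), $\|w\|_2\le\frac12\|h\|_2$, so $\|DJ_{\mathrm{sm}}(z)[h]\|_{\op}\le\frac32\|h\|_2$. Integrating along the segment from $z'$ to $z$ gives (d).

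For (e), first replace $h$ by $h-c\pmb{1}_d$, with $c$ the midrange of $h$. This does not change $w$, because $J_{\mathrm{sm}}\pmb{1}_d=0$, and it gives $|h_i-\langle p,h\rangle|\le\mathrm{osc}(h)\le2\|h\|_\infty$. Hence $|w_i|\le 2p_i\|h\|_\infty$ and $\|w\|_1\le 2\|h\|_\infty$. Row $i$ of $DJ_{\mathrm{sm}}[h]$ has absolute sum at most $|w_i|+|w_i|+p_i\|w\|_1$. This yields an $O(\|h\|_\infty)$ bound, and symmetry of the matrix gives the same bound in $1\to1$.

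The stated constant $2$ needs a sharper accounting. I would work with the centered quantity $g_i:=h_i-\langle p,h\rangle$, for which $\sum_ip_i g_i=0$ and $|g_i|\le\mathrm{osc}(h)$, and expand the row sum exactly. The consequence $\|J_{\mathrm{sm}}(\theta z)-J_{\mathrm{sm}}(0)\|_{\infty\to\infty}\le2\theta\|z\|_\infty$ then follows by integrating $DJ_{\mathrm{sm}}$ along the ray $s\mapsto sz$ for $s\in[0,\theta]$.

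\emph{(f)} is the main obstacle, because the factor $1/d$ must be extracted. Differentiate $DJ_{\mathrm{sm}}(z)[h]$ once more in direction $h$ and set $u:=D w[h]=D^2p[h,h]$. Then
\[
D^2J_{\mathrm{sm}}(z)[h,h]=\Diag(u)-up^\intercal-pu^\intercal-2ww^\intercal .
\]
Explicitly $u_i=p_i\bigl(g_i^2-\sum_jp_jg_j^2\bigr)$, so every entry of $u$ and $w$ carries a factor $p_i$.

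The key observation is $p_i\le e^{z_i}/(d\,e^{-\|z\|_\infty})\le e^{2\|z\|_\infty}/d$. Row $i$ of each term is then bounded as follows:
\begin{itemize}
\item $\Diag(u)$ contributes $|u_i|\le p_i\cdot O(\|h\|_\infty^2)$;
\item $up^\intercal$ contributes $|u_i|\cdot\|p\|_1=|u_i|$;
\item $pu^\intercal$ contributes $p_i\|u\|_1$ with $\|u\|_1=O(\|h\|_\infty^2)$;
\item $2ww^\intercal$ contributes $2|w_i|\|w\|_1\le 8p_i\|h\|_\infty^2$.
\end{itemize}
In every case the row sum has a factor $p_i\le e^{2\|z\|_\infty}/d$ times $O(\|h\|_\infty^2)$, with no $d$ coming from the $\ell_1$ norms of $u$ and $w$, which are dimension-free.

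I would first get this with some absolute constant, then tune the oscillation bounds, again centering $h$, to reach $16$. If exactly $16$ is hard, I would state (f) with an absolute constant $C$, as the lemma's trailing phrase already allows; the downstream proofs only use the form $\frac{C}{d}e^{2\|z\|_\infty}\|h\|_\infty^2$.
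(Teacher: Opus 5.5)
Your plan follows the paper's proof item by item, but it stops short of the constants in (e) and (f), which are part of the statement. In (e), your estimate $\|w\|_1\le 2\|h\|_\infty$ only gives row sums $2|w_i|+p_i\|w\|_1\le 2\|w\|_\infty+\|w\|_1\le 3\|h\|_\infty$, and hence $3\theta\|z\|_\infty$ in the ray bound. ``Expand the row sum exactly'' is a plan, not an argument. The missing step is the mean-absolute-deviation bound
\[
\|w\|_1=\sum_i p_i|g_i|\le\Bigl(\sum_i p_ig_i^2\Bigr)^{1/2}=\mathrm{Var}_p(h)^{1/2}\le\|h\|_\infty .
\]
Together with $\|w\|_\infty\le\tfrac12\|h\|_\infty$ from (b), this gives exactly $2$; it is the paper's argument.

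In (f), the display claims $16/d$ with no free constant; the trailing ``for some absolute constant $C$'' is a leftover. So retreating to an unspecified $C$ does not prove the statement, even though it would be harmless downstream. Your centering idea does reach the stated constant. Use $\sigma^2:=\sum_jp_jg_j^2\le\|h\|_\infty^2$ and $|g_i|\le 2\|h\|_\infty$. Then $|u_i|=p_i|g_i^2-\sigma^2|\le 4p_i\|h\|_\infty^2$, $\|u\|_1\le 2\sigma^2\le 2\|h\|_\infty^2$, $|w_i|\le 2p_i\|h\|_\infty$, and $\|w\|_1\le\|h\|_\infty$. Hence row $i$ of $D^2J_{\mathrm{sm}}(z)[h,h]$ has absolute sum at most $(4+4+2+4)p_i\|h\|_\infty^2\le\frac{14}{d}e^{2\|z\|_\infty}\|h\|_\infty^2$.

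This sharper bookkeeping is genuinely needed. In the paper's notation $a=u$, $q=w$, its own accounting bounds the variance by $4\|h\|_\infty^2$, hence $|a_i|\le 8p_i\|h\|_\infty^2$, and uses $\|q\|_1\le 2\|h\|_\infty$ in the $qq^\intercal$ term. That only yields $32/d$, not the stated $16/d$.

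Elsewhere your choices are sound. The Schur-test bound $\|J_{\mathrm{sm}}\|_{\op}\le(\|J_{\mathrm{sm}}\|_{1\to1}\|J_{\mathrm{sm}}\|_{\infty\to\infty})^{1/2}$ replaces the paper's Popoviciu variance bound. Symmetry of $DJ_{\mathrm{sm}}(z)[h]$ legitimately transfers the $\infty\to\infty$ bound to $1\to1$.

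You were also right not to prove the equalities in (b), because they are false. At $z=\pmb{0}_d$ with $d=3$, $\|J_0\|_{\op}=\frac13$ while $\|J_0\|_{\infty\to\infty}=\frac49$; in general these are $1/d$ versus $2(d-1)/d^2$. So the paper's assertion of equality is an error, and only the bounds by $\frac12$ hold.
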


\begin{proof}
\noindent{\color{blue}\textbf{{(a) PSD and nullspace property.}}}
For any $x\in\mathbb R^d$,
\[
x^\intercal J_{\mathrm{sm}}(z)x
= \sum_i p_i x_i^2 - \Bigl(\sum_i p_i x_i\Bigr)^2
= \mathrm{Var}_p(x)\ge 0.
\]
Also,
\[
J_{\mathrm{sm}}(z)\pmb{1}_d
= \Diag(p)\pmb{1}_d - p(p^\intercal \pmb{1}_d)
= p - p(1)=0.
\]

\noindent{\color{blue}\textbf{{(b) Spectral-norm bound.}}}
Since $J_{\sm}(z)$ is symmetric,
\[
\|J_{\sm}(z)\|_{\op}
= \sup_{\|x\|_2=1}\mathrm{Var}_p(x).
\]
For any $x$ with $\|x\|_2=1$ and $m=\min x_i$, $M=\max x_i$,
\[
\mathrm{Var}_p(x)\le \frac{(M-m)^2}{4}.
\]
But
\[
(M-m)^2
= (x_i-x_j)^2
\le 2(x_i^2+x_j^2)\le 2\|x\|_2^2=2.
\]
Thus
\[
\mathrm{Var}_p(x) \le \frac12.
\]
Moreover, for each row $i$,
\[
\sum_j |(J_{\sm}(z))_{ij}|
= p_i(1-p_i)+p_i\sum_{j\neq i}p_j
=2p_i(1-p_i)\le \frac12,
\]
since $x(1-x)\le 1/4$; the same bound holds for each column by symmetry. Thus the $\infty\to\infty$ and $1\to 1$ operator norms also equal $\|J_{\sm}(z)\|_{\op}$.

\noindent{\color{blue}\textbf{{(c) $\softmax$ is $1/2$–Lipschitz in $\ell_2$.}}}
For $\gamma(\theta)=(1-\theta)z+\theta z'$,
\[
\softmax(z')-\softmax(z)
=\int_0^1 J_{\sm}(\gamma(\theta))(z'-z)\,d\theta.
\]
Take norms and use (b):
\[
\|\softmax(z')-\softmax(z)\|_2
\le \frac12\|z'-z\|_2.
\]

\noindent{\color{blue}\textbf{{(d) Jacobian Lipschitz bound.}}}
Differentiate 
\[
J_{\sm}(z)=\Diag(p(z)) - p(z)p(z)^\intercal.
\]
For $q:=Dp(z)[h]=J_{\sm}(z)h$,
\[
DJ_{\sm}(z)[h]
= \Diag(q)-(q p^\intercal + p q^\intercal).
\]
Operator norms satisfy
\[
\|\Diag(q)\|_{\op}\le \|q\|_\infty\le \|q\|_2,
\qquad
\|q p^\intercal\|_{\op}=\|q\|_2\|p\|_2\le \|q\|_2,
\]
and likewise for $pq^\intercal$. Hence
\[
\|DJ_{\sm}(z)[h]\|_{\op} \le 3\|q\|_2.
\]
Using $\|q\|_2=\|J_{\sm}(z)h\|_2\le \frac12\|h\|_2$ from (b),
\[
\|DJ_{\sm}(z)[h]\|_{\op}\le \frac32\|h\|_2.
\]
Integrate along the segment from $z$ to $z'$ to conclude
\[
\|J_{\sm}(z)-J_{\sm}(z')\|_{\op}\le \frac32\|z-z'\|_2.
\]

\noindent {\color{blue}\textbf{{(e) Bounds for $DJ_{\sm}(z)[h]$ in $\infty$– and $1$–norms.}}}
From above,
\begin{equation}
DJ_{\sm}(z)[h]
= \Diag(q)-(q p^\intercal + p q^\intercal). \label{eq:DJsm-h}
\end{equation}

\textbf{Step 1: generic bound.}
Using induced–norm identities,
\[
\|DJ_{\sm}(z)[h]\|_{\alpha\to\alpha}
\le 2\|q\|_\infty + \|q\|_1,
\qquad \alpha\in\{1,\infty\}.
\]

\textbf{Step 2: bounds on $q$.}
From (b):
\[
\|q\|_\infty=\|J_{\sm}(z)h\|_\infty
\le \frac12\|h\|_\infty
\]

We also need the dimension-free inequality
\[
\|q\|_1=\sum_i p_i|h_i-\langle p,h\rangle|
\le \|h\|_\infty.
\tag{$\star$}
\]
This follows by thinking $p$ as a probability distribution and define $X$ as a discrete random variable with taking $h_i$ with probability $p_i$, then $\EE[|X - \EE[X]|] \leq \sqrt{\EE[|X - \EE[X]|^2]} = \sqrt{\text{Var}(X)} \leq \norm{h}_\infty$.

\textbf{Step 3: final estimates.}

For the $\infty\to\infty$ norm:
\[
\|DJ_{\sm}(z)[h]\|_{\infty\to\infty}
\le 2\cdot \frac12\|h\|_\infty + \|h\|_\infty
= 2\|h\|_\infty.
\]

For the $1\to 1$ norm:
\[
\|DJ_{\sm}(z)[h]\|_{1\to1} \leq 2 \|h\|_\infty.
\]

This completes the proof. 

\noindent{\color{blue}\textbf{{(f) $\infty \to \infty$ bound of the second derivative of $J_{\sm}$.}}}
First, we differentiate \eqref{eq:DJsm-h} in direction $h$.
Recall that for $p(z)=\softmax(z)$,
\[
J_{\sm}(z) \;=\; \Diag(p) - p p^\intercal,
\]
and for a fixed direction $h$ we write
\[
q := Dp(z)[h] = J_{\sm}(z) h, \qquad a := D^2 p(z)[h,h].
\]
From \eqref{eq:DJsm-h} we have
\[
DJ_{\sm}(z)[h]
=
\Diag(q) - qp^\intercal - p q^\intercal.
\]
Differentiating once more in the direction $h$ gives
\begin{align*}
D^2J_{\sm}(z)[h,h]
&= D(\Diag(q))[h]- D(qp^\intercal)[h]- D(pq^\intercal)[h]
\\
&= \Diag(Dq[h]) - \bigl((Dq[h])p^\intercal +  q(Dp[h])^\intercal\bigr)
      - \bigl(p(Dq[h])^\intercal +  (Dp[h])q^\intercal\bigr).
\end{align*}
Since $Dp[h]=q$ and $Dq[h]=D^2p(z)[h,h]=a$, this becomes
\begin{align*}
D^2J_{\sm}(z)[h,h]
&= \Diag(a) - \bigl(a p^\intercal +  qq^\intercal\bigr)
      - \bigl(p a^\intercal +  qq^\intercal\bigr)
\\
&= \Diag(a) - a p^\intercal - p a^\intercal - 2 q q^\intercal.
\end{align*}
Thus
\[
D^2 J_{\sm}(z)[h,h]
=
\Diag(a) - a p^\intercal - p a^\intercal - 2 q q^\intercal,
\quad
p:=p(z),\quad q:=J_{\sm}(z)h,\quad a:=D^2 p(z)[h,h].
\]
By standard calculation, we can derive that 
$a_i =   p_i(z)((h_i - \mu)^2 - \sigma^2)$, where $\mu = \sum_{j=1}^d p_j(z) h_j$ and $\sigma^2 = \sum_{j=1}^d p_j(z)(h_j - \mu)^2$. Due to the definition of $\norm{h}_\infty$, we have $|h_i - \mu| \leq 2\norm{h}_\infty$ and $|\sigma^2| \leq 4\norm{h}_\infty^2$. Thus 
\[
|a_i| \leq p_i(z)(4\norm{h}_\infty^2 + 4\norm{h}_\infty^2) = 8p_i(z)\norm{h}_\infty^2.
\]
Therefore,
\[
\|\Diag(a)\|_{\infty\to\infty} \leq 8\|p\|_\infty\norm{h}_\infty^2.
\]
Similarly, we have $\|a p^\intercal\|_{\infty\to\infty} \leq 8\|p\|_\infty\norm{h}_\infty^2$ and $\|p a^\intercal\|_{\infty\to\infty} \leq 8\|p\|_\infty\norm{h}_\infty^2$.

Now, we bound $\|q q^\intercal\|_{\infty\to\infty}$ by bounding $\|q\|_\infty$ and $\|q\|_1$, since $\|q q^\intercal\|_{\infty\to\infty}\le \|q\|_\infty\|q\|_1$.
We bound $\|q\|_\infty$ by using the definition of $q$:
\[
q = J_{\sm}(z)h = \Diag(p(z))h - p(z)(p(z)^\intercal h)
\]
Therefore we have the followings:
\begin{align*}
|q_i| &\leq p_i(z)|h_i| + p_i(z)\sum_{j=1}^d p_j(z)|h_j| \leq p_i(z) \norm{h}_\infty + p_i(z) \norm{h}_\infty \leq 2\norm{p(z)}_\infty \norm{h}_\infty \\
\norm{q}_1 &\leq \norm{\Diag(p(z))h}_1 + \norm{p(z)(p(z)^\intercal h)}_1 \leq \norm{h}_\infty + \norm{h}_\infty \leq 2\norm{h}_\infty 
\end{align*}
Thus, we have $\|q\|_\infty \leq 2\norm{p(z)}_\infty \norm{h}_\infty$ and $\|q\|_1 \leq 2\norm{h}_\infty$, therefore $\|q q^\intercal\|_{\infty\to\infty} \leq 4\norm{p(z)}_\infty \norm{h}_\infty^2$.

Collecting all the bounds, we have
\[
\|D^2 J_{\sm}(z)[h,h]\|_{\infty\to\infty} \leq 32\|p(z)\|_\infty\norm{h}_\infty^2.
\]

Now, we have 
\begin{align*}
    p_i(z) = p_i (z - \bar{z}) = \frac{e^{z_i - \bar{z}}}{\sum_{j=1}^d e^{z_j - \bar{z}}}
\end{align*}
where $\bar{z} = \frac1d\sum_{j=1}^d z_j$. We know that $\frac1d\sum_{j=1}^d e^{z_j - \bar{z}} \geq 1$ due to Jensen's inequality, therefore $\|p(z)\|_\infty \leq \frac{1}{d}e^{2\|z\|_\infty}$, which concludes the proof.

\end{proof}

\begin{lemma}\label{lem:J0-refined-infty}
    Define $J_0 = \Diag(u)-uu^\intercal$ with $u=\frac1d\pmb{1}_d$. For every $x\in\RR^d$,
    \[
    \|J_0 x\|_\infty \;\le\; \frac{1}{d}\Big(\|x\|_\infty + |\pmb{1}_d^\intercal x|\Big)
    \]
    \end{lemma}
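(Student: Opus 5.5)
The plan is to compute $J_0 x$ explicitly and then bound each coordinate with the triangle inequality. Since $u=\frac1d\pmb{1}_d$, we have $\Diag(u)=\frac1d I_{d\times d}$ and $uu^\intercal=\frac{1}{d^2}\pmb{1}_d\pmb{1}_d^\intercal$. Hence
\[
J_0 x \;=\; \frac1d\,x-\frac{1}{d^2}\,\pmb{1}_d\,(\pmb{1}_d^\intercal x)
\;=\;\frac1d\Big(x-\frac{\pmb{1}_d^\intercal x}{d}\,\pmb{1}_d\Big).
\]
In words, $J_0x$ is $\frac1d$ times the centered vector $x-\bar x\,\pmb{1}_d$, where $\bar x:=\frac1d\pmb{1}_d^\intercal x$. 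This is consistent with the identification $J_0=\frac1d\Pperp{\pmb{1}_d}$ used in the proof of \Cref{thm:stationary}.

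Next, fix a coordinate $i\in[d]$. By the triangle inequality,
\[
|(J_0x)_i|\le \frac1d\Big(|x_i|+\frac{|\pmb{1}_d^\intercal x|}{d}\Big)\le \frac1d\Big(\|x\|_\infty+|\pmb{1}_d^\intercal x|\Big),
\]
where the last step uses $|x_i|\le\|x\|_\infty$ and $1/d\le 1$. Taking the maximum over $i$ gives the claimed bound.

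No step is a genuine obstacle; the only care needed is bookkeeping of the $1/d$ versus $1/d^2$ factors. The stated bound is loose by a factor of $d$ on the $|\pmb{1}_d^\intercal x|$ term, so there is slack. Combining the result with $|\pmb{1}_d^\intercal x|\le d\|x\|_\infty$ recovers the form $\|J_0x\|_\infty\le\frac2d\|x\|_\infty$ that is invoked as \Cref{lem:J0-infty} in the main proof. That cruder bound also follows directly from the first display via $|x_i-\bar x|\le 2\|x\|_\infty$.
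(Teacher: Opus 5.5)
Your proof is correct and matches the paper's argument: write $(J_0x)_i=\frac1d(x_i-\bar x)$, apply the triangle inequality, and use $|\bar x|\le|\pmb{1}_d^\intercal x|$. One caution on your closing aside: substituting $|\pmb{1}_d^\intercal x|\le d\|x\|_\infty$ into the \emph{stated} bound only yields $\frac{d+1}{d}\|x\|_\infty$, so the $\frac2d\|x\|_\infty$ form needs the sharper intermediate bound with $|\bar x|\le\|x\|_\infty$, which is what your last sentence uses.
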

    
    \begin{proof}
    Since $J_0=\frac1d I_{d\times d}-\frac1{d^2}\pmb{1}_d\pmb{1}_d^\intercal$, we have 
    \(
    (J_0x)_i=\frac1d\big(x_i-(\pmb{1}_d^\intercal x)/d\big).
    \)
    Hence
    \(
    \|J_0x\|_\infty=\frac1d\max_i |x_i-\bar x|
    \le \frac1d\big(\|x\|_\infty+|\bar x|\big).
    \)
    \end{proof}
\begin{lemma}[Null-direction reduction]\label{lem:null-direction}
    For every $z\in\RR^d$ and every $x\in\RR^d$,
    \[
    \big(J_{\sm}(z)-J_0\big)\,x \;=\; \big(J_{\sm}(z)-J_0\big)\,J_0x.
    \]
    Equivalently, with $R_t:=J_t(k)-J_0$ and $A_\theta:=J_{\sm}(\theta z)-J_0$ (for any $z$),
    \[
    R_t x \;=\; R_t J_0 x,
    \qquad
    A_\theta x \;=\; A_\theta J_0 x.
    \]
    \end{lemma}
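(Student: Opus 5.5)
The plan is to exploit the one structural fact shared by both matrices in the statement: each annihilates the all-ones direction. Both matrices are symmetric and have $\pmb{1}_d$ in their kernel. For $J_{\sm}(z)$ this is \Cref{lem:softmax}-(a): $J_{\sm}(z)\pmb{1}_d=p-p(p^\intercal\pmb{1}_d)=\pmb{0}_d$. For $J_0=\Diag(u)-uu^\intercal$ it is the special case $z=\pmb{0}_d$ of the same computation. Hence
\[
\big(J_{\sm}(z)-J_0\big)\pmb{1}_d=\pmb{0}_d .
\]
This holds for every $z$, and in particular for $\theta z$ and for $-kS_{t-1}$, which gives the $R_t$ and $A_\theta$ forms of the statement.

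Next I decompose an arbitrary $x$ as
\[
x=\bar x\,\pmb{1}_d+\Pperp{\pmb{1}_d}x,\qquad \bar x:=\tfrac1d\pmb{1}_d^\intercal x .
\]
By the kernel property, the $\pmb{1}_d$-component is killed, so
\[
\big(J_{\sm}(z)-J_0\big)x=\big(J_{\sm}(z)-J_0\big)\Pperp{\pmb{1}_d}x .
\]
Thus the action of $J_{\sm}(z)-J_0$ depends on $x$ only through its zero-mean part. It remains to express $\Pperp{\pmb{1}_d}x$ through $J_0x$. Expanding the definition gives
\[
J_0=\tfrac1d I_{d\times d}-\tfrac1{d^2}\pmb{1}_d\pmb{1}_d^\intercal=\tfrac1d\Pperp{\pmb{1}_d},
\qquad\text{so}\qquad
J_0x=\tfrac1d\Pperp{\pmb{1}_d}x .
\]
Substituting, the right-hand side of the statement equals
\[
\big(J_{\sm}(z)-J_0\big)J_0x=\tfrac1d\big(J_{\sm}(z)-J_0\big)\Pperp{\pmb{1}_d}x=\tfrac1d\big(J_{\sm}(z)-J_0\big)x .
\]

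This last step is the main obstacle, and I do not expect it to close as worded. Combining the two displays, the stated equality holds exactly when $(1-\tfrac1d)\big(J_{\sm}(z)-J_0\big)x=\pmb{0}_d$. For $d\ge2$ that means $\big(J_{\sm}(z)-J_0\big)x=\pmb{0}_d$, which fails in general, for instance when $z\neq\pmb{0}_d$ and $x$ is a generic zero-sum vector.

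What I would do first is check whether anything in the statement can be reinterpreted so that $J_0$ acts as the identity on $\pmb{1}_d^\perp$; under the paper's definition $J_0=\tfrac1d\Pperp{\pmb{1}_d}$ it does not. Failing that, I would record the identity in the form the argument above actually yields:
\[
\big(J_{\sm}(z)-J_0\big)x=d\,\big(J_{\sm}(z)-J_0\big)J_0x .
\]
This corrected form should be tracked in the places the lemma is invoked, for example in bounds of the shape $\|R_tJ_0S\|_\infty\le\|R_t\|_{\infty\to\infty}\|J_0S\|_\infty$ in the estimates for $E_{2,t}$, $E_{3,t}$ and $\Sigma_{2,\mathrm{pert}}$. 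There each invocation picks up a factor $d$, since the $\|J_0S\|$ bounds carry a $1/d$ that the corrected form gives back. I would then recheck whether the resulting changes in the $d$-dependence of those remainder terms still leave them dominated by the main terms at the stated scale of $k$.
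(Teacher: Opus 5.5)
Your analysis is correct: the identity is false as stated, and your computation isolates the step where the paper's own proof fails. The paper argues as you do. It first notes $(J_{\sm}(z)-J_0)\pmb{1}_d=\pmb{0}_d$, then writes $x=J_0x+\alpha\pmb{1}_d$ with $\alpha=\tfrac1d\pmb{1}_d^\intercal x$. That decomposition would require $J_0=\Pperp{\pmb{1}_d}$, but the paper itself records $J_0=\tfrac1d\Pperp{\pmb{1}_d}$. The valid decomposition is $x=d\,J_0x+\alpha\pmb{1}_d$, which gives exactly your corrected form $(J_{\sm}(z)-J_0)x=d\,(J_{\sm}(z)-J_0)J_0x=(J_{\sm}(z)-J_0)\Pperp{\pmb{1}_d}x$.

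A minimal counterexample: for $d=2$, $J_{\sm}(z)=p_1p_2(e_1-e_2)(e_1-e_2)^\intercal$ and $J_0=\tfrac14(e_1-e_2)(e_1-e_2)^\intercal$. With $x=e_1-e_2$, the left side is $2(p_1p_2-\tfrac14)(e_1-e_2)$ and the right side is $(p_1p_2-\tfrac14)(e_1-e_2)$. These differ whenever $p_1\neq p_2$.

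One correction to your failure condition: it needs $z\notin\mathrm{span}(\pmb{1}_d)$, not merely $z\neq\pmb{0}_d$, because $J_{\sm}(c\pmb{1}_d)=J_0$ by shift invariance. For $R_t$ this means $S_{t-1}\notin\mathrm{span}(\pmb{1}_d)$, which holds almost surely for $k\neq 0$ and $t\ge 2$.

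Your downstream warning is also justified, and the repair is more than bookkeeping. The same slip recurs in the Claim bounding $Q_t(k)$, where $S_\perp$ is identified with $J_0S$ and bounded by $\tfrac2d\|S\|_\infty$. With the correct $S_\perp=\Pperp{\pmb{1}_d}S$ one only has $\|S_\perp\|_\infty\le 2\|S\|_\infty$. The bound then becomes $O(kTd^{-1/2}\log(ed))$ per step, and for large $d$ its sum over $t$ is no longer dominated by the $\Theta(kT^2/d)$ contribution of the $N_{1,t}$ terms.

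The $E_{3,t}$ estimate similarly picks up a factor $d^2$, from $\|\Pperp{\pmb{1}_d}S\|_\infty\le 2\|S\|_\infty$ and $\|\Pperp{\pmb{1}_d}S\|_1\le 2d\|S\|_\infty$. It becomes $8dk^3\|S\|_\infty^4$, which at $k\asymp 1/\sqrt{Td}$ exceeds the $\Sigma_{2,0}$ term for large $d$.

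So with the corrected lemma, those remainder bounds do not close through the same H\"older chain. You would need to recover the missing $1/d$ elsewhere, for example from $\|\softmax(z)\|_\infty$ in the Jacobian-derivative bounds, or exploit cancellations such as the $S\mapsto -S$ symmetry.
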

    
    \begin{proof}
    By \Cref{lem:softmax}-(a), $J_{\sm}(z)\pmb{1}_d=0$ for all $z$, and also $J_0\pmb{1}_d=0$. Hence
    \(
    \big(J_{\sm}(z)-J_0\big)\pmb{1}_d=0.
    \)
    Write $x=J_0x+\alpha\pmb{1}_d$, where $\alpha:=\tfrac1d\pmb{1}_d^\intercal x$. Then
    \[
    \big(J_{\sm}(z)-J_0\big)\,x
    = \big(J_{\sm}(z)-J_0\big)\,J_0x + \alpha\,\big(J_{\sm}(z)-J_0\big)\pmb{1}_d
    = \big(J_{\sm}(z)-J_0\big)\,J_0x.\qedhere
    \]
    \end{proof}

\begin{lemma}\label{lem:globalsign}
Let $x\in\mathbb R^d$ and $\alpha\ge 0$. Set $\pi:=\operatorname{\softmax}(-\alpha x)$ and $p:=\diag(\pi)-\pi\pi^\intercal$. Then $\langle p x,\pi\rangle \le 0$.
\end{lemma}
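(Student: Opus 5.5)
The plan is to compute $\langle p x,\pi\rangle$ explicitly, recognize it as a covariance under the distribution $\pi$, and then use that $\pi_i$ is a nonincreasing function of $x_i$ when $\alpha\ge 0$.

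\textbf{Step 1: expand the inner product.} Write $\mu:=\langle \pi,x\rangle=\sum_i \pi_i x_i$. Since $p x=\Diag(\pi)x-\pi(\pi^\intercal x)$, we have $(px)_i=\pi_i(x_i-\mu)$. Therefore
\[
\langle p x,\pi\rangle=\sum_{i=1}^d \pi_i^2 (x_i-\mu)=\sum_{i=1}^d \pi_i\,\pi_i\,(x_i-\mu).
\]
Let $I$ be a random index with law $\pi$. The right-hand side is then $\EE_\pi[\pi_I(x_I-\EE_\pi x_I)]=\mathrm{Cov}_\pi(\pi_I,x_I)$.

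\textbf{Step 2: symmetrize.} Use the standard identity $\mathrm{Cov}(f(I),g(I))=\tfrac12\EE[(f(I)-f(I'))(g(I)-g(I'))]$ with $I'$ an independent copy of $I$. Concretely, I will verify directly, using $\sum_j\pi_j=1$, that
\[
\sum_i \pi_i^2(x_i-\mu)=\frac12\sum_{i,j}\pi_i\pi_j(\pi_i-\pi_j)(x_i-x_j).
\]
To check this, expand the right side. It equals
\[
\tfrac12\Big(\sum_{i}\pi_i^2x_i+\sum_j\pi_j^2x_j-\sum_{i,j}\pi_i^2\pi_jx_j-\sum_{i,j}\pi_i\pi_j^2x_i\Big)=\sum_i\pi_i^2x_i-\mu\sum_i\pi_i^2 .
\]
This is exactly the left side.

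\textbf{Step 3: sign of each term.} Since $\pi_i=e^{-\alpha x_i}/\sum_j e^{-\alpha x_j}$ and $\alpha\ge0$, the map $x_i\mapsto \pi_i$ is nonincreasing in $x_i$ (the normalizer is common to all $i$). Hence $(\pi_i-\pi_j)(x_i-x_j)\le 0$ for every pair $(i,j)$. The weights $\pi_i\pi_j$ are nonnegative, so every summand is $\le0$, and $\langle px,\pi\rangle\le0$ follows.

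This is Chebyshev's association inequality, and there is no real obstacle. The only points needing care are the bookkeeping in the symmetrization identity of Step 2, and noting that the case $\alpha=0$ gives equality: then $\pi$ is uniform, so every $\pi_i-\pi_j=0$.
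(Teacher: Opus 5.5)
Your proof is correct, but it reaches the inequality by a different mechanism than the paper. Both arguments begin with the same expansion $\langle px,\pi\rangle=\sum_i\pi_i^2(x_i-\mu)$.

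The paper then factors this as $\bigl(\sum_i\pi_i^2\bigr)(\mu_{q}-\mu_\pi)$, where $q_i=\pi_i^2/\sum_j\pi_j^2$ is the sharpened distribution. It sorts the coordinates so that $x$ is increasing and $\pi$ is decreasing. It then appeals to majorization (Hardy--Littlewood--P\'olya) to conclude that $q$ puts more mass on the small-$x$ coordinates, hence $\mu_q\le\mu_\pi$.

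You instead read the sum as $\mathrm{Cov}_\pi(\pi_I,x_I)$ and symmetrize. This turns the claim into a termwise sign check, which is Chebyshev's sum inequality.

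Your route is shorter and fully self-contained: it needs no sorting and no appeal to majorization. It also gives the equality case for free. Since every $\pi_i>0$, equality holds iff $\alpha=0$ or $x$ is constant.

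The paper's step needs slightly more than it states. Majorization concerns sorted vectors, whereas what is actually used is the prefix-sum dominance $\sum_{i\le m}q_i\ge\sum_{i\le m}\pi_i$ in the common index order. This holds because $q_i/\pi_i\propto\pi_i$ is decreasing, a monotone-likelihood-ratio fact. The paper also writes the relation as $\pi\succ q$, which is the reverse of its own phrase ``$q$ is more concentrated.''

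What the paper's formulation buys is an interpretation: sharpening $\pi$ shifts weight toward low-loss coordinates. Yours buys a two-line verification that bypasses the majorization bookkeeping entirely.
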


\begin{proof}
Sort indices so that $x_1\le x_2\le\cdots\le x_d$. Then $\pi_1\ge \pi_2\ge\cdots\ge \pi_d$ because $\pi_i\propto e^{-\alpha x_i}$.
Write $\mu_\pi:=\sum_i \pi_i x_i$ and note
\[
\langle p x,\pi\rangle
= \sum_{i=1}^d \pi_i^2 (x_i-\mu_\pi)
= \Big(\sum_{i=1}^d \pi_i^2\Big)\left(\underbrace{\frac{\sum_i \pi_i^2 x_i}{\sum_i \pi_i^2}}_{=:~\mu_{\pi^2}} - \mu_\pi\right).
\]
Let $q:=\pi^2/\sum_j \pi_j^2$ be $\pi$ squared and renormalized; then $q$ is \emph{more concentrated} than $\pi$, i.e. $\pi\succ q$ in the majorization order (Hardy–Littlewood–Pólya).
Since $(x_i)$ is increasing, majorization implies
\[
\mu_\pi=\sum_i \pi_i x_i \ \ge\ \sum_i q_i x_i \ =\ \mu_{\pi^2}
\quad\Longrightarrow\quad \mu_{\pi^2} - \mu_\pi \ \le\ 0,
\]
hence $\langle p x,\pi\rangle\le 0$.
\end{proof}

\begin{lemma}\label{lem:J0-infty}
    Let $J_0 = \Diag(u)-uu^\intercal$ with $u=\frac1d\pmb{1}_d$. Then
    \[
    \|J_0\|_{\infty\to\infty} = \|J_0\|_{1\to 1} \;=\; \frac{2(d-1)}{d^2}\;\le\;\frac{2}{d}, \qquad \norm{J_0}_{\op} = \frac{1}{d}
    \]
    \end{lemma}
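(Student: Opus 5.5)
The statement to prove is \Cref{lem:J0-infty}. I would compute everything from the explicit form $J_0=\frac1d I_{d\times d}-\frac1{d^2}\pmb{1}_d\pmb{1}_d^\intercal$, which follows immediately from $u=\frac1d\pmb{1}_d$ because $\Diag(u)=\frac1d I_{d\times d}$ and $uu^\intercal=\frac1{d^2}\pmb{1}_d\pmb{1}_d^\intercal$. The entries are then $(J_0)_{ii}=\frac1d-\frac1{d^2}=\frac{d-1}{d^2}$ and $(J_0)_{ij}=-\frac1{d^2}$ for $i\neq j$.

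\emph{The $\infty\to\infty$ and $1\to 1$ norms.} The $\infty\to\infty$ induced norm is the maximum absolute row sum. Every row of $J_0$ has the same absolute sum,
\[
\frac{d-1}{d^2}+(d-1)\cdot\frac1{d^2}=\frac{2(d-1)}{d^2}.
\]
Since $J_0$ is symmetric, the maximum absolute column sum, which is the $1\to1$ norm, takes the same value. The bound $\frac{2(d-1)}{d^2}\le\frac2d$ is just $d-1\le d$.

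\emph{The spectral norm.} I would note that $\frac1d\pmb{1}_d\pmb{1}_d^\intercal$ is the orthogonal projector onto $\mathrm{span}(\pmb{1}_d)$. Hence $J_0=\frac1d\bigl(I_{d\times d}-\frac1d\pmb{1}_d\pmb{1}_d^\intercal\bigr)=\frac1d\Pperp{\pmb{1}_d}$, which is $\frac1d$ times an orthogonal projector. Its eigenvalues are therefore $0$, once, on $\pmb{1}_d$, and $\frac1d$ with multiplicity $d-1$ on $\pmb{1}_d^\perp$. Because $J_0$ is symmetric PSD, its operator norm equals its largest eigenvalue, namely $\frac1d$. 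This uses $d\ge2$, so that $\pmb{1}_d^\perp$ is nontrivial.

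There is no real obstacle here. The only point that needs care is that the row sum takes absolute values of the off-diagonal entries. This is what makes the $\infty\to\infty$ norm $\frac{2(d-1)}{d^2}$ rather than the zero row sum of $J_0$. It is also why this norm is nearly twice the spectral norm $\frac1d$.
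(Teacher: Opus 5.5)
Your proof is correct. For the $\infty\to\infty$ and $1\to1$ norms it is exactly the paper's argument: every row has absolute sum $\frac{2(d-1)}{d^2}$, and by symmetry so does every column.

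For the spectral norm your route differs and is cleaner. You read the spectrum directly off $J_0=\frac1d\Pperp{\pmb{1}_d}$ and use that $J_0$ is symmetric PSD. The paper instead goes through the singular values via $J_0J_0^\intercal$, and as written that step is wrong in two ways that happen to cancel:
\begin{itemize}
\item It asserts $J_0J_0^\intercal=\frac1{d^2}\pmb{1}_d\pmb{1}_d^\intercal$, whereas in fact $J_0J_0^\intercal=J_0^2=\frac1{d^2}\Pperp{\pmb{1}_d}$.
\item It equates the operator norm with the top eigenvalue of $J_0J_0^\intercal$, rather than with its square root.
\end{itemize}
With the correct identity, the top eigenvalue of $J_0J_0^\intercal$ is $\frac1{d^2}$, and its square root gives $\frac1d$, matching your eigenvalue argument.

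Your note that $d\ge 2$ is needed for $\|J_0\|_{\op}=\frac1d$ is also right (for $d=1$, $J_0=0$); the paper leaves this implicit through its standing assumption $d\ge2$.
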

    
    \begin{proof}
    The $i$th row of $J_0$ has diagonal entry $(d-1)/d^2$ and off-diagonal entries $-1/d^2$.
    Thus row $i$ has $\ell_1$-norm
    \[
    \frac{d-1}{d^2} + (d-1)\frac{1}{d^2} = \frac{2(d-1)}{d^2}.
    \]
    The same holds for column sums, yielding the claim. Lastly, the operator norm is the largest singular value, which is the largest eigenvalue of $J_0 J_0^\intercal = \frac{1}{d^2} \pmb{1}_d \pmb{1}_d^\intercal$, so $\norm{J_0}_{\op} = \frac{1}{d}$.
    \end{proof}
    \begin{lemma}\label{lem:DJ0}
        For all $h$, $\|D J_{\sm}(\pmb{0})[h]\|_{\infty\to\infty}\le \frac{6}{d}\|h\|_\infty$.
        \end{lemma}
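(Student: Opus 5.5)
We will compute $DJ_{\sm}(\pmb{0})[h]$ explicitly from the formula already derived in the proof of \Cref{lem:softmax}-(d),(e), namely
\[
DJ_{\sm}(z)[h]=\Diag(q)-qp^\intercal-pq^\intercal,\qquad p=\softmax(z),\quad q=J_{\sm}(z)h.
\]
At $z=\pmb{0}_d$ we have $p=u=\tfrac1d\pmb{1}_d$ and $J_{\sm}(\pmb{0})=J_0$. Hence $q=J_0h$, whose coordinates are $q_i=\tfrac1d(h_i-\bar h)$ with $\bar h:=\tfrac1d\pmb{1}_d^\intercal h$. The plan is to bound each of the three terms separately in the $\infty\to\infty$ norm, which is the maximum absolute row sum, and then apply the triangle inequality.

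First we record two elementary bounds on $q$. Since $|h_i-\bar h|\le 2\|h\|_\infty$, we get $\|q\|_\infty\le \tfrac2d\|h\|_\infty$; this is also \Cref{lem:J0-infty}, because $\|J_0\|_{\infty\to\infty}\le 2/d$. Summing the same coordinate bound over $i$ gives $\|q\|_1=\tfrac1d\sum_i|h_i-\bar h|\le 2\|h\|_\infty$.

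Next we bound the three terms.
\begin{itemize}
\item The diagonal term satisfies $\|\Diag(q)\|_{\infty\to\infty}=\|q\|_\infty\le\tfrac2d\|h\|_\infty$.
\item For $qu^\intercal$, row $i$ has absolute sum $|q_i|\,\|u\|_1=|q_i|$, so $\|qu^\intercal\|_{\infty\to\infty}=\|q\|_\infty\le\tfrac2d\|h\|_\infty$.
\item For $uq^\intercal$, every row equals $\tfrac1d q^\intercal$, so its absolute row sum is $\tfrac1d\|q\|_1\le\tfrac2d\|h\|_\infty$.
\end{itemize}
Adding the three contributions gives $\|DJ_{\sm}(\pmb{0})[h]\|_{\infty\to\infty}\le\tfrac6d\|h\|_\infty$, as claimed.

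There is no real obstacle here. The only point needing care is the third term, where the factor $1/d$ must come from the row $\tfrac1d q^\intercal$ and be paired with the dimension-free $\ell_1$ bound on $q$. Bounding it instead via $\|q\|_\infty$ would lose the $1/d$ gain.
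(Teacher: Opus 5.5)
Your proof is correct and is the same argument as the paper's: write $DJ_{\sm}(\pmb{0})[h]=\Diag(J_0h)-(J_0h)u^\intercal-u(J_0h)^\intercal$ and bound each of the three terms by $\frac{2}{d}\|h\|_\infty$ using $\|J_0h\|_\infty\le\frac{2}{d}\|h\|_\infty$. Your closing remark is wrong, though: bounding the third term through $\|q\|_\infty$ loses nothing, since $\|uq^\intercal\|_{\infty\to\infty}=\frac1d\|q\|_1\le\|q\|_\infty$ (your $\ell_1$ bound was itself obtained by summing the $\ell_\infty$ coordinate bound), which is why the paper needs only \Cref{lem:J0-infty}.
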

        
        \begin{proof}
        For all $h$,
        \[
        D J_{\sm}(\pmb{0})[h]=\Diag(J_0h)-(J_0h)u^\intercal-u(J_0h)^\intercal,\quad u=\tfrac1d\pmb{1}_d,
        \]
        Using \Cref{lem:J0-infty}, we have $\|J_0h\|_\infty\le \frac{2}{d}\|h\|_\infty$, so
        \[
        \|D J_{\sm}(\pmb{0})[h]\|_{\infty\to\infty}\le \frac{6}{d}\|h\|_\infty.
        \]
        \end{proof}
    \begin{lemma}[Gaussian supremum moments]\label{lem:gauss-max-final}
        Let $Z\sim\cN(\pmb{0}_d, I_{d\times d})$ and $q\ge 1$. There exists $C_q>0$, independent of $d$, such that
        \[
        \EE\|Z\|_\infty^q \;\le\; C_q\,(\log(e d))^{q/2}.
        \]
        \end{lemma}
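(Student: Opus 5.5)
The plan is to reduce to a single coordinate and then combine a tail-integral bound with a union bound. Write $M:=\|Z\|_\infty=\max_{i\in[d]}|Z_i|$. Using the layer-cake formula,
\[
\EE M^q=\int_0^\infty q\,s^{q-1}\,\Pr(M>s)\,ds .
\]
The integral is split at the threshold $s_0:=\sqrt{2\log(2d)}$, which is of order $\sqrt{\log(ed)}$ uniformly in $d\ge 1$ because $\log(2d)\le 2\log(ed)$.

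On $[0,s_0]$ we use the trivial bound $\Pr(M>s)\le 1$, so this part contributes at most $s_0^q\le 2^{q}(\log(ed))^{q/2}$. On $[s_0,\infty)$ we use the union bound together with the standard Gaussian tail $\Pr(|Z_1|>s)\le 2e^{-s^2/2}$, which gives $\Pr(M>s)\le 2d\,e^{-s^2/2}$. Next, substitute $s=s_0+r$ with $r\ge0$. Since $(s_0+r)^2\ge s_0^2+r^2$ and $2d\,e^{-s_0^2/2}=1$, we get $\Pr(M>s_0+r)\le e^{-r^2/2}$. The tail part is therefore bounded by
\[
\int_0^\infty q\,(s_0+r)^{q-1}e^{-r^2/2}\,dr .
\]
Applying $(s_0+r)^{q-1}\le 2^{q-1}\big(s_0^{q-1}+r^{q-1}\big)$ bounds this by $C_q'\big(s_0^{q-1}+1\big)$. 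Here $C_q'$ depends only on $q$, through the Gaussian moment integrals $\int_0^\infty r^{j}e^{-r^2/2}\,dr$.

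To finish, note that $\log(ed)\ge1$, so both $s_0^{q-1}$ and $1$ are at most a constant times $(\log(ed))^{q/2}$. Adding the two pieces gives $\EE M^q\le C_q(\log(ed))^{q/2}$ with $C_q$ independent of $d$.

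The only delicate point is making the bound uniform down to $d=1,2$. This is handled by working with $\log(ed)$ rather than $\log d$ and by including the factor $2$ in $s_0$, so that $2d\,e^{-s_0^2/2}=1$ exactly. The scaled version used in the main text, $\EE\|S_{t-1}\|_\infty^q\le C_q (t-1)^{q/2}(\log(ed))^{q/2}$, then follows immediately from $S_{t-1}\stackrel{d}{=}\sqrt{t-1}\,Z$.
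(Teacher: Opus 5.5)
Your proof is correct and follows the same route as the paper: a union bound giving $\Pr(\|Z\|_\infty > s)\le 2d\,e^{-s^2/2}$, followed by integrating this tail in the layer-cake formula. The paper compresses that integration into ``integrating by parts''; you carry it out explicitly, and your split at $s_0=\sqrt{2\log(2d)}$, with $2d\,e^{-s_0^2/2}=1$, is what makes the constant uniform down to small $d$.
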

        
        \begin{proof}
        By a union bound,
        \(
        \PP(\|Z\|_\infty\ge t)\le 2d\,e^{-t^2/2}
        \).
        Integrating by parts gives $\EE\|Z\|_\infty^q \le C_q(\log(e d))^{q/2}$ for some $C_q$.
        \end{proof}
        
\begin{lemma}\label{lem:DJ0}
    For all $h$, $\|D J_{\sm}(\pmb{0})[h]\|_{\infty\to\infty}\le \frac{6}{d}\|h\|_\infty$.
    \end{lemma}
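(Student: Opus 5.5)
The statement to prove is the duplicated \Cref{lem:DJ0}: for all $h\in\R^d$, $\|D J_{\sm}(\pmb{0})[h]\|_{\infty\to\infty}\le \frac{6}{d}\|h\|_\infty$. I will obtain an explicit formula for the directional derivative of the Jacobian at the origin and then bound each piece in the $\infty\to\infty$ norm.

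\emph{Step 1 (formula).} Start from the identity derived in the proof of \Cref{lem:softmax}(d)--(e): for any $z$,
\[
DJ_{\sm}(z)[h]=\Diag(q)-qp^\intercal-pq^\intercal,
\qquad p=\softmax(z),\quad q=J_{\sm}(z)h.
\]
Specialize to $z=\pmb{0}_d$. Then $p=u=\tfrac1d\pmb{1}_d$, $J_{\sm}(\pmb{0})=J_0$, and $q=J_0h$. This gives
\[
D J_{\sm}(\pmb{0})[h]=\Diag(J_0h)-(J_0h)u^\intercal-u(J_0h)^\intercal.
\]

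\emph{Step 2 (control $q$).} I need both $\|q\|_\infty$ and $\|q\|_1$. By \Cref{lem:J0-infty}, $\|q\|_\infty=\|J_0h\|_\infty\le\|J_0\|_{\infty\to\infty}\|h\|_\infty\le\frac2d\|h\|_\infty$. Trivially $\|q\|_1\le d\|q\|_\infty\le 2\|h\|_\infty$.

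\emph{Step 3 (bound the three terms).} I use the induced-norm identity $\|xy^\intercal\|_{\infty\to\infty}=\|x\|_\infty\|y\|_1$. Then:
\begin{itemize}
\item the diagonal term satisfies $\|\Diag(q)\|_{\infty\to\infty}=\|q\|_\infty\le\frac2d\|h\|_\infty$;
\item for $qu^\intercal$, the bound is $\|q\|_\infty\|u\|_1=\|q\|_\infty\le\frac2d\|h\|_\infty$;
\item for $uq^\intercal$, the bound is $\|u\|_\infty\|q\|_1=\frac1d\|q\|_1\le\frac2d\|h\|_\infty$.
\end{itemize}
By the triangle inequality the three bounds sum to $\frac6d\|h\|_\infty$, which is the claim.

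The only step needing care is the $uq^\intercal$ term. Here the factor $1/d$ must come from $\|u\|_\infty$, while $\|q\|_1$ contributes only an $O(1)$ factor; the other two terms get their $1/d$ from $\|q\|_\infty$ instead. Everything else is routine.
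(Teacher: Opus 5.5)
Your proof is correct and is essentially the paper's argument: the same formula $DJ_{\sm}(\pmb{0})[h]=\Diag(J_0h)-(J_0h)u^\intercal-u(J_0h)^\intercal$, combined with $\|J_0h\|_\infty\le\frac{2}{d}\|h\|_\infty$ from \Cref{lem:J0-infty} and the triangle inequality over the three terms. Your separate treatment of the $u(J_0h)^\intercal$ term, via $\|u\|_\infty\|J_0h\|_1\le\|J_0h\|_\infty$, spells out a step the paper leaves implicit.
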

    
    \begin{proof}
    For all $h$,
    \[
    D J_{\sm}(\pmb{0})[h]=\Diag(J_0h)-(J_0h)u^\intercal-u(J_0h)^\intercal,\quad u=\tfrac1d\pmb{1}_d,
    \]
    Using \Cref{lem:J0-infty}, we have $\|J_0h\|_\infty\le \frac{2}{d}\|h\|_\infty$, so
    \[
    \|D J_{\sm}(\pmb{0})[h]\|_{\infty\to\infty}\le \frac{6}{d}\|h\|_\infty.
    \]
    \end{proof}
        
    \begin{lemma}\label{lem:regression-extreme}
        There exists an absolute constant $c_{\mathrm{ev}}\in(0,1)$ such that
        \[
        \EE\!\left[\min_{1\le j\le d} G_j\right]
        \;\le\; -\,c_{\mathrm{ev}}\sqrt{\log d},
        \qquad d\ge 2,
        \]
        for i.i.d.\ $G_j\sim\mathcal N(0,1)$.
        \end{lemma}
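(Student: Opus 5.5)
The statement is Lemma~\ref{lem:regression-extreme}: $\EE[\min_j G_j]\le -c_{\mathrm{ev}}\sqrt{\log d}$ for i.i.d.\ standard Gaussians. Since the $G_j$ are symmetric, $\min_j G_j \stackrel{d}{=} -\max_j G_j$, so it suffices to prove $\EE[\max_j G_j]\ge c\sqrt{\log d}$ for all $d\ge 2$.

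\textbf{Small and moderate $d$.} For any fixed $d_0$, the quantity $m_d:=\EE[\max_{j\le d}G_j]$ is nondecreasing in $d$ and satisfies $m_2=1/\sqrt{\pi}>0$. Hence on $2\le d\le d_0$ we have $m_d\ge m_2\ge (1/\sqrt{\pi})\sqrt{\log 2/\log d_0}\,\sqrt{\log d}$, which gives a constant depending only on $d_0$.

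\textbf{Large $d$.} Here I use a lower tail argument on the maximum.
\begin{enumerate}
\item Set $t:=\sqrt{\log d}$. By the standard Mills-ratio lower bound $\PP(G\ge t)\ge \frac{t}{1+t^2}\frac{1}{\sqrt{2\pi}}e^{-t^2/2}$, we get $d\,\PP(G\ge t)\ge c'\sqrt{d}/\sqrt{\log d}\to\infty$.
\item Therefore $\PP(\max_j G_j< t)=(1-\PP(G\ge t))^d\le \exp(-d\PP(G\ge t))$, which is at most $1/4$ once $d\ge d_0$.
\item Split the expectation using $\max_j G_j\ge G_1$:
\[
m_d\ \ge\ t\,\PP(\max_j G_j\ge t)\ -\ \EE\bigl[(\max_j G_j)^-\bigr]\ \ge\ \tfrac34 t-\EE[G_1^-]\ =\ \tfrac34\sqrt{\log d}-\tfrac{1}{\sqrt{2\pi}}.
\]
\item Enlarging $d_0$ if needed, this is at least $\tfrac12\sqrt{\log d}$ for $d\ge d_0$.
\end{enumerate}

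Taking $c_{\mathrm{ev}}$ to be the minimum of the two constants, and shrinking it below $1$ if necessary, proves the claim.

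The main obstacle is only the bookkeeping needed to make the threshold $d_0$ explicit. It is handled by the monotonicity argument for small $d$, so no delicate estimate is required.
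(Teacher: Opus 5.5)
Your proof is correct and follows the same route as the paper: the Mills-ratio lower bound at $t=\sqrt{\log d}$, the estimate $\PP(\max_j G_j<t)\le \exp(-d\,\PP(G\ge t))$, and a separate finite-range argument for small $d$. Your version is slightly more careful than the paper's in two places. You explicitly subtract $\EE[G_1^-]$ to control the contribution off the tail event, which the paper's step $\EE[X_d]\le -a\,\PP(X_d\le -a)$ leaves unjustified, and you make the small-$d$ case concrete through monotonicity of $m_d$ and $m_2=1/\sqrt{\pi}$.
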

        
        \begin{proof}

We define $M_d := \max_{1\le j\le d} G_j$ and $X_d := \min_{1\le j\le d} G_j$. Since the $G_j$ are i.i.d.\ standard normals, $\PP(M_d \le a) = \Phi(a)^d$, where $\Phi$ denotes the standard normal cdf.
        
        We use the classical Mills ratio inequality \citep{gordon1941values}: for all $x>0$,
        \begin{equation}\label{eq:mills}
        1-\Phi(x) \;\ge\; \frac{x}{1+x^2}\,\varphi(x),
        \end{equation}
        where $\varphi(x)=\frac{1}{\sqrt{2\pi}}e^{-x^2/2}$ is the standard normal density.
        Set $a:=\sqrt{\log d}$.  Then
        \[
        \varphi(a)=\frac{1}{\sqrt{2\pi}}d^{-1/2},
        \qquad
        \frac{a}{1+a^2}
        =\frac{\sqrt{\log d}}{\,1+\log d\,}
        \;\ge\; \frac{c_0}{\sqrt{\log d}}
        \]
        for some universal constant $c_0>0$ and all $d\ge 2$. 
        Using \eqref{eq:mills},
        \[
        1-\Phi(a)
        \;\ge\; c_0\,\frac{d^{-1/2}}{\sqrt{\log d}}.
        \]
        Thus
        \[
        \PP(M_d \le a)
        = \Phi(a)^d
        = \bigl(1-(1-\Phi(a))\bigr)^d
        \le \left(1- c_0\,\frac{d^{-1/2}}{\sqrt{\log d}}\right)^d
        \le \exp\!\left(-c_0\,\frac{d^{1/2}}{\sqrt{\log d}}\right).
        \]
        Therefore,
        \[
        \PP\!\left(X_d \le -a\right)
        = 1 - \PP(M_d \le a)
        \ge 1 - \exp\!\left(-c_0\,\frac{d^{1/2}}{\sqrt{\log d}}\right).
        \]
        For all sufficiently large $d$, the right-hand side is at least $1/2$, so for such $d$,
        \[
        \EE[X_d]
        \;\le\; -a\,\PP(X_d \le -a)
        \;\le\; -\frac12 \sqrt{\log d}.
        \]
        Since $\EE[X_d]\le 0$ for each fixed $d$ and the ratio
        $-\EE[X_d]/\sqrt{\log d}$ is bounded away from $0$ on the finite range
        $2\le d\le d_0$, we obtain a universal constant 
        $c_{\mathrm{ev}}\in(0,1)$ such that
        \[
        \EE[X_d]\le -c_{\mathrm{ev}}\sqrt{\log d}
        \qquad\text{for all } d\ge 2.
        \]
        \end{proof}
        \begin{lemma}\label{lem:regression-extreme-lb}
            There exists an absolute constant $C_{\mathrm{ev}}>0$ such that
            \[
            \EE\!\left[\min_{1\le j\le d} G_j\right]
            \;\ge\; -\,C_{\mathrm{ev}}\sqrt{\log d},
            \qquad d\ge 2,
            \]
            for i.i.d.\ $G_j\sim\mathcal N(0,1)$.
            \end{lemma}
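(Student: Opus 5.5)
Since $-\min_j G_j \stackrel{d}{=} \max_j G_j$ by symmetry of the standard Gaussian, the claim is equivalent to the upper bound $\EE[\max_{1\le j\le d} G_j]\le C_{\mathrm{ev}}\sqrt{\log d}$ for all $d\ge 2$. I will prove it with the standard exponential-moment (Jensen/MGF) argument. This avoids the union-bound integration used in \Cref{lem:gauss-max-final}, which gives $\sqrt{\log(ed)}$; that is equivalent for $d\ge 2$ anyway.

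For any $\lambda>0$, convexity of $x\mapsto e^{\lambda x}$ and Jensen's inequality give
\[
e^{\lambda\EE[\max_j G_j]}\le \EE\big[e^{\lambda\max_j G_j}\big]\le \sum_{j=1}^d \EE\big[e^{\lambda G_j}\big]=d\,e^{\lambda^2/2}.
\]
Taking logarithms yields $\EE[\max_j G_j]\le \frac{\log d}{\lambda}+\frac{\lambda}{2}$. Choosing $\lambda=\sqrt{2\log d}$, which is positive since $d\ge 2$, gives $\EE[\max_j G_j]\le\sqrt{2\log d}$. Hence
\[
\EE\Big[\min_j G_j\Big]=-\EE\Big[\max_j G_j\Big]\ge -\sqrt{2}\,\sqrt{\log d},
\]
so $C_{\mathrm{ev}}=\sqrt2$ works.

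The only point needing care is the symmetry step. The vector $(G_j)_j$ has the same law as $(-G_j)_j$, so $\min_j G_j \stackrel{d}{=} -\max_j G_j$. Integrability is immediate because $|\max_j G_j|\le\sum_j|G_j|$. There is no real obstacle here; the bound is uniform in $d\ge 2$ precisely because $\log d\ge\log 2>0$, so no separate small-$d$ case (as in \Cref{lem:regression-extreme}) is needed.
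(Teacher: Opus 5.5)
Your proof is correct, and it takes a different route from the paper's. Both arguments use the same symmetry reduction to an upper bound on $\EE[\max_j G_j]$.

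The paper then works with tail probabilities. It writes $\EE[M_d]\le\int_0^\infty \PP(M_d\ge t)\,dt$ and splits the integral at $x=\sqrt{2\log d}$. It applies the union bound $\PP(M_d\ge t)\le d e^{-t^2/2}$ and the Gaussian tail estimate $\int_x^\infty e^{-t^2/2}\,dt\le x^{-1}e^{-x^2/2}$, which gives $\EE[M_d]\le\sqrt{2\log d}+1\le 3\sqrt{\log d}$.

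You instead bound the exponential moment of the maximum and apply Jensen. This needs only the Gaussian MGF and the bound $\max_j e^{\lambda G_j}\le\sum_j e^{\lambda G_j}$. It gives the constant $C_{\mathrm{ev}}=\sqrt2$ directly, with no additive $+1$ to absorb. That absorption is where the paper's chain is slightly loose: the intermediate step $\sqrt{2\log d}+1\le(\sqrt2+1)\sqrt{\log d}$ requires $\log d\ge 1$, so it fails at $d=2$. The final constant $3$ still holds there, and your argument has no such edge case.

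In exchange, the paper's tail-integration template is more flexible. The same union-bound and tail-integral computation also yields tail probabilities and higher moments $\EE\|Z\|_\infty^q$, as in \Cref{lem:gauss-max-final}. The Jensen argument with $x\mapsto e^{\lambda x}$ directly controls only the first moment.
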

            
            \begin{proof}
            It suffices to upper bound $\EE[M_d]$ by $C\sqrt{\log d}$ for some absolute constant $C>0$.
            
            For any $t>0$,
            \[
            \PP(M_d\ge t)
            = \PP\Bigl(\bigcup_{j=1}^d\{G_j\ge t\}\Bigr)
            \;\le\; \sum_{j=1}^d \PP(G_j\ge t)
            = d\,\PP(G_1\ge t) \leq d e^{-t^2/2}.
            \]

            Next, decompose $M_d$ into its positive and negative parts:
            \[
            M_d^+ := \max\{M_d,0\},\qquad
            M_d^- := \max\{-M_d,0\},
            \]
            so that $M_d = M_d^+ - M_d^-$ and hence
            \[
            \EE[M_d] = \EE[M_d^+] - \EE[M_d^-] \;\le\; \EE[M_d^+].
            \]
            Since $M_d^+\ge 0$, we can use the standard tail integral representation for nonnegative random variables:
            \[
            \EE[M_d^+] = \int_0^\infty \PP(M_d^+ \ge t)\,dt
                       = \int_0^\infty \PP(M_d \ge t)\,dt.
            \]
            Therefore,
            \[
            \EE[M_d]
            \;\le\; \EE[M_d^+]
            = \int_0^\infty \PP(M_d \ge t)\,dt.
            \]
            
            Fix $x:=\sqrt{2\log d}$ and split the integral:
            \[
            \EE[M_d]
            \le \int_0^{x} 1\,dt
            \;+\; \int_{x}^\infty \PP(M_d\ge t)\,dt
            \;\le\; x \;+\; d\int_{x}^\infty e^{-t^2/2}\,dt.
            \]

            We bound the Gaussian tail using
            \[
            \int_{x}^\infty e^{-t^2/2}\,dt
            \;\le\; \frac{1}{x}e^{-x^2/2},
            \qquad x>0,
            \]
            which follows from integrating by parts or from a standard inequality for Gaussian tails. Plugging $x=\sqrt{2\log d}$, we get
            \[
            d\int_{x}^\infty e^{-t^2/2}\,dt
            \;\le\;
            d\cdot \frac{1}{x}e^{-x^2/2}
            =
            d\cdot \frac{1}{\sqrt{2\log d}}\cdot e^{-\log d}
            =
            \frac{1}{\sqrt{2\log d}}
            \;\le\; 1,
            \]
            for all $d\ge 2$. Therefore,
            \[
            \EE[M_d]
            \;\le\; x + 1
            \;=\; \sqrt{2\log d} + 1
            \;\le\; (\sqrt{2}+1)\sqrt{\log d}
            \;\le\; 3\sqrt{\log d},
            \qquad d\ge 2.
            \]
            \end{proof}

        \begin{lemma}[MGF for Gaussian suprema]\label{lem:mgf-sup}
            Let $G\sim\cN(\pmb{0}_d,tI_{d\times d})$. For any $a\ge 0$,
            \[
            \EE \exp\!\big(a\|G\|_\infty\big) \;\le\; 2d\,\exp\!\Big(\frac{a^2t}{2}\Big).
            \]
            \end{lemma}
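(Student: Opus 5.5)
The plan is a Chernoff-type union bound over the $d$ coordinates. Write $G=\sqrt t\,Z$ with $Z\sim\cN(\pmb{0}_d,I_{d\times d})$. Pointwise,
\[
\exp\big(a\|G\|_\infty\big)=\max_{1\le j\le d}\max\big\{e^{aG_j},\,e^{-aG_j}\big\}\;\le\;\sum_{j=1}^d\big(e^{aG_j}+e^{-aG_j}\big),
\]
since the maximum of nonnegative numbers is bounded by their sum. This replaces the supremum, which has no convenient closed-form MGF, by a finite sum of exponentials of individual Gaussian coordinates.

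Next, take expectations term by term. Each coordinate satisfies $G_j\sim\cN(0,t)$, and $-G_j$ has the same law. The one-dimensional Gaussian MGF therefore gives
\[
\EE e^{\pm aG_j}=\exp\!\Big(\frac{a^2t}{2}\Big).
\]
Summing the $2d$ terms yields $\EE\exp(a\|G\|_\infty)\le 2d\exp(a^2t/2)$, which is the claim. Independence of the coordinates is not needed, and neither is $a\ge 0$ beyond making $a\|G\|_\infty=\max_j|aG_j|$ valid.

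There is no real obstacle here. The only point to check is the pointwise domination $e^{a|x|}\le e^{ax}+e^{-ax}$, which holds because both terms on the right are positive and one of them equals the left side. If a sharper constant were wanted, one could instead use $\EE e^{a|G_j|}=2e^{a^2t/2}\Phi(a\sqrt t)$. This is unnecessary, since later uses such as $\EE e^{4k\|S\|_\infty}\lesssim d\,e^{8k^2(t-1)}$ only require the stated form.
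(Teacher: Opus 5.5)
Your proposal is correct and follows essentially the same route as the paper: replace the maximum over coordinates by a sum (a union bound), then apply the one-dimensional Gaussian MGF. The only cosmetic difference is that you split $e^{a|G_j|}\le e^{aG_j}+e^{-aG_j}$ pointwise, whereas the paper keeps $\EE e^{a|G_1|}$ and bounds the half-line integral $2\int_0^\infty e^{ax}\frac{1}{\sqrt{2\pi t}}e^{-x^2/(2t)}\,dx$ by $2e^{a^2t/2}$, which amounts to the same estimate.
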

            \begin{proof}
            By the union bound,
            $\exp(a\|G\|_\infty)=\max_i \exp(a|G_i|)\le \sum_i \exp(a|G_i|)$,
            so $\EE e^{a\|G\|_\infty}\le d\,\EE e^{a|G_1|}$. For $G_i\sim\cN(0,t)$,
            $\EE e^{a|G_i|}=2\int_0^\infty e^{ax}\frac{1}{\sqrt{2\pi t}}e^{-x^2/(2t)}dx
            \le 2e^{a^2t/2}$.
            \end{proof}
        \begin{lemma}[Stein's lemma for Gaussian vectors]\label{lem:stein} \citep{stein1981estimation}
            Let $X\sim\mathcal N(\pmb{0}_d,I_{d\times d})$ and let $f:\mathbb R^d\to\mathbb R$ be differentiable with 
            $\mathbb E\| \nabla f(X)\|_2 < \infty$ and $\mathbb E|X_i f(X)|<\infty$ for all $i$.
            Then for every $v\in\mathbb R^d$,
            \[
            \mathbb E\!\left[\,\langle X,v\rangle\, f(X)\,\right]
            \;=\;
            \mathbb E\!\left[\,\langle v,\nabla f(X)\rangle\,\right].
            \]
            Equivalently, coordinatewise:
            \[
            \mathbb E[\,X_i f(X)\,] = \mathbb E[\,\partial_i f(X)\,],
            \qquad i=1,\dots,d.
            \]
            \end{lemma}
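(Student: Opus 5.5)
By linearity of both sides in $v$, it suffices to prove the coordinatewise identity $\mathbb E[X_i f(X)] = \mathbb E[\partial_i f(X)]$ for each $i\in[d]$. Summing $v_i$ times these identities then gives $\mathbb E[\langle X,v\rangle f(X)]=\mathbb E[\langle v,\nabla f(X)\rangle]$. Fix $i$ and write $X=(X_i,X_{-i})$. Since the coordinates of $X\sim\mathcal N(\pmb{0}_d,I_{d\times d})$ are independent, the joint density factorizes as $\varphi(x_i)\prod_{j\ne i}\varphi(x_j)$, where $\varphi$ is the standard normal density. The hypotheses $\mathbb E|X_i f(X)|<\infty$ and $\mathbb E|\partial_i f(X)|\le \mathbb E\|\nabla f(X)\|_2<\infty$ let us apply Fubini. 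We condition on $X_{-i}=y$ and prove the one-dimensional statement $\mathbb E[Zg(Z)]=\mathbb E[g'(Z)]$ for $Z\sim\mathcal N(0,1)$ and $g(x):=f(x,y)$. This is needed only for almost every $y$, namely those $y$ with $\int|x g(x)|\varphi(x)\,dx<\infty$ and $\int|g'(x)|\varphi(x)\,dx<\infty$. Integrating the resulting identity against the density of $X_{-i}$ then yields the claim.

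For the one-dimensional identity I would avoid a naive integration by parts, whose boundary terms $g(x)\varphi(x)\big|_{\pm\infty}$ are not obviously controlled. Instead I would use the representation $g(x)-g(0)=\int_0^x g'(s)\,ds$ together with the identity $\int_s^\infty x\varphi(x)\,dx=\varphi(s)$ (from $\varphi'(x)=-x\varphi(x)$). Since $\mathbb E[Z]=0$, we have $\mathbb E[Zg(Z)]=\mathbb E[Z(g(Z)-g(0))]$. On the positive half-line,
\[
\int_0^\infty x\varphi(x)\int_0^x g'(s)\,ds\,dx=\int_0^\infty g'(s)\int_s^\infty x\varphi(x)\,dx\,ds=\int_0^\infty g'(s)\varphi(s)\,ds .
\]
The exchange of integrals is justified by Tonelli applied to $|g'(s)|\,x\varphi(x)\mathbbm{1}\{0\le s\le x\}$, whose integral equals $\int_0^\infty|g'(s)|\varphi(s)\,ds<\infty$. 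The negative half-line is symmetric: there $g(x)-g(0)=-\int_x^0 g'(s)\,ds$ and $\int_{-\infty}^s (-x)\varphi(x)\,dx=\varphi(s)$, which contributes $\int_{-\infty}^0 g'(s)\varphi(s)\,ds$. Adding the two halves gives $\mathbb E[Zg(Z)]=\mathbb E[g'(Z)]$.

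The main obstacle is the regularity step: the fundamental theorem of calculus $g(x)-g(0)=\int_0^x g'$ requires $g$ to be absolutely continuous on compact intervals, and mere pointwise differentiability does not give this in general. I would handle it as follows. A function that is differentiable everywhere on $\mathbb R$ and whose derivative is locally integrable satisfies the fundamental theorem of calculus; this is the classical Lebesgue result that an everywhere-differentiable function with $L^1_{\mathrm{loc}}$ derivative is absolutely continuous. Local integrability of $g'=\partial_i f(\cdot,y)$ holds for almost every $y$, because $\int|g'|\varphi<\infty$ and $\varphi$ is bounded below on compact sets. If one prefers to sidestep this, it suffices to note that in every application in the paper $f$ is $C^1$ (the regret is smooth in the losses away from the measure-zero set where $\arg\min_j S_{T,j}$ is not unique), so the fundamental theorem applies directly.
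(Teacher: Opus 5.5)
Your proof is correct. The multivariate step (linearity in $v$, then conditioning on $X_{-i}$) is the same as in the paper; the one-dimensional step takes a different route.

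\textbf{How the paper does it.} The paper integrates by parts directly on $\mathbb R$, using $\varphi'(z)=-z\varphi(z)$. It disposes of the boundary term $[f\varphi]_{-\infty}^{\infty}$ by asserting that the Gaussian density dominates a polynomially growing $f$, but polynomial growth is not among the lemma's hypotheses. It also silently assumes the fundamental theorem of calculus for $f$ along lines. Its multivariate step omits the Fubini justification you supply.

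\textbf{What your route buys.} You replace integration by parts with $g(x)-g(0)=\int_0^x g'$ and the tail identity $\int_s^\infty x\varphi(x)\,dx=\varphi(s)$. As a result, no boundary terms appear, and the Tonelli exchange is justified exactly by $\mathbb E|g'(Z)|<\infty$. You also make the regularity step explicit: an everywhere-differentiable function with locally integrable derivative satisfies the fundamental theorem of calculus. The paper's argument is shorter and more familiar. Yours proves the lemma under precisely the stated hypotheses. Incidentally, your representation shows the paper's boundary term does vanish under those hypotheses. Indeed, $\varphi(x)\int_0^x|g'(s)|\,ds=\int_0^x|g'(s)|\varphi(s)\,e^{-(x^2-s^2)/2}\,ds\to 0$ by dominated convergence. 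So the integration-by-parts route can be repaired, just not by the reason the paper gives.

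\textbf{One correction to your closing remark.} The regret $R_T(k)$ to which the paper applies this lemma (in the derivative identity) is not $C^1$ in $\ell_t$, because of the term $\min_j S_{T,j}$. It is only locally Lipschitz, hence absolutely continuous on every line with an a.e.\ derivative. That is exactly what your one-dimensional argument uses, so your proof covers the paper's application. The lemma's literal hypothesis of everywhere differentiability does not.
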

            
            \begin{proof}
            We give the proof in one dimension; the multivariate case follows by linearity.
            
            Let $Z\sim\mathcal N(0,1)$ with density $\varphi(z)=\frac{1}{\sqrt{2\pi}}e^{-z^2/2}$.
            Assume $\mathbb E|Z f(Z)|<\infty$ and $\mathbb E|f'(Z)|<\infty$.
            Then
            \[
            \mathbb E[Z f(Z)]
            = \int_{-\infty}^{\infty} z f(z)\,\varphi(z)\,dz.
            \]
            Since $\varphi'(z) = -z\varphi(z)$, we may write
            \[
            z\varphi(z) = -\varphi'(z),
            \]
            so that
            \[
            \mathbb E[Z f(Z)]
            = -\int f(z)\,\varphi'(z)\,dz.
            \]
            Integrating by parts,
            \[
            -\int f(z)\,\varphi'(z)\,dz
            = -\left[f(z)\varphi(z)\right]_{-\infty}^{\infty}
              +\int f'(z)\varphi(z)\,dz.
            \]
            Because $f(z)\varphi(z)\to 0$ as $|z|\to\infty$ (Gaussian density dominates any polynomially growing $f$), the boundary term vanishes, and we obtain
            \[
            \mathbb E[Z f(Z)]
            = \int f'(z)\varphi(z)\,dz
            = \mathbb E[f'(Z)].
            \]
            
            For the $d$-dimensional case, apply the one-dimensional identity to each coordinate, conditioning on the remaining coordinates. Linearity of expectation and $\langle X,v\rangle=\sum_i v_i X_i$ then give
            \[
            \mathbb E[\langle X,v\rangle f(X)]
            = \sum_{i=1}^d v_i\,\mathbb E[X_i f(X)]
            = \sum_{i=1}^d v_i\,\mathbb E[\partial_i f(X)]
            = \mathbb E[\langle v,\nabla f(X)\rangle].
            \]
            \end{proof}

\begin{lemma}\label{lem:softmax-expectation}
    Let $a>0, d\geq 2$ and $X \sim \mathcal N(0,aI_{d\times d})$, and $\pi(k) = \softmax(-kX)$. Define $F_d(a,k)\ \coloneqq\ \mathbb{E}\,\big\langle J_0 X,\,\pi(k)\big\rangle.$
If $k = \tilde{\cO}\left(1/\sqrt{a}\right)$, $F_d(a,k) \;=\; -\,\Theta\!\big(a k/d\big).$
\end{lemma}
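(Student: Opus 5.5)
Expand $\pi(k)=\softmax(-kX)$ around $k=0$ to first order, exactly as in \Cref{prop:policy-expansion}. The leading term will give $-ak(d-1)/d^2$, and I will show that the remainder is of lower order when $k\sqrt{a}$ is at most a sufficiently small polylogarithmic-in-$d$ constant.

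Write $\pi(k)=u-kJ_0X+r$ with $\|r\|_\infty\le k^2\|X\|_\infty^2$, as in \Cref{prop:policy-expansion}. Since $J_0u=\frac1d J_0\pmb{1}_d\cdot\frac1d\cdot d=0$, and in any case $\EE[J_0X]=0$, the constant term contributes nothing. Hence
\[
F_d(a,k)=-k\,\EE\|J_0X\|_2^2+\EE\langle J_0X,r\rangle .
\]

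\emph{Main term.} Because $J_0=\frac1d\Pperp{\pmb{1}_d}$ and $X\sim\mathcal N(0,aI_{d\times d})$, we have $\EE\|J_0X\|_2^2=a(d-1)/d^2$. This equals $\Theta(a/d)$ for $d\ge2$, so the leading contribution is $-\Theta(ak/d)$.

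\emph{Remainder.} A crude bound gives
\[
|\langle J_0X,r\rangle|\le\|J_0X\|_1\|r\|_\infty\le 2k^2\|X\|_\infty^3
\]
by \Cref{lem:J0-infty}. Taking expectations with \Cref{lem:gauss-max-final} gives $O(k^2a^{3/2}(\log ed)^{3/2})$. This loses a factor $d$ relative to the main term: it would force $k\lesssim 1/(d\sqrt a\,\mathrm{polylog})$, which is weaker than the stated $\tilde O(1/\sqrt a)$.

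To remove the factor $d$, I would use symmetry exactly as in the $E_{2,t}$ bound of \Cref{thm:smallk}. Split $r$ along the ray into a second-order term and a third-order remainder:
\[
r=\tfrac12 D^2\!\softmax(0)[-kX,-kX]+O(k^3).
\]
The quadratic piece is even in $X$ while $J_0X$ is odd, so $\EE\langle J_0X,\tfrac12D^2\!\softmax(0)[kX,kX]\rangle=0$ because $X\stackrel{d}{=}-X$. The cubic remainder involves $D^2J_{\sm}$ contracted against $J_0X$. Bound it as in \Cref{prop:sigma2pert-7/4}, using \Cref{lem:softmax}(f) (which carries the $\frac1d e^{2\|z\|_\infty}$ factor), the estimate $\|J_0X\|_\infty\le\frac2d\|X\|_\infty$, Cauchy--Schwarz, and the MGF bound \Cref{lem:mgf-sup}. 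This should give
\[
\lesssim \frac{k^3a^2}{d^{3/2}}\,\mathrm{polylog}(d)\,e^{Ck^2a}.
\]
Compared with $ak/d$, the ratio is $k^2a\,\mathrm{polylog}(d)/\sqrt d$, which is small once $k\le c/(\sqrt a\,\mathrm{polylog}(d))$. This is precisely the regime $k=\tilde O(1/\sqrt a)$, and in that regime $F_d(a,k)=-\Theta(ak/d)$. For the upper side of $\Theta$, I would also record the matching bound $|F_d|\le k\,a(d-1)/d^2+\text{(remainder)}$. As a sign sanity check, \Cref{lem:globalsign} gives $\langle J_{\sm}X,\pi\rangle\le0$ pointwise; this is consistent with the result but not needed.

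\emph{Main obstacle.} The hard step is controlling the remainder with the correct power of $d$. Crude $\ell_1/\ell_\infty$ Hölder bounds lose $d$, so the parity cancellation of the second-order term and the $\frac1d$ gain from $\|p\|_\infty$ in \Cref{lem:softmax}(f) are both essential. I would first verify that $D^2\!\softmax(0)[h,h]$ is exactly even in $h$, which is immediate from its quadratic form. I would then check that the MGF factor $\sqrt d$ coming from \Cref{lem:mgf-sup} is absorbed by the $d^{-2}$ prefactor.
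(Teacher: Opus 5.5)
\textbf{There is a genuine gap in the remainder estimate, which is the step you yourself flag as the main obstacle.} The earlier steps are fine. The main term $-ak(d-1)/d^2$ is exact, and the quadratic piece does vanish by $X\stackrel{d}{=}-X$; indeed $F_d(a,\cdot)$ is odd in $k$.

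\textbf{Why the claimed bound fails.} With the tools you list, the cubic remainder is not $\lesssim \frac{k^3a^2}{d^{3/2}}\,\mathrm{polylog}(d)\,e^{Ck^2a}$; it is only $\lesssim\frac{k^3a^2}{\sqrt d}\,\mathrm{polylog}(d)\,e^{Ck^2a}$. The analogy with \Cref{prop:sigma2pert-7/4} breaks at one point.
\begin{itemize}
\item In that proposition, $D^2J_{\sm}$ carries two copies of the direction, and $MJ_0S$ is paired with $e_{J^\star}$, whose $\ell_1$-norm is $1$. Both factors of $1/d$ survive.
\item Here the third-order remainder of $\pi$ is $M(-kX)$ with $M:=D^2J_{\sm}(-\theta kX)[-kX,-kX]$. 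A third copy $kX$ must also be paired, and $\|X\|_1\approx d\|X\|_\infty$.
\item Whichever way you apply H\"older, one of the two $1/d$'s is consumed. The options are $\|J_0X\|_1\,\|M\|_{\infty\to\infty}\,k\|X\|_\infty$ with $\|J_0X\|_1\le 2\|X\|_\infty$, or $\|MJ_0X\|_\infty\,k\|X\|_1$. Either way the pointwise bound is only $\lesssim \frac{k^3}{d}\|X\|_\infty^4e^{2k\|X\|_\infty}$.
\end{itemize}

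\textbf{Consequence.} After Cauchy--Schwarz with \Cref{lem:mgf-sup}, the remainder is $\lesssim\frac{k^3a^2}{\sqrt d}(\log ed)^2e^{4k^2a}$. Here the $\sqrt d$ you planned to absorb is not absorbed. Its ratio to the main term is of order $k^2a\sqrt d\,\mathrm{polylog}(d)$. So as written, your argument only covers $k\sqrt a\lesssim d^{-1/4}/\mathrm{polylog}(d)$. That is exactly the kind of polynomial loss in $d$ you were trying to avoid.

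\textbf{Repair within your approach.} Replace \Cref{lem:mgf-sup} by Gaussian concentration of the $\sqrt a$-Lipschitz function $\|X\|_\infty$ about its mean, which is $\lesssim\sqrt{a\log(ed)}$. For $k\sqrt a\le c/\sqrt{\log(ed)}$ this gives $\EE\bigl[\|X\|_\infty^4e^{2k\|X\|_\infty}\bigr]\lesssim a^2(\log ed)^2$. The remainder is then $O\bigl(k^3a^2(\log ed)^2/d\bigr)$, and the ratio to the main term is $O\bigl(k^2a(\log ed)^2\bigr)$. This is small once $k\sqrt a\le c/\log(ed)$.

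\textbf{The paper's route.} The paper avoids Taylor expansion altogether. Since $\langle J_0X,\pi\rangle=\frac1d\bigl(\langle X,\pi\rangle-\frac1d\pmb{1}_d^\intercal X\bigr)$, Stein's lemma (\Cref{lem:stein}) together with $\partial_{X_i}\pi_i=-k\pi_i(1-\pi_i)$ gives the exact identity $F_d(a,k)=-\frac{ak}{d}\,\EE\bigl[1-\|\pi(k)\|_2^2\bigr]$.
\begin{itemize}
\item The upper bound $|F_d|\le ak/d$ then holds for every $k\ge 0$.
\item The lower bound only requires $\EE\|\pi(k)\|_2^2$ to stay bounded away from $1$. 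The paper shows it is $O(e^{k^2a}/d)$ for bounded $k\sqrt a$, by concentrating $\sum_j e^{-kX_j}$ and $\sum_j e^{-2kX_j}$.
\end{itemize}
No remainder bookkeeping or parity argument is needed.
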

    
    \begin{proof}
        First, by the definition of $J_0$ and \Cref{lem:stein}, 
        \begin{align*}
        F_d(a,k) &= \frac{1}{d} \EE \langle X, \pi(k) \rangle = \frac{1}{d} \sum_{i \in [d]} \EE[X_i \pi_i(k)] 
        \\
        &= \frac{1}{d} \sum_{i \in [d]} \EE \left[\frac{\partial}{\partial X_i} \pi_i(k)\right]=-\frac{ak}{d} \sum_{i \in [d]} \EE[\pi_i(k)(1-\pi_i(k))] = -\frac{ak}{d} \EE[1-\|\pi(k)\|_2^2]
        \end{align*}
        From the calculation above,
    \begin{equation}\label{eq:F-master}
    F_d(a,k) \;=\; -\,\frac{a k}{d}\;\EE\!\big[\,1-\|\pi(k)\|_2^2\,\big],\qquad
    \pi(k)=\softmax(-kX).
    \end{equation}
    Write $X=\sqrt{a}\,Z$ with $Z\sim\cN(\pmb{0}_d, I_{d\times d})$ and set $\kappa := k\sqrt{a}$. Then
    \[
    \pi_i(k)\;=\;\frac{e^{-\kappa Z_i}}{\sum_{j=1}^d e^{-\kappa Z_j}}, 
    \qquad 
    \|\pi(k)\|_2^2 \;=\; \frac{\sum_{i=1}^d e^{-2\kappa Z_i}}{\Big(\sum_{j=1}^d e^{-\kappa Z_j}\Big)^2}.
    \]
    Let $Y_i:=e^{-\kappa Z_i}$ so that $Y_i>0$ are i.i.d.\ sub-exponential with 
    $\mu_1:=\EE Y_i=e^{\kappa^2/2}$ and $\mu_2:=\EE Y_i^2=e^{2\kappa^2}$.
    Define $S_1=\sum_{j=1}^d Y_j$ and $S_2=\sum_{i=1}^d Y_i^2$. 
    By Bernstein/Chernoff bounds for sub-exponential sums, there exist absolute constants $c_0,C_0>0$ (depending only on an a priori bound $\kappa\le\kappa_\star$) such that with probability at least $1-2e^{-c_0 d}$,
    \[
    \tfrac{1}{2}d\mu_1 \;\le\; S_1 \;\le\; 2d\mu_1
    \quad\text{and}\quad
    \tfrac{1}{2}d\mu_2 \;\le\; S_2 \;\le\; 2d\mu_2.
    \]
    On this event,
    \[
    \frac{1}{8}\cdot\frac{\mu_2}{\mu_1^2}\cdot\frac{1}{d} 
    \;\le\;
    \frac{S_2}{S_1^2} 
    \;=\; \|\pi(k)\|_2^2
    \;\le\;
    8\cdot\frac{\mu_2}{\mu_1^2}\cdot\frac{1}{d}
    \;=\;
    8\,\frac{e^{\kappa^2}}{d}.
    \]
    Taking expectations and absorbing the exponentially small tail yields constants $0<c_1\le C_1<\infty$ (depending only on $\kappa_\star$) for which
    \begin{equation}\label{eq:l2-mass}
    \frac{c_1}{d} \;\le\; \EE\|\pi(k)\|_2^2 \;\le\; \frac{C_1}{d}
    \qquad\text{for all } d\text{ large enough.}
    \end{equation}
    Combining \eqref{eq:l2-mass} with \eqref{eq:F-master}, we obtain
    \[
    -\;\frac{a k}{d}\Big(1-\frac{c_1}{d}\Big)
    \;\le\;
    F_d(a,k)
    \;\le\;
    -\;\frac{a k}{d}\Big(1-\frac{C_1}{d}\Big).
    \]
    In particular, for all $d\ge 2C_1$,
    \[
    -\;\frac{a k}{d} \;\le\; F_d(a,k) \;\le\; -\;\frac{1}{2}\,\frac{a k}{d}.
    \]
    Renaming absolute constants completes the proof.
    \end{proof}

\section{Proof of \Cref{fact:BM-swap}}
\label{appendix:pfBM}

\begin{proof}
By definition of external regret for $\cR_i$, we have~\eqref{eq:BM-Reg-i} for
each $i\in[d]$.

Fix any $\hat P = (\hat p_1 \mid \dots \mid \hat p_d) \in \Phi$.
The cumulative swap regret with respect to this transformation is
\begin{align}
  \sum_{t=1}^T \bigl(\langle \ell_t, \pi_t\rangle - \langle \ell_t, \hat P \pi_t\rangle\bigr)
  &\underset{(i)}{=} \sum_{t=1}^T
     \bigl(\langle \ell_t, P_t\pi_t\rangle - \langle \ell_t, \hat P \pi_t\rangle\bigr)= \sum_{t=1}^T \left[
        \sum_{i=1}^d \pi_{t,i} \langle \ell_t, p_{i,t}\rangle
        - \sum_{i=1}^d \pi_{t,i} \langle \ell_t, \hat p_i\rangle
     \right] \nonumber
     \\&= \sum_{t=1}^T \sum_{i=1}^d
        \bigl( \langle x_{i,t}, p_{i,t}\rangle - \langle x_{i,t}, \hat p_i\rangle \bigr) = \sum_{i=1}^d \sum_{t=1}^T
        \bigl( \langle x_{i,t}, p_{i,t}\rangle - \langle x_{i,t}, \hat p_i\rangle \bigr) \label{eq:Swap-regret-reordering} \\
  &\underset{(ii)}{\le} \sum_{i=1}^d \Reg_i((x_{i,t})_{t \in [T]}).\nonumber
\end{align}
Here (i) follows from the definition of $\pi_t$ and the fact that
$\pi_t = P_t\pi_t$, and (ii) follows from the definition of $\Reg_i$.
Since the identity holds for every $\hat P\in\Phi$, taking the maximum over
$\hat P$ gives \eqref{eq:BM-swap-decomposition}, and then
\[
  \SwapReg((\ell_t)_{t \in [T]})
  \le \sum_{i=1}^d \Reg_i((x_{i,t})_{t \in [T]}),
\]
which completes the proof.
\end{proof}

\section{Proof of \Cref{thm:new-architecture-stationary}}
\label{appendix:pfthm3}
\stationarynew*
We define partial sums $S_{i,t}:=\sum_{s=1}^t x_{i,s}\in\R^d$ with $S_{i,0}:=\pmb{0}_d$ for each head $i\in[d]$ and time $t\in[T]$. For each head $j\in[d]$ and time $t\in[T]$ we write $z_{j,t}$ for the pre-$\softmax$ logits and
\[
p_{j,t}=\operatorname{\softmax}(z_{j,t})\in\Delta([d])
\]
for the corresponding mixed action (the $j$-th column of the transition
matrix at time $t$).
Throughout we reuse the $\softmax$ Jacobian notation from \Cref{appendix:pfthm1}:
for $\pi=\operatorname{\softmax}(z)$, $J_{\sm}(z):=\partial\,\operatorname{\softmax}(z)/\partial z=\Diag(\pi)-\pi\pi^\intercal\in\R^{d\times d}$, so $J_{\sm}(z)^\intercal\pmb{1}_d=\pmb{0}_d$ and $\pmb{1}_d^\intercal J_{\sm}(z)=\pmb{0}_d^\intercal$.

\begin{proof}[{\color{blue}\textbf{Proof of the existence of the stationary point.}}]
We now impose the special parameter configuration from the statement:
for each $j\in[d]$ we write
\[
V^{(j)}=(V^{(j)}_1,\dots,V^{(j)}_d),\qquad V^{(j)}_r\in\R^{d\times d},
\]
and assume
\[
V^{(j)}_r=
\begin{cases}
-k I_{d\times d},& r=j,\\[2pt]
\pmb{O}_{d\times d},& r\neq j,
\end{cases}
\qquad
a^{(j)}=\pmb{0}_{d^2},\qquad
v_c^{(j)}=v\pmb{1}_d,
\]
for some $k>0$.
We also split $X_s$ into $d$ blocks of length $d$:
\[
X_s=\bigl(x_{1,s}^\intercal,\dots,x_{d,s}^\intercal\bigr)^\intercal,\qquad
x_{i,s}\in\R^d.
\]

At this configuration the quadratic term $X_s X_s^\intercal a^{(j)}$ vanishes
and $a^{(j)\intercal}X_s=0$, so
\[
z_{j,t+1}
=\sum_{s=1}^t \big(V^{(j)}X_s+v\pmb{1}_d\big)
=\sum_{s=1}^t\Big(\sum_{r=1}^d V^{(j)}_r x_{r,s} + v\pmb{1}_d\Big).
\]
Using the special form of $V^{(j)}$,
\[
z_{j,t+1}
= \sum_{s=1}^t \big(-k x_{j,s} + v\pmb{1}_d\big)
= -k S_{j,t} + t\,v\pmb{1}_d.
\]

Re-indexing time so that policies at round $t$ use the sum up to $t-1$
(as in \Cref{thm:stationary}), we obtain the simpler representation
\begin{equation}\label{eq:pj-single-head-form}
p_{j,t}
  = \operatorname{\softmax}\bigl(-k S_{j,t-1} + (t-1)v\pmb{1}_d\bigr),
  \qquad j\in[d],\ t\in[T].
\end{equation}
Thus, for each fixed $j$, the mapping
$(x_{j,1},\dots,x_{j,T})\mapsto (p_{j,t})_{t\le T}$ is exactly the
single-layer linear attention model from \Cref{thm:stationary}, with
$\ell_t$ there replaced by $x_{j,t}$ here. For a fixed realization of $(x_{i,t})$ the per-head regret $R_i$ satisfies
\[
R_i
= \sum_{t=1}^T \langle x_{i,t},p_{i,t}\rangle
  - \min_{q\in\Delta([d])}
     \Big\langle \sum_{t=1}^T x_{i,t}, q\Big\rangle
= \sum_{t=1}^T \langle x_{i,t},p_{i,t}\rangle - \min_{j\in[d]} S_{i,T,j}.
\]
By \eqref{eq:pj-single-head-form}, each $R_i$ has the same law as the
single-head regret $R_T(k)$ from \Cref{thm:stationary} (with
$\ell_t$ replaced by $x_{i,t}$). The inner minimum in $\mathcal L$ decouples across columns:
\begin{align*}
\min_{\hat P\in(\Delta([d]))^d}
   \sum_{t=1}^T\sum_{i=1}^d \langle x_{i,t},\hat P_i-p_{i,t}\rangle
&= \sum_{i=1}^d \min_{\hat p_i\in\Delta([d])}
     \sum_{t=1}^T \langle x_{i,t},\hat p_i - p_{i,t}\rangle\\
&= \sum_{i=1}^d\Big(
     \min_{\hat p_i}\sum_{t=1}^T \langle x_{i,t},\hat p_i\rangle
     - \sum_{t=1}^T \langle x_{i,t},p_{i,t}\rangle
   \Big)\\
&= \sum_{i=1}^d\Big(
     \min_{q\in\Delta([d])}\Big\langle S_{i,T},q\Big\rangle
     - \sum_{t=1}^T \langle x_{i,t},p_{i,t}\rangle
   \Big)\\
&= -\sum_{i=1}^d R_i.
\end{align*}
Hence we may write
\[
R_{\text{tot}}:=\SwapReg((x_{i,t})_{i\in[d],t\in[T]})
=\sum_{i=1}^d R_i,\qquad
\mathcal L((V^{(j)},a^{(j)},v_c^{(j)})_{j\in[d]})
=\mathbb E[R_{\text{tot}}^2].
\]
The $d$ random variables $R_1,\dots,R_d$ are independent and have the
same distribution as $R_T(k)$.

We now compute the gradient of $R_{\text{tot}}^2$ with respect to the logits
$z_{j,t}$. For each fixed $j$ the per-head regret $R_i$ with $i\neq j$ does not
depend on $p_{j,\cdot}$, hence
\[
\frac{\partial R_{\text{tot}}}{\partial p_{j,t}}
= \frac{\partial R_j}{\partial p_{j,t}}
= x_{j,t}.
\] Using $p_{j,t}=\softmax(z_{j,t})$ and the chain rule,
\[
\frac{\partial R_{\text{tot}}}{\partial z_{j,t}}
= J_{\sm}(z_{j,t})^\intercal x_{j,t},\qquad
\frac{\partial R_{\text{tot}}^2}{\partial z_{j,t}}
=2R_{\text{tot}}\,J_{\sm}(z_{j,t})^\intercal x_{j,t}.
\]
We abbreviate
\[
\Delta_{j,t}
:=\frac{\partial R_{\text{tot}}^2}{\partial z_{j,t}}
=2R_{\text{tot}}\,J_{\sm}(z_{j,t})^\intercal x_{j,t}\in\R^d.
\]

By dominated convergence ($\softmax$ is smooth and all Gaussian moments are finite), we may exchange gradient and
expectation. Thus for any perturbations $(\delta z_{j,t})_{j\in[d],t\in[T]}$,
\begin{equation}\label{eq:deltaL-swap}
\delta\mathcal L
= \mathbb E\Big[\sum_{j=1}^d\sum_{t=1}^T
   \langle \Delta_{j,t},\delta z_{j,t}\rangle\Big].
\end{equation}

We now express $\delta z_{j,t}$ in terms of perturbations of the
parameters of head $j$. For notational brevity we suppress the
superscript $(j)$ in this step and write $(V,a,v_c)$ for
$(V^{(j)},a^{(j)},v_c^{(j)})$.

From the definition of $z_{j,t+1}$ we obtain, for arbitrary perturbations
$(\delta V,\delta a,\delta v_c)$,
\begin{align*}
\delta z_{t+1}
&= \sum_{s=1}^t
   \Big(\delta V X_s X_s^\intercal a + V X_s X_s^\intercal \delta a\Big)
 + \sum_{s=1}^t
   \Big(\delta V X_s + v_c\,\delta a^\intercal X_s + \delta v_c\,a^\intercal X_s\Big)
 + \sum_{s=1}^t \delta v_c.
\end{align*}
At the special configuration $a=\pmb{0}_{d^2}$ this reduces to
\[
\delta z_{t+1}
= \sum_{s=1}^t \delta V X_s
 + \sum_{s=1}^t v_c\,\delta a^\intercal X_s
 + t\,\delta v_c.
\]
Restoring the head index $j$ and re-indexing time so that $t$ runs from
$1$ to $T$ (i.e., replacing $t+1$ by $t$), we have
\begin{equation}\label{eq:dz-decompose}
\delta z_{j,t}
= \underbrace{\sum_{s=1}^{t-1} \delta V^{(j)} X_s}_{(\ast_V)}
 +\underbrace{\sum_{s=1}^{t-1} v_c^{(j)}\,\delta a^{(j)\intercal} X_s}_{(\ast_a)}
 +\underbrace{(t-1)\delta v_c^{(j)}}_{(\ast_v)}.
\end{equation}

We now treat separately the three types of perturbations
$\delta v_c^{(j)}$, $\delta a^{(j)}$, and $\delta V^{(j)}$.

\medskip
\paragraph{(i) The $v_c^{(j)}$-direction (shift invariance).}

Using $(\ast_v)$ in \eqref{eq:dz-decompose},
\[
\delta z_{j,t}^{(v)} = (t-1)\delta v_c^{(j)},
\]
so \eqref{eq:deltaL-swap} gives
\[
\delta\mathcal L^{(v)}
= \sum_{j=1}^d\sum_{t=1}^T (t-1)\,\mathbb E\big[
   \langle \Delta_{j,t},\delta v_c^{(j)}\rangle\big].
\]
Write $\delta v_c^{(j)}=\delta v^{(j)}\pmb{1}_d$. Since
$\pmb{1}_d^\intercal J_{\sm}(z_{j,t})^\intercal=\pmb{0}_d^\intercal$ for all
$z_{j,t}$, we have
\[
\pmb{1}_d^\intercal \Delta_{j,t}
=2F\,\pmb{1}_d^\intercal J_{\sm}(z_{j,t})^\intercal x_{j,t}=0
\]
pointwise, and therefore each term in the sum vanishes. Hence
$\delta\mathcal L^{(v)}=0$ for all perturbations $(\delta v^{(j)})_{j\in[d]}$, and
\[
\nabla_{v_c^{(j)}}\mathcal L\Big|_{\text{conf.}}=\pmb{0}_d,\qquad j\in[d].
\]

\medskip
\paragraph{(ii) The $a^{(j)}$-direction at $a^{(j)}=0$.}

From $(\ast_a)$ in \eqref{eq:dz-decompose},
\[
\delta z_{j,t}^{(a)}
= \sum_{s=1}^{t-1} v_c^{(j)}\,\delta a^{(j)\intercal} X_s
= v\,\pmb{1}_d \sum_{s=1}^{t-1} \langle X_s,\delta a^{(j)}\rangle.
\]
Plugging into \eqref{eq:deltaL-swap},
\begin{align*}
\delta\mathcal L^{(a)}
&= \sum_{j=1}^d \mathbb E\Big[
    \sum_{t=1}^T \big\langle \Delta_{j,t},
        v\,\pmb{1}_d \sum_{s=1}^{t-1}\langle X_s,\delta a^{(j)}\rangle
      \big\rangle \Big]\\
&= v\sum_{j=1}^d \mathbb E\Big[
    \sum_{t=1}^T \big(\pmb{1}_d^\intercal \Delta_{j,t}\big)
        \sum_{s=1}^{t-1}\langle X_s,\delta a^{(j)}\rangle\Big].
\end{align*}
But we already observed that $\pmb{1}_d^\intercal \Delta_{j,t}=0$ for all
$(j,t)$, hence $\delta\mathcal L^{(a)}=0$ for every
$(\delta a^{(j)})_{j\in[d]}$ and therefore
\[
\nabla_{a^{(j)}}\mathcal L\Big|_{\text{conf.}}=\pmb{0}_{d^2},
\qquad j\in[d].
\]

\medskip
\paragraph{(iii) The $V^{(j)}$-direction at $V^{(j)}_j=-kI_{d\times d}$.}

It remains to analyse perturbations of the matrices $V^{(j)}$.
Recall that each $V^{(j)}$ is partitioned into $d$ blocks
$V^{(j)}_r\in\R^{d\times d}$, and similarly we write
$\delta V^{(j)}=(\delta V^{(j)}_1,\dots,\delta V^{(j)}_d)$.
From $(\ast_V)$ in \eqref{eq:dz-decompose} we get
\[
\delta z_{j,t}^{(V)}
= \sum_{s=1}^{t-1}\delta V^{(j)}X_s
= \sum_{r=1}^d \delta V^{(j)}_r S_{r,t-1},
\]
since $X_s=(x_{1,s}^\intercal,\dots,x_{d,s}^\intercal)^\intercal$.

Substituting into \eqref{eq:deltaL-swap} and using the Frobenius inner
product $\langle A,B\rangle_F=\operatorname{tr}(A^\intercal B)$,
\begin{align*}
\delta\mathcal L^{(V)}
&= \sum_{j=1}^d \mathbb E\Big[
    \sum_{t=1}^T \Big\langle \Delta_{j,t},
      \sum_{r=1}^d \delta V^{(j)}_r S_{r,t-1}\Big\rangle\Big]\\
&= \sum_{j=1}^d \sum_{r=1}^d
    \Big\langle
      \underbrace{\mathbb E\Big[\sum_{t=1}^T
        \Delta_{j,t} S_{r,t-1}^\intercal\Big]}_{=:G_{j,r}(k)}\ ,\
      \delta V^{(j)}_r\Big\rangle_F.
\end{align*}
Thus
\[
\nabla_{V^{(j)}_r}\mathcal L\Big|_{\text{conf.}}=G_{j,r}(k)\in\R^{d\times d}.
\]

We now exploit the symmetries of the Gaussian model.

\smallskip
\emph{Case $r\neq j$.}
At the special configuration $V^{(j)}_r=\pmb{O}_{d\times d}$ for $r\neq j$,
the logits $z_{j,t}$ and hence $\Delta_{j,t}$ depend only on the block
process $(x_{j,s})_{s\le T}$, while the random walk $(S_{r,t})_{t\le T}$
is built from $(x_{r,s})_{s\le T}$ and is independent of
$(x_{j,s})_{s\le T}$. Therefore
\[
G_{j,r}(k)
= \mathbb E\Big[\sum_{t=1}^T \Delta_{j,t}\Big]\,
  \mathbb E\big[S_{r,t-1}^\intercal\big]
= \pmb{0}_{d\times d},
\]
because $\mathbb E[S_{r,t-1}]=\pmb{0}_d$ for all $r,t$.

\smallskip
\emph{Case $r=j$.}
Fix $j\in[d]$ and consider the action of coordinate permutations on
the action coordinates of head $j$: for a permutation matrix $P$,
\[
x_{j,t}\mapsto Px_{j,t},\quad
S_{j,t-1}\mapsto PS_{j,t-1}.
\]
By \eqref{eq:pj-single-head-form}, $V^{(j)}_j=-kI_{d\times d}$, $a^{(j)}=0$ and
$v_c^{(j)}=v\pmb{1}_d$, and by permutation-equivariance of $\softmax$,
we also have
\[
z_{j,t}\mapsto Pz_{j,t},\quad
p_{j,t}\mapsto Pp_{j,t},\quad
\Delta_{j,t}\mapsto P\Delta_{j,t}.
\]
Since the joint law of $(x_{j,t})_{t\le T}$ is invariant under coordinate
permutations and independent of the other heads, the pair
$(\Delta_{j,t},S_{j,t-1})_{t\le T}$ has the same distribution as
$(P\Delta_{j,t},PS_{j,t-1})_{t\le T}$. Consequently,
\[
G_{j,j}(k)
= \mathbb E\Big[\sum_{t=1}^T \Delta_{j,t} S_{j,t-1}^\intercal\Big]
= \mathbb E\Big[\sum_{t=1}^T (P\Delta_{j,t})(PS_{j,t-1})^\intercal\Big]
= P G_{j,j}(k) P^\intercal
\]
for all permutation matrices $P$. By \Cref{lem:permutation-commutant},
there are scalars $\alpha(k),\beta(k)\in\R$ such that
\[
G_{j,j}(k)=\alpha(k)I_{d\times d}
+\beta(k)\pmb{1}_d\pmb{1}_d^\intercal.
\]
Moreover, $\pmb{1}_d^\intercal\Delta_{j,t}=0$ for every $j,t$ because
$J_{\sm}(z_{j,t})^\intercal\pmb{1}_d=\pmb{0}_d$, so
$\pmb{1}_d^\intercal G_{j,j}(k)=\pmb{0}_d^\intercal$. Hence
\[
G_{j,j}(k)=\alpha(k)\Pi_0,\qquad
\Pi_0:=I_{d\times d}
-\frac{1}{d}\pmb{1}_d\pmb{1}_d^\intercal.
\]
The scalar is the same for every $j$ by exchangeability of the heads and
blocks at the stated configuration.

Combining the two cases, we have shown that at the stated configuration
\[
\nabla_{V^{(j)}_r}\mathcal L
=
\begin{cases}
\alpha(k)\Pi_0,& r=j,\\[2pt]
\pmb{O}_{d\times d},& r\neq j,
\end{cases}
\qquad j\in[d].
\]
Together with the $v_c^{(j)}$- and $a^{(j)}$-direction results above, this implies that, for every fixed $k$, the only
possibly non-zero component of the gradient of $\mathcal L$ at the
special configuration lies in the one-dimensional direction in which
each $V^{(j)}_j$ is scaled by the zero-sum projector $\Pi_0$.
\end{proof}

Now, we show that the stationary point is a scalar $k^\star$ such that $\cL'(k^\star)=0$. 

\begin{proof}[{\color{blue}\textbf{Proof of the scale of the stationary point.}}]

Define the one-dimensional function
\[
\Phi(k):=\mathcal L\bigl((V^{(j)},a^{(j)},v_c^{(j)})_{j\in[d]}\bigr)
\]
when, for each $j\in[d]$,
\[
V^{(j)}_r=
\begin{cases}
-kI_{d\times d},& r=j,\\
\pmb{O}_{d\times d},& r\neq j,
\end{cases}\qquad
a^{(j)}=\pmb{0}_{d^2},\quad v_c^{(j)}=v\pmb{1}_d.
\]

By the chain rule,
\[
\Phi'(k)
= \sum_{j=1}^d \Big\langle
    \nabla_{V^{(j)}}\mathcal L,\,
    \frac{\partial V^{(j)}}{\partial k}
  \Big\rangle_F
= \sum_{j=1}^d \Big\langle
    \alpha(k)\Pi_0,\,
   -I_{d\times d}\Big\rangle_F
= -d(d-1)\alpha(k),
\]
since $\partial V^{(j)}_j/\partial k=-I_{d\times d}$ and
$\partial V^{(j)}_r/\partial k=\pmb{O}_{d\times d}$ for $r\neq j$.
In particular,
\[
\alpha(k)=0\quad\Longleftrightarrow\quad \Phi'(k)=0.
\]

Let $R(k)$ denote a random variable with the law of $R_T(k)$ for the
single-head model, and note that $R_1(k,X),\dots,R_d(k,X)$ are
independent copies of $R(k)$ for every fixed $k$. Therefore
\begin{align}
\Phi(k)
= \EE\bigl[R_{\mathrm{tot}}(k,X)^2\bigr]
&= \EE\Big[\Big(\sum_{i=1}^d R_i(k,X)\Big)^2\Big]\notag\\
&= \sum_{i=1}^d \EE[R_i(k,X)^2]
   + 2\sum_{1\le i<j\le d} \EE[R_i(k,X)R_j(k,X)]\notag\\
&= d\,\EE[R(k)^2]
   + d(d-1)\bigl(\EE[R(k)]\bigr)^2.
\label{eq:Phi-decomposition}
\end{align}
\begin{proposition}
  \label{prop:R-independent-of-k}
The expectation of the single-head regret is independent of $k$, i.e., $\EE[R(k)]$ is independent of $k$.
\end{proposition}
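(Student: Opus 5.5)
The plan is to compute $\EE[R(k)]$ directly from its definition, $R(k)=\sum_{t=1}^T\langle \ell_t,\pi_t(k)\rangle-\min_{j\in[d]}S_{T,j}$, using the martingale structure of the learner's loss. The policy $\pi_t(k)=\softmax(-kS_{t-1})$ is $\mathcal F_{t-1}$-measurable, where $\mathcal F_{t-1}=\sigma(\ell_1,\dots,\ell_{t-1})$. Here $\ell_t$ stands for the per-head block $x_{i,t}$, which is i.i.d.\ $\mathcal N(\pmb{0}_d,I_{d\times d})$ under the Gaussian block input of \Cref{thm:new-architecture-stationary}. Moreover, $\ell_t$ is independent of $\mathcal F_{t-1}$ and has mean zero. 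The key step is therefore the tower property:
\[
\EE\bigl[\langle \ell_t,\pi_t(k)\rangle\bigr]
=\EE\Bigl[\bigl\langle \EE[\ell_t\mid\mathcal F_{t-1}],\,\pi_t(k)\bigr\rangle\Bigr]=0
\qquad\text{for every }t\in[T]\text{ and every }k\in\R.
\]
This step is justified because $\|\pi_t(k)\|_1=1$, so $|\langle \ell_t,\pi_t(k)\rangle|\le\|\ell_t\|_\infty$, which is integrable. The same argument shows the bias $v$ plays no role, since $\softmax$ is shift-invariant and $\pi_t$ does not depend on $v$.

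Summing over $t$ then gives
\[
\EE[R(k)]=-\EE\Bigl[\min_{j\in[d]}S_{T,j}\Bigr]=-\sqrt{T}\,\EE\Bigl[\min_{j\in[d]}G_j\Bigr],
\qquad G_j\stackrel{\text{i.i.d.}}{\sim}\mathcal N(0,1).
\]
Here $S_T\sim\mathcal N(\pmb{0}_d,TI_{d\times d})$, and the comparator term does not involve the model at all. Hence $\EE[R(k)]$ depends only on $(d,T)$ and not on $k$. As a by-product, \Cref{lem:regression-extreme} and \Cref{lem:regression-extreme-lb} show that this constant is $\Theta(\sqrt{T\log d})$.

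I do not expect a real obstacle. The only point that needs care is the integrability check that legitimizes exchanging expectation and conditioning, and it follows from the bounded policy together with Gaussian moments, as in \Cref{lem:gauss-max-final}. Once the proposition is in hand, the second term $d(d-1)(\EE[R(k)])^2$ in \eqref{eq:Phi-decomposition} is constant in $k$. It follows that $\Phi'(k)=d\,\mathcal L'(k)$, where $\mathcal L(k)=\EE[R_T(k)^2]$ is the single-head objective. The sign analysis of \Cref{thm:smallk} and \Cref{thm:positivity} then transfers verbatim and yields a root $k^\star$ at the stated scale.
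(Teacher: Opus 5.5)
Your proposal is correct and matches the paper's proof. The comparator $\min_{q\in\Delta([d])}\langle S_T,q\rangle$ does not depend on $k$, and the tower property, using that $\pi_t(k)$ is $\mathcal F_{t-1}$-measurable and $\EE[\ell_t\mid\mathcal F_{t-1}]=\pmb{0}_d$, makes every learner term vanish, so $\EE[R(k)]=-\EE[\min_j S_{T,j}]$; your integrability check via $|\langle \ell_t,\pi_t(k)\rangle|\le\|\ell_t\|_\infty$ and your identification of this constant as $-\sqrt{T}\,\EE[\min_j G_j]$ are small additions the paper leaves implicit.
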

\begin{proof}
 Write the single-head process as
$\{x_t\}_{t=1}^\intercal$ with $x_t\sim\cN(\pmb{0}_d, I_{d\times d})$ i.i.d., and denote
$S_t:=\sum_{s=1}^t x_s$ and
\[
\pi_t(k):=\softmax(-k S_{t-1} + (t-1)v\pmb{1}_d),
\]
so that
\[
R(k) = \sum_{t=1}^T \langle x_t,\pi_t(k)\rangle
       - \min_{q\in\Delta([d])}\Big\langle \sum_{t=1}^T x_t,\ q\Big\rangle.
\]

First, \[
C(X):=\min_{q\in\Delta([d])}\Big\langle \sum_{t=1}^T x_t,\ q\Big\rangle
\]
depends only on the losses $(x_t)_{t\le T}$ and is completely
independent of the parameter $k$. Moreover, note that $\pi_t(k)$ is measurable with respect
to the $\sigma$-algebra generated by $(x_1,\dots,x_{t-1})$ and does not
depend on $x_t$. Using tower property and the fact that $x_t$ is
independent of $(x_s)_{s<t}$ and has mean zero, we obtain
\begin{align*}
\EE\bigl[\langle x_t,\pi_t(k)\rangle\bigr]
&= \EE\Big[\EE\bigl[\langle x_t,\pi_t(k)\rangle
     \,\big|\, x_1,\dots,x_{t-1}\bigr]\Big]\\
&= \EE\Big[\big\langle \EE[x_t\mid x_1,\dots,x_{t-1}],\ \pi_t(k)\big\rangle\Big]
 = \EE\big[\langle 0,\pi_t(k)\rangle\big] = 0.
\end{align*}
Summing over $t$ gives
\[
\EE\Big[\sum_{t=1}^T \langle x_t,\pi_t(k)\rangle\Big] = 0
\quad\text{for all }k\in\R.
\]

Therefore
\[
\EE[R(k)]
= 0 - \EE[C(X)]
= -\,\EE\Big[\min_{q\in\Delta([d])}
              \Big\langle \sum_{t=1}^T x_t,\ q\Big\rangle\Big],
\]
which is a finite constant independent of $k$.
\end{proof}

From \eqref{eq:Phi-decomposition} and \Cref{prop:R-independent-of-k} we
obtain
\begin{align*}
\Phi'(k)
&= d\,\frac{d}{dk}\EE[R(k)^2]
 + d(d-1)\,\frac{d}{dk}\bigl(\EE[R(k)]^2\bigr)\\
&= d\,\frac{d}{dk}\EE[R(k)^2]
 + d(d-1)\cdot 2\,\EE[R(k)]\,\frac{d}{dk}\EE[R(k)]\\
&= d\,\frac{d}{dk}\EE[R(k)^2]
\quad\text{(since $\frac{d}{dk}\EE[R(k)]=0$)}.
\end{align*}
Recalling the external-regret objective
\[
L_{\mathrm{ext}}(k) := \EE[R(k)^2],
\]
this identity can be succinctly written as
\begin{equation}\label{eq:Phi-prime-vs-Lext}
\Phi'(k) = d\,L_{\mathrm{ext}}'(k)\quad\text{for all }k\in\R.
\end{equation}

\medskip
\textbf{Sign of $\Phi'(k)$ and existence of $k^\star$.}

The single-head analysis in \Cref{thm:smallk,thm:positivity} shows that
there exist absolute constants $c_1,c_2>0$ (independent of $d,T$) such
that: For all sufficiently large $T,d$ and all
  \(
    0 < k \le c_1 / \sqrt{Td},
  \)
  one has
  \[
  L_{\mathrm{ext}}'(k) < 0.
  \]
  For all $T,d\ge 2$ and all
  \(
    k \ge c_2 / \sqrt{T}\,\mathrm{polylog}(d),
  \)
  one has
  \[
  L_{\mathrm{ext}}'(k) > 0.
  \]
Combining these with \eqref{eq:Phi-prime-vs-Lext}, we obtain
\[
\Phi'(k) < 0
\quad\text{for }k\le c_1/\sqrt{Td},
\qquad
\Phi'(k) > 0
\quad\text{for }k\ge c_2/\sqrt{T}\,\mathrm{polylog}(d).
\]

The function $k\mapsto \Phi(k)$ is smooth ($\softmax$ and the architecture
are smooth and all Gaussian moments are finite), hence $\Phi'$ is
continuous. By the intermediate value theorem there exists
\[
k^\star\in\Bigl(\tilde\Theta\Bigl(\tfrac{1}{\sqrt{Td}}\Bigr),
               \tilde\Theta\Bigl(\tfrac{1}{\sqrt{T}}\Bigr)\Bigr)
\]
such that $\Phi'(k^\star)=0$.
\end{proof}

\section{Additional Background}
\label{appendix:background}

\subsection{Literature Review}
\paragraph{Transformers \& In-context-learning.}
LLMs are now predominantly built upon the self-attention architecture \citep{vaswani2017attention}. A central reason for their success is their remarkable ability to perform \emph{in-context learning} (ICL): Transformers can construct new predictors directly from sequences of labeled examples provided in the input, without any parameter updates. This mechanism enables powerful \emph{few-shot learning} capabilities \citep{brown2020language,garg2022can,min2022rethinking}, which in turn have motivated a rapidly growing theoretical literature on ICL. For certain training losses, the \emph{minimizer} of a single-layer Transformer is equivalent to performing one step of gradient descent for linear regression \citep{ahn2023transformers,zhang2023trained,mahankali2023one}. For attention models with non-linear operators, prior work has provided provable guarantees for ICL---either through training-dynamics analyses or partial characterizations of stationary points. For example, \citet{ahn2023transformers} studied transformers with a ReLU operator, and \citet{cheng2024transformersimplementfunctionalgradient} analyzed the $\softmax$ operator from a functional-gradient perspective. In addition, \citet{chen2024trainingdynamicsmultiheadsoftmax} examined the training dynamics of multi-head Softmax attention in the context of ICL. However, none of these works characterizes the exact configuration of the stationary point of a transformer with a non-linear operator. \citet{cheng2024transformersimplementfunctionalgradient} derived a stationary-point result but did not identify the precise configuration corresponding to a specific (functional) gradient-descent stepsize; \citet{chen2024trainingdynamicsmultiheadsoftmax} provided a dynamical analysis for multi-head Softmax attention, but likewise did not interpret or characterize the stationary-point structure of transformers with non-linear operators. In contrast, our analysis \emph{explicitly characterizes} the stationary-point configuration of a Transformer equipped with a non-linear operator, thereby yielding a provable guarantee for the no-regret behavior induced by such architectures---an independently interesting theoretical contribution. More recently, ICL has also been generalized to decision-making settings. Works such as \citet{laskin2022context,lee2023supervised,lin2023transformers} demonstrated that supervised pre-training can endow Transformers with in-context reinforcement learning abilities, enabling them to solve stochastic bandits and Markov decision processes. In comparison, regret-loss training focuses on \emph{online learning} settings that may be arbitrary or even \emph{adversarial}, as well as \emph{game-theoretic} environments where agents interact strategically.

\subsection{Known Algorithms for No-Regret Learning}
\paragraph{Follow-the-Regularized-Leader (FTRL).}
FTRL generates the sequence
\[
    x_{t+1}
    \;=\;
    \argmin_{x\in\Pi} \left\{ \eta \sum_{s=1}^{t} \ell_s(x) + R(x) \right\},
\]
where $R$ is a strongly convex regularizer and $\eta>0$ is the learning rate. With the negative-entropy regularizer $R(x)=\sum_i x_i\log x_i$, FTRL reduces to the multiplicative-weights update (or exponential weights).

\paragraph{Smoothed Fictitious Play as Entropy-Regularized FTRL.}
Smoothed fictitious play is the update rule
\[
    x_{t+1}
    \;\propto\;
    \exp\!\left(
        -\eta \sum_{s=1}^{t} \ell_s
    \right),
\]
which is exactly FTRL with entropy regularization.
The smoothing (logit or softmax) arises from the convex conjugate of the negative entropy. If stepsize $\eta = \Theta(1/\sqrt{T})$, smoothed fictitious play has a no-regret guarantee \citep{cesa2006prediction}. In this paper, we interchangeably use the terms \emph{smoothed fictitious play} and \emph{FTRL with entropy regularization}.

\section{Use of AI Assistance}
\label{appendix:ai-assistance}

The authors used OpenAI Codex to assist with code implementation, debugging,
refactoring, and the linguistic and stylistic editing of the manuscript. All
research questions, theoretical ideas, theorem statements, proofs,
experimental designs, validation procedures, interpretations, and conclusions
were developed and verified by the authors. In particular, the scientific
results reported in this paper are the authors' own work; Codex was used as a
coding and editing aid and not as a source of scientific claims. The authors
reviewed all AI-assisted material and take full responsibility for the content
of the paper and the accompanying code.

\end{document}